\documentclass{article}
\usepackage[scaled=1.0]{helvet}
\usepackage[margin=1in]{geometry}
\usepackage[utf8]{inputenc}
\usepackage[T1]{fontenc}
\usepackage{setspace} 
\usepackage{hyperref}
\usepackage{url}
\usepackage{booktabs}
\usepackage{amsfonts}
\usepackage{nicefrac}
\usepackage{microtype}
\usepackage[table]{xcolor}

\usepackage{amsmath} 
\usepackage{amsthm} 
\usepackage{bm} 
\usepackage{bbold} 
\usepackage[labelfont=bf]{caption} 
\usepackage{algpseudocode} 
\usepackage{algorithm} 
\usepackage{graphicx} 
\usepackage{float} 
\usepackage{todonotes} 
\usepackage[export]{adjustbox} 
\usepackage{wrapfig} 
\usepackage{natbib}
\usepackage{rotating}
\usepackage[normalem]{ulem} 

\usepackage{colortbl}
\usepackage{array}
\usepackage{xcolor}

\usepackage[section]{placeins}

\newcolumntype{C}{>{\centering\arraybackslash}m{2.6cm}}

\newcommand{\filledcell}{%
    \raisebox{-0.5ex}{\color{black}\rule{2.6cm}{2.5ex}}%
}

\algnewcommand{\LineComment}[1]{\State \(\triangleright\) \textit{#1}} 

\newtheorem{theorem}{Theorem}[section]
\newtheorem{proposition}[theorem]{Proposition}

\newtheorem{lemma}[theorem]{Lemma}

\title{\textbf{AutoHebb: Self-Modifying Neural Networks that Zero-Shot Infer and Continually Learn}}

\author{%
  Daniel Levine\textsuperscript{1}\thanks{Correspondence to:
    \texttt{daniel.levine@yale.edu},
    \texttt{david.vandijk@yale.edu},
    \texttt{kaykenneth@gmail.com}.},
  David van Dijk\textsuperscript{1}\footnotemark[1],
  Kenneth Kay\textsuperscript{2,3}\footnotemark[1]
  \\[0.5em]
  {\normalfont\small
    \textsuperscript{1}Yale University}\\
  {\normalfont\small
    \textsuperscript{2}Howard Hughes Medical Institute, UCSF}\\
  {\normalfont\small
    \textsuperscript{3}Department of Brain and Cognitive Sciences, University of Rochester}
}
\date{}

\begin{document}
\maketitle
\vspace{-20pt}
\begin{abstract}
\setlength{\parindent}{2.5em}
\noindent Humans and animals have a unique capacity to generalize and learn in novel circumstances. This capacity manifests in various ways, including "0-shot" inference (ZI) and continual learning (CL) — two distinctively powerful cognitive abilities. Despite this importance, efforts to model and explain ZI and CL are seriously limited in several ways. First, existing models typically lack components critical to ZI and CL in the brain, namely, strong inductive biases (for ZI) and autonomous control of learning (for CL). Second, models that incorporate brain-like components can be less tractable to analysis if model components are overly complex (lacking simplicity), or too restricted to narrow problem classes if model components are overly bespoke (lacking expressivity).

To address these limitations, we introduce autonomous Hebbian networks ("AutoHebb"), a class of online learning neural networks that realize strong inductive biases and autonomously controlled learning, and are composed of simple yet expressive feedforward architectures. Unlike conventional models that learn via externally controlled optimization (e.g. standard gradient descent), these networks contain auxiliary architecture that combines external feedback with internally generated activity to produce a global learning signal (neuromodulator) fed back to the network. This global learning signal in turn gates a biologically grounded local learning rule (3-factor Hebbian rule) that updates plastic weights within the network. Using meta-learning to search for appropriate fixed (non-plastic) weights, we find that AutoHebb networks (i) succeed on ZI tasks, (ii) exhibit a collection of emergent (untrained) ZI and CL abilities, including rapid knowledge assembly and length generalization, (iii) admit reverse-engineering into explicit learning-inference algorithms that are in turn amenable to theoretical analysis, and (iv) outperform competitive learning models on both ZI and CL tasks, including learning multiple 0-shot tasks. Lastly, performance comparisons to models closely related to AutoHebb yet lacking specific components of AutoHebb implied an essential role for three model components: meta-learning, Hebbian plasticity, and autonomous neuromodulatory control. These findings indicate that AutoHebb may be useful in hypothesis generation, analysis, and discovery of biologically grounded forms of learning and generalization.
\end{abstract}

\section{Introduction}
\setlength{\parindent}{2.5em}

Humans and animals routinely encounter novel circumstances across a lifetime. Novel circumstances present a double challenge: first, generalization, which entails drawing upon prior knowledge to respond appropriately; second, learning, which entails acquiring knowledge while retaining prior knowledge. Brains overcome both challenges  \cite{kudithipudi2022biological} — yet it remains an open and central question what components, processes, and principles are responsible. 

Among types of generalization and learning, two are distinctively powerful and potentially foundational, yet remain deficient and poorly understood, thereby motivating new approaches. The first is "0-shot" ability, or the ability to respond to a novel setting, problem, or input in the absence of direct training or lifetime examples: an extreme form of generalization that requires that some form of strong prior structure, i.e.\ prior knowledge and/or inductive biases, already be in place \cite{lake2017building,goyal2022inductive}. In particular, 0-shot ability can be understood as a response to combinatorial (compositional) novelty -- i.e.\ where input is composed of new combinations of familiar elements -- thereby entailing inference of unseen structure (relational, conceptual, or semantic) underlying novel combinations. Given this requirement, we here call this ability "0-shot inference" (ZI). 
    
The second is continual learning (CL), the ability to learn progressively from a non-stationary stream of input \cite{ParisiContinuallifelonglearning2019,wang2024comprehensivesurveycontinuallearning,van_de_Ven_2025}. Despite being natural to the brain, CL is a longstanding limitation of models, thus suggesting a critical role for brain-like components. Further, CL is now widely regarded to entail more than the traditional criterion of avoiding catastrophic forgetting, namely entailing also \textit{transfer}: the ability to use ongoing learning to enhance past knowledge and/or subsequent learning. Transfer, particularly in the "backward" direction \cite{lopez2017gradient}, implies that systems that realize CL must be able to revisit and revise prior knowledge -- put differently, prior knowledge and ongoing learning must somehow closely, or even intrinsically, interact.
    
	Past work points to the foundational importance of both ZI and CL. ZI relates closely to insight learning \cite{kohler2018mentality}, cognitive maps \cite{TolmanPSYCHOLOGICALREVIEWCOGNITIVE1948,behrens2018cognitive}, relational and semantic memory \cite{cohen1993memory,eichenbaum2004conditioning}, and schemas \cite{bartlett1995remembering,rumelhart2017schemata,rumelhart1986sequential}, each characterized by prior structural knowledge that resolves combinatorial novelty. Studies across animal species indicate these abilities are widespread and evolutionarily ancient \cite{giurfa2001concepts,bird2009insightful,avargues2013conceptual,dona2020charles,lazareva2017categories,weise2022paper}, implying a foundational role in natural cognition. Indeed, various successes in AI/ML also point to the importance of prior constraints \cite{mitchell1980need,goyal2022inductive,bengio2021deep}, i.e.\ strong or appropriate inductive biases, including structured inductive biases \cite{battaglia2018relationalinductivebiasesdeep}. However, whether and how such inductive biases are realized remains an open question, apart from special cases \cite{saxe2019mathematical,lippl2024amathematicaltheory,abbe2024generalization,geerts2025relational,lippl2025when}.
	
    Past study of CL implies that major components remain similarly unresolved. Remarkably, most approaches rely on components external to the model and/or manually provided by the researcher, in contrast to the brain. These components include external memory buffers (e.g.\  \cite{lopez2017gradient,rolnick2019experiencereplaycontinuallearning}), external architectural augmentation (e.g.\  \cite{rusu2022progressiveneuralnetworks}), externally cued task identities (e.g.\  \cite{van2022three}), and, most widely used, externally calculated gradients (e.g. \cite{kirkpatrick2017overcomingcatastrophic,zenke2017continuallearning,lopez2017gradient,farajtabar2019orthogonalgradientdescentcontinual}).
    
    Here we hypothesize that a major limitation of current models is their divergence from an essential characteristic of learning in the brain: that learning is controlled autonomously by the brain itself. In particular, by virtue of being \textit{self-generated}, the signals in the brain that govern learning can reflect pre-existing structure -- whether innate or learned previously within a lifetime (via self-organizing processes such as Hebbian (or local) plasticity \cite{fremaux2016neuromodulated,schultz2016dopamine,gyorgy2019brain}) -- relevant to learning and generalization. Crucially, learning processes that exploit this pre-existing structure can in principle surpass forms of learning that are agnostic ("oblivious") to this pre-existing structure, including those relying on external components (e.g., loss functions and conventional gradients) \cite{kirkpatrick2017overcomingcatastrophic,farajtabar2019orthogonalgradientdescentcontinual,rypesc2025gradientfreecontinuallearning}. It is worth noting that the physical identity, behavioral/cognitive meaning, and exact roles of self-generated learning signals in the brain (e.g.\ dopamine, acetylcholine, plateau potentials; reward prediction error, novelty, surprise) remain open research questions \cite{fremaux2016neuromodulated,gerstner2018eligibility,magee2020synaptic}. These observations highlight that autonomous control of learning remains a crucial yet open issue in CL.

	In this study, we introduce "AutoHebb", a class of feedforward neural network that learns from a continuous input stream (online) and realizes both strong inductive biases (for ZI) and autonomous control of learning (for CL). AutoHebb is a plastic network, a neural network with weights that are updated locally \cite{soltoggio2018born,miconi2018differentiable,najarro2020meta,tyulmankov2022meta}), and builds directly on recent work establishing autonomous control of learning in recurrent plastic networks \cite{miconi2018backpropamine,pmlr-v202-miconi23a,miconi2025neural}. To gain both simplicity and expressivity, AutoHebb introduces feedforward architectural modifications to make network operations more amenable to analysis (simplicity), while also expanding the range of applicable tasks (expressivity). On four ZI/CL tasks that each involve simple 0-shot inferential structure, we find that AutoHebb networks consistently outperform popular learning models, express emergent (untrained) learning-generalization behaviors, and admit reverse-engineering algorithms. These results indicate that AutoHebb has exceptional ZI/CL abilities among modern learning models, and may be more broadly useful in investigating additional forms of biologically grounded learning and generalization, including as models of learning in the brain (neuroscience and psychology) and in applied settings.

\begin{figure}
    \centering
    \includegraphics[width=1\linewidth]{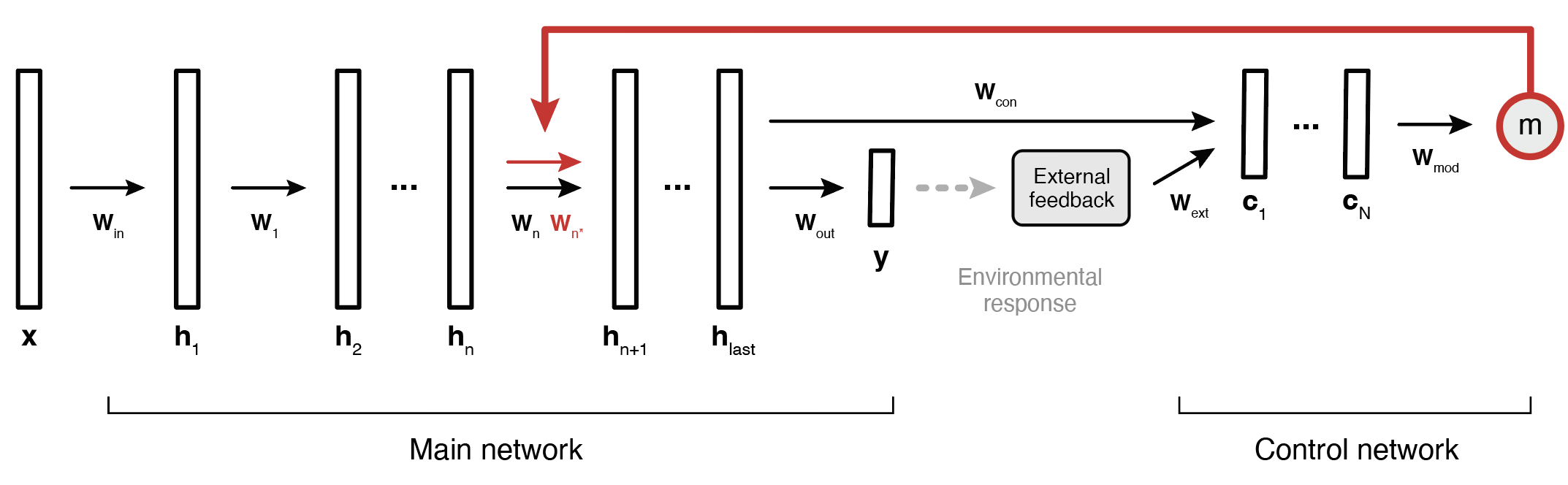}
    \caption{\textbf{AutoHebb architecture}. Network weights ($\mathbf{W}$) are either fixed (black) or plastic (red). The architecture consists of a "main" network (here an MLP with hidden layers $\mathbf{h}$) and an auxiliary "control" network (here an MLP with hidden layers $\mathbf{c}$) that receives two inputs: (1) internally generated activity from the main network (here from last hidden layer $\mathbf{h}_{\text{last}}$), and (2)  external input reflecting environmental (external) feedback given an output ($\mathbf{y}$) generated by the network. The control network generates a scalar-valued neuromodulatory signal ($m$, red circle) that gates changes to the plastic weights within network (here $\mathbf{W}_{n^*}$).  See Appendix \ref{appendixsec:autohebbtraining} for further details; additional network variants in Figure \ref{appendixfig:autohebb_two_arch}.}
    \label{fig:architecture}
\end{figure}

\section{AutoHebb}\label{sec:autohebb}

\subsection{Architecture}\label{subsec:autohebbarch}

AutoHebb is a class of simple yet expressive feedforward neural networks that implement autonomous control of learning. AutoHebb consists of two subnetworks, the \textbf{main} and \textbf{control} networks, and two types of weights, \textbf{fixed} and \textbf{plastic}. The role of the control network is to generate a (scalar-valued) global learning signal (`neuromodulator', akin to dopamine) whose function is to gate updates to the plastic weights. AutoHebb builds on recent work on recurrent plastic networks that have learning neuromodulation (\cite{miconi2018backpropamine}) by introducing three modifications: restriction to solely feedforward connectivity, isolation of neuromodulator generation to a separate subnetwork, and localization of plastic weights within the network (i.e.\ to specific layers). These modifications have four aims:\ (1) to enhance tractability to analysis, since each modification spatially restricts neural computation within the model architecture, (2) to expand the range of applicable tasks to those that do not require temporal/sequential inputs within trials, (3) to access network models that may better match experimental data (neural or behavioral) from the brain, (4) to access network models that may have better learning and generalization properties. In the present study, we mainly focus on aims (1) and (2).

The main network is a feedforward architecture with hidden layer activity given by
\begin{equation}\label{eq:autohebbhiddenstate}
\mathbf{h}_{i} = g((\mathbf{W}_i+\mathbf{A}_i \odot\mathbf{P}_i(t))\mathbf{h}_{i-1} + \mathbf{b}_i)\, ,
\end{equation}
where $g$ is an activation function ($\tanh$ unless stated otherwise), $\mathbf{W}_i$ and $\mathbf{b}_i$ are fixed weights and biases, and $\mathbf{A}_i \odot\mathbf{P}_i(t)$ is the Hadamard product of the plastic weights $\mathbf{P}_i(t)$ with fixed plasticity parameters $\mathbf{A}_i$, which govern the sign and magnitude of plastic weight changes. In this work, we consider binary response tasks with targets in $\{0,1\}$; we generate responses from a network with $L$ layers by applying a fixed linear readout to the last hidden state ($\mathbf{h}_L$), consisting of weights $\mathbf{W}_{\text{out}}$ and a bias $b_{\text{out}}$, and then a sigmoid $\sigma(x) = (1+e^{-x})^{-1}$:
\begin{equation}\label{eq:autohebbreadout}
y = \sigma(\mathbf{W}_{\text{out}}\mathbf{h}_{L} + b_{\text{out}})\, .
\end{equation}

The control network is a feedforward network consisting of fixed weights (though plastic weights are possible), whose inputs are $\mathbf{h}_L$ and an external feedback signal $\bm{\phi}$ and output is a scalar neuromodulator signal
\begin{equation}\label{eq:autohebbcontrol}
m =\mathcal{C}(\mathbf{h}_{L},\bm{\phi})\, .
\end{equation}
For all tasks in this study, $\bm{\phi}$ is $1$ for correct responses and $0$ for incorrect responses.

Note that $t$ corresponds to inputs presented on successive trials of a task; we refer to $t$ as "trials". Each trial models four steps: presentation of input (e.g.\ task stimuli) to the network (`input'), generation of an output by the main network (`output'), generation of the neuromodulator by the control network (`control'), and a plastic weight update (`update'), which is gated by the neuromodulator. More specifically, AutoHebb models the events that occur in a task trial after the learner has generated a response (an output): given the (trial-specific) response, the external environment provides an external feedback signal $\bm{\phi}$ (Fig.\ \ref{fig:architecture}) to the learner; in AutoHebb, this external feedback signal is an input to the control network, which then generates the scalar neuromodulator $m$. The neuromodulator gates updates to the network's plastic weights via a \textbf{3-factor Hebbian rule}, a local learning rule that captures current essential knowledge of plasticity in deep brain regions  \cite{fremaux2016neuromodulated,gerstner2018eligibility,magee2020synaptic,schmidgall2024brain}:
\begin{align}\label{eq:3factorrule}
\mathbf{P}_i(t+1) & = \lambda\mathbf{P}_i(t) + \eta m\mathbf{h}_{i}\mathbf{h}_{i-1}^\top\, .
\end{align}
where  $\eta$  is a fixed learning rate parameter and $\lambda \leq 1$ is a decay rate. The rule is "3-factor" as the weight update combines pre-synaptic activity ($\mathbf{h}_{\text{i-1}}$), post-synaptic activity ($\mathbf{h}_{\text{i}}$), and the global neuromodulatory signal ($m$); it is local because each synapse updates using only quantities available at that synapse, without requiring backpropagated error signals or access to the global loss. These various features of AutoHebb: feedforward connectivity, modularity, localizability, and grounding in a brain-derived learning rule together yield a model that is biologically relevant, potentially tractable to analysis, applicable across diverse tasks.

\subsection{Meta-learning}\label{subsec:metalearning}

\begin{wrapfigure}{r}{0.5\textwidth}
    \centering
    \vspace{-\intextsep}
    \includegraphics[width=1.0\linewidth]{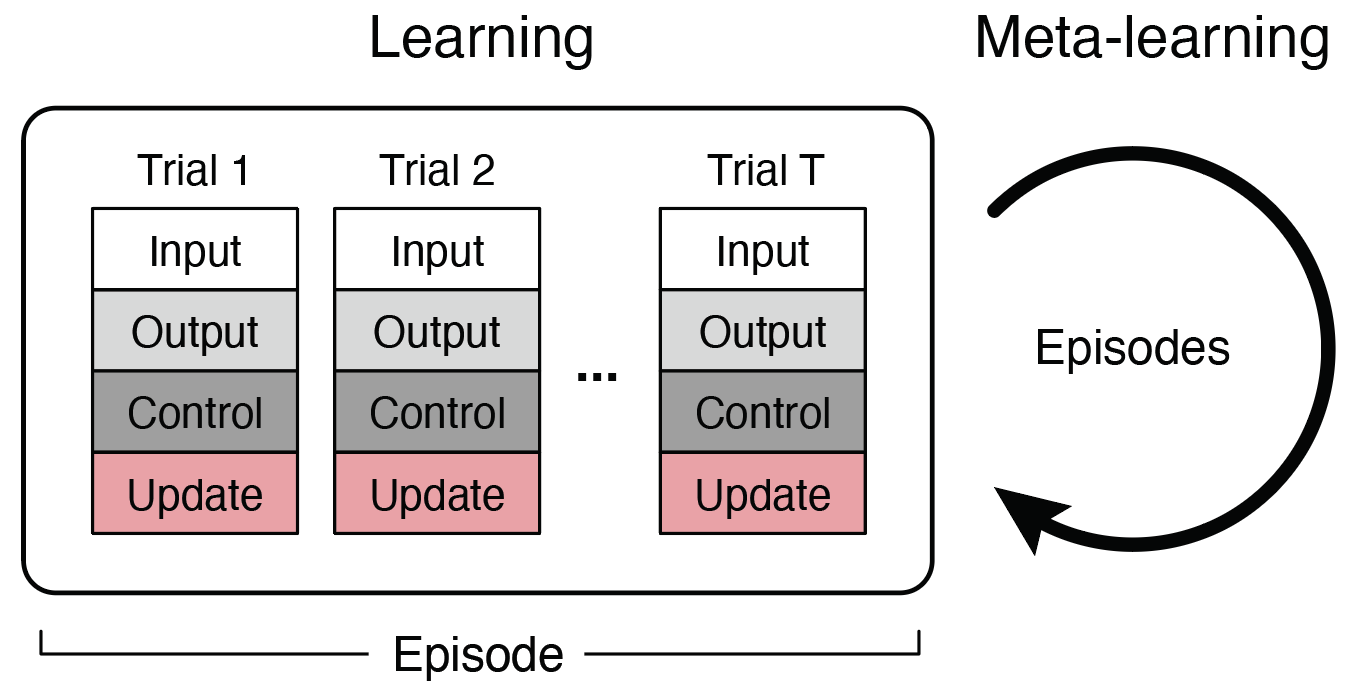}
    \caption{\textbf{Meta-learning in AutoHebb.} In AutoHebb networks, fixed (non-plastic) network weights were identified through meta-learning optimization, which involves two levels of adaptation: an "inner" and "outer" loop. In the inner loop, corresponding to learning, (3-factor Hebbian) plastic updates to the network occur across an episode, i.e.\ a sequence of trials (where each trial involves input, output, control, and update steps; see main text). At the start of each episode, fixed weights are frozen and plastic weights are set to 0. In the outer loop, corresponding to meta-learning (or meta-training), the network's fixed weights are updated via optimization based on performance within the episode and/or testing following the episode.}
    \label{fig:metalearning}
\end{wrapfigure}

In AutoHebb networks, all fixed (non-plastic) weights across both main and control subnetworks ($\mathbf{W}_i$, $\mathbf{b}_i$, $\mathbf{A}_i$, $\mathbf{W}_{\text{out}}$, $b_{\text{out}}$, and the fixed weights of control network $\mathcal{C}$), in addition to the learning rate parameter $\eta$ and decay term $\lambda$, are meta-learned via optimization. In neural networks, meta-learning is a common approach to finding fixed parameters that facilitate efficient learning (e.g.\ \cite{schmidhuber1987evolutionary, finn2017modelagnosticmetalearning,tyulmankov2022meta,NEURIPS2020_bdbd5ebf}). Note that these fixed parameters serve to implement inductive biases, prior knowledge, and other forms of prior structure -- innate or gained previously in a lifetime -- for a learning system; in this study, we focus on prior structure enabling ZI and CL, while noting that our approach (meta-learning in the AutoHebb architecture) can nonetheless be used to model other types of learning and adaptation, such as few-shot learning.

Typically, meta-learning is implemented as a nested loop with alternating "outer" and "inner" loop optimizations \cite{finn2017modelagnosticmetalearning,hospedales2020metalearningneuralnetworkssurvey,pmlr-v202-miconi23a}. With AutoHebb, the inner loop corresponds to 3-factor Hebbian learning rather than gradient descent; the outer loop could corresponds to any optimization algorithm, such as evolutionary algorithms (\cite{salimans2017evolutionstrategies}) or gradient descent (Figure \ref{fig:metalearning}, Algorithm \ref{alg:metalearning}).\ In this study, gradient descent with cross-entropy loss was used.

\begin{wrapfigure}{R}{0.5\textwidth}
\vspace{-2\intextsep}
\begin{minipage}{1.0\linewidth}
    \begin{algorithm}[H]
    \small
    \setstretch{1.0}
    \caption{Meta-learning in AutoHebb}
    \begin{algorithmic}[1]
    \Require Main network $f$, control network $\mathcal{C}$, and fixed parameters $\theta$, i.e.\ fixed network weights, fixed plasticity parameters, learning rate $\eta$, and decay $\lambda \in (0,1]$
    \Require Outer loop learning rate $\alpha$
    \Require $K$ meta-learning episodes
    \For{$k$ in $K$}
        \State $\mathbf{P} \leftarrow \mathbf{0}$
        \State Sample sequence of trials $\{(\mathbf{x}_t, \mathbf{y}_t)\}_{t=1}^T$
        \For{$t$ in $T$}
            \State $\mathbf{\hat{y}}_t, \mathbf{h}_L \leftarrow f_{\theta}(\mathbf{x}_t)$
            \State $\bm{\phi} \leftarrow \text{Feedback}(\mathbf{\hat{y}}_t)$
            \State $m \leftarrow \mathcal{C}_{\theta}(\mathbf{h}_L, \bm{\phi})$
            \State $\mathbf{P} \leftarrow \text{3FactorUpdate}(\mathbf{P}, m, \eta, \lambda)$
        \EndFor
        \State $\theta \leftarrow \theta - \displaystyle\frac{\alpha}{T}\sum_{t}\nabla_{\theta}\mathcal{L}(\mathbf{y}_t, \mathbf{\hat{y}}_t)$
    \EndFor
    \end{algorithmic}
    \label{alg:metalearning}
    \end{algorithm}
\end{minipage}
\vspace{-0.5\intextsep}
\end{wrapfigure}

\begin{figure}[!t]
    \centering
    \includegraphics[width=0.6\linewidth]{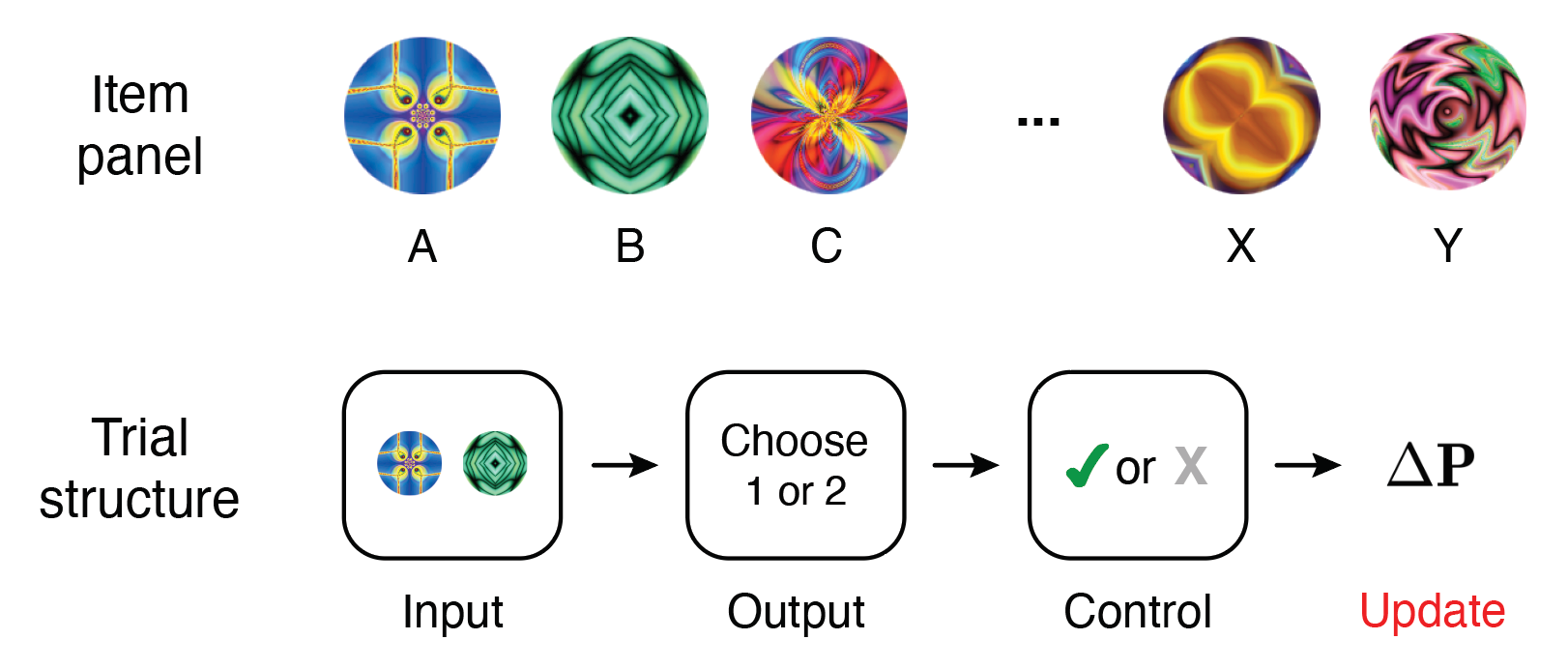}
    \caption{ \textbf{Task items and trial structure.} In the present study, a given task involves a set of items (item panel) that correspond to arbitrary sensory stimuli, such as fractal images. Trials consist of four steps: (1) presentation of two items (input), (2) generation of a binary output (e.g. choose item 1 vs. 2) by the model or participant (output), (3) generation of a learning control signal by the model or participant upon receiving external feedback from the environment (e.g. cash reward) (control), and (4) a learning or plasticity update (update), as governed by the control signal.}
    \label{fig:zi_task_schematic}
\end{figure}

\section{Tasks}\label{sec:tasks}

\subsection{Transitive inference (0-shot ranking)}\label{subsec:transitiveinference}

A classic 0-shot ability seen in humans and a wide range of animal species is transitive inference (TI), the ability to infer serial order (ranking) without explicit instruction \cite{bryant1971transitive,mcgonigle1977monkeys,jensen2017serial,jensen2019discovering,gazes2021does}:\ informally, "if $\text{A} > \text{B}$ and $\text{B} > \text{C}$, then $\text{A} > \text{C}$." The concept of serial order is foundational to numerous additional concepts that support inference and generalization, including space, time, magnitude, and number \cite{gazes2023thinking}. 
As a task, TI is typically tested by presenting subjects with a series of trials in which two items are presented per trial. Items are arbitrary stimuli (e.g.\ fractal images) and belong to an underlying ranking (items designated as A, B, C, etc., where A is highest rank)\ not shown to the participant, who receives reward if they choose the higher-ranked item in each trial. Trials are presented in two phases: training, consisting of only adjacent item pairs (e.g.\ A vs.\ B, B vs.\ C); and testing, consisting of all item pairs, including non-adjacent items (e.g.\ A vs.\ C, B vs.\ E). After each choice, external feedback is provided to the subject (e.g. reward vs.\ lack of reward). Note that non-adjacent item pairs constitute novel combinations of familiar elements (here items), making TI an instance of ZI (Figure \ref{fig:ti_task}A).

To model TI, we adopt a task format similar to that in past work \cite{miconi2025neural}. In each meta-learning episode (hereafter episode), we randomly generate a panel of $N$ items, where there exists an underlying ranking across items (A, B, C, etc.), and each item $I$ is presented to the network as an input vector $\mathbf{x}_I$ of dimension $d$ sampled uniformly from $\{-1,+1\}^{d}$. We set $N = 8$ and $d=100$ across experiments unless stated otherwise.

For each trial (schematic in Figure \ref{fig:zi_task_schematic}), a pair items $\mathbf{x}_I$ and $\mathbf{x}_J$ is randomly sampled. The correct answer (`ground-truth' label) is determined by the items' relative rankings: if item $I$ has higher ranking than item $J$, the label $y$ is $0$ (corresponding to "choose item 1"), and otherwise is $1$ ("choose item 2"). A trial input is the concatenation of the two items input vectors $\mathbf{x}_{IJ} = \begin{bmatrix}\mathbf{x}_I^\top & \mathbf{x}_J^\top\end{bmatrix}.$

In each episode, trials are divided into training vs.\ testing phases, mirroring classic TI task design \cite{mcgonigle1977monkeys,jensen2017serial,jensen2019discovering}. For the training phase, only adjacent item pairs are presented; for the testing phase, both adjacent and non-adjacent item pairs are presented. During meta-learning, plastic weights are updated across both phases. However, reported $0$-shot evaluations in Section \ref{sec:results} are done on networks for which plastic weights have been frozen at the end of the training phase, ensuring that measured performance on non-adjacent item pairs in the testing phase derives from only experience of adjacent (rather than non-adjacent) item pairs in the training phase. See Appendix \ref{appendixsec:autohebbtraining} for further details on training.

\begin{figure}[t]
    \centering
    \includegraphics[width=1\linewidth]{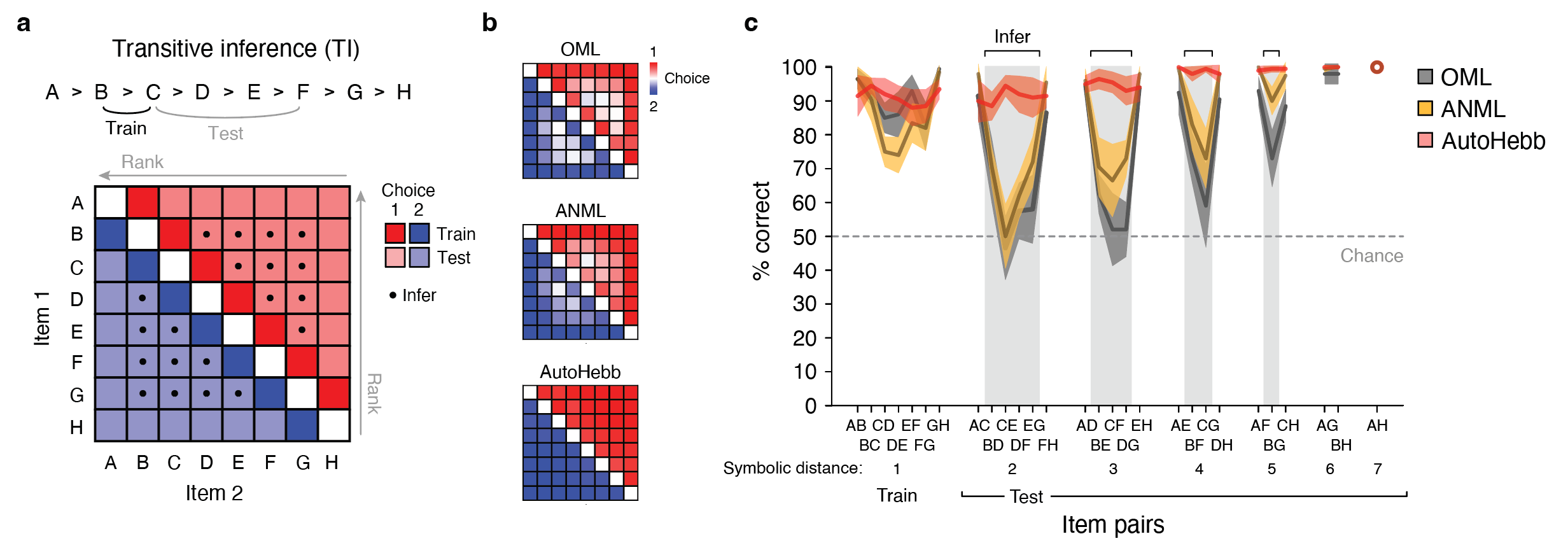}
    \caption{\textbf{Transitive inference (TI) task and performance of representative models. }TI tests 0-shot inference (ZI) of ranking. \textbf{a}, Task schematic. Items (A to H) belong to an unseen underlying ranking, and a pair of items are presented in each trial. In training trials, only adjacent items are presented; in test trials, non-adjacent items are presented; inferential (infer) trials are test trials that do not contain the first or last item. \textbf{b}, Average model output after training trials. \textbf{c}, Model performance by trial type. Trial types (item pairs) are grouped based on the difference in rank between items ("symbolic distance" \cite{jensen2017serial,jensen2019discovering}). Solid lines and shaded regions represent the mean and IQR, respectively. Example outputs across all models in Figures \ref{fig:ti_supp_symb} and \ref{fig:ti_supp_grid}. Model performance across all models reported in Table \ref{table:zeroshot}.}
    \label{fig:ti_task}
\end{figure}

\subsection{Transitive equivalence (0-shot grouping)}\label{subsec:transitiveequivalence}
A second 0-shot ability reported in both humans and animals is transitive equivalence (TE), or the ability to infer groupings (equivalence classes) without explicit instruction \cite{zeithamova2012thehippocampus,lippl2025when}: informally, "if $\text{A} = \text{B}$ and $\text{B} = \text{C}$, then $\text{A} = \text{C}$." As with ranking (see above), grouping is a concept that enables ZI, yet inference of grouping requires a fundamentally different kind of representation \cite{lippl2025when}. Tasks testing TE take various forms (e.g. acquired equivalence and associative inference tasks \cite{bunsey1996conservation,eichenbaum1996functional,shohamy2008integrating,schlichting2015learning,zeithamova2012thehippocampus}), with all such tasks entailing judging equivalence for novel combinations of items (e.g. "A = C?" where A and C have never been seen together before), making TE an instance of ZI. 

Here we adopt the two-item TE task format of \cite{lippl2025when} (schematized in Figure \ref{fig:te_task}a), which retains essential underlying 0-shot structure (equivalence classes or "group") and parallels the classic two-item format TI task. The trial sequence in each episode (i.e. training then testing phase; random sampling of items within each phase) follows that of TI (Section \ref{subsec:transitiveinference}), though here an item pair has label $1$ if the items are equivalent (same group) and $0$ otherwise (different group). In each episode, $M$ items are randomly generated for each of $Q$ groups. We assign indexing $i=1,\ldots,M$ to refer to items within each class. In the training phase, trials consist of item pairs $\mathbf{x}_i$ and $\mathbf{x}_j$ where items are sampled from any of the $Q$ groups, and the item pairs are subject to the condition that $|i-j| < 2$ (i.e. items are "adjacent" pairs, within- or across- group). In the testing phase, all possible item pairs are presented and the network must 0-shot infer whether items belong to the same or different group when $|i-j| \geq 2$. Unless stated otherwise, we set $Q=3$ and $M=3$; we investigate the ability of models to perform larger problems further below.

\begin{figure}[t]
    \centering
    \includegraphics[width=1\linewidth]{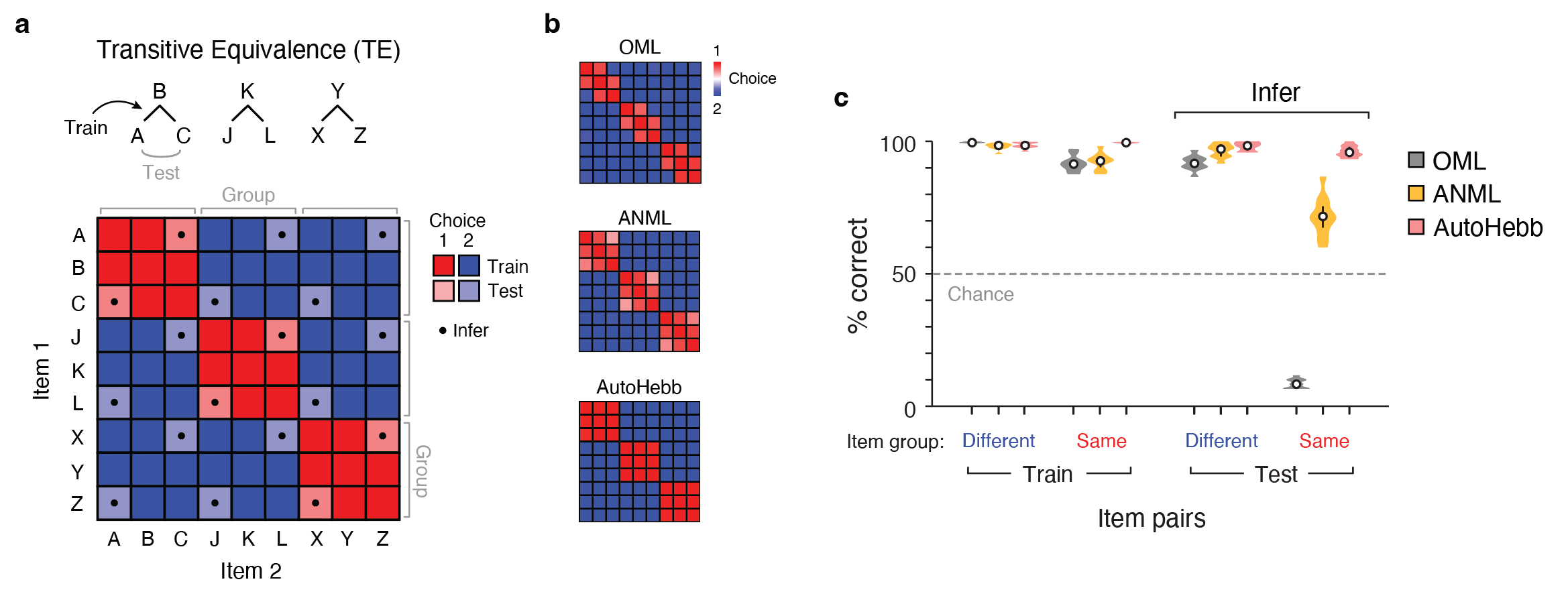}
    \caption{\textbf{Transitive equivalence (TE) task and performance of representative models.} TE tests 0-shot inference (ZI) of grouping. \textbf{a}, Task schematic. Items (A, B, C, etc.) belong to multiple unseen underlying groups (e.g.\ ABC and JKL), and pairs of items are presented in each trial. In training trials, particular item pairs are omitted both within- (e.g. AC and CA) and across- (AL and LA) group: these item pairs are testing trials, here equivalent to inferential (infer) trials. \textbf{b}, Average model output after training trials. \textbf{c}, Model performance across trial types. Trial types (item pairs) are separated based on same group (choice 1) or different group (choice 2). Example outputs across all models in Figure \ref{fig:te_supp_grid}. Model performance across all models reported in Table \ref{table:zeroshot}.}
    \label{fig:te_task}
\end{figure}

\subsection{Continual learning (CL)}\label{subsec:methodscontinual}
We propose three non-standard benchmark CL tasks derived from TI and/or TE: two test the underlying concepts (ranking and grouping) under more demanding conditions; the third tests multi-task ability. Since TI and TE test ZI, these three proposed tasks assess ZI and CL jointly.

TI linking (TI-L), also termed "list-linking" \cite{treichler1996concurrent,treichler2003linking,nelli2023neural,miconi2025neural}, tests the ability to globally order separate rankings (lists) when presented with minimal new information: informally, "if $\text{A} > \text{B} > \text C$ and $\text{D} > \text{E}  > \text{F}$, then upon learning $\text{C} > \text{D}$,  infer $\text{A} > \text{B} > \text C > \text{D} > \text{E} > \text{F}$." The ability underlying TI-L, which has been termed "rapid knowledge assembly" \cite{nelli2023neural}, is a unique test of CL transfer as the transfer is based on a single concept (ranking) and relies on extremely limited information (e.g. solely "$\text{C} > \text{D}$"). The trial sequence for TI-L differs from TI (in Section \ref{subsec:transitiveinference}) only the order of training trials:\ adjacent item pairs for items ranked $1$ to $N/2$ (`list 1') are shown first, then adjacent pairs for items ranked $(N/2) + 1$ to $N$ (`list 2'), and lastly a `linking pair' between items $N/2$ and $(N/2)+1$.

TE linking (TE-L), which we introduce here, tests the ability to update unseen equivalence class (grouping) relationships given minimal new information: informally, "if $\text{A} = \text{B} = \text{C}$ and $\text{X} = \text{Y} = \text{Z}$, then upon learning $\text{D} = \text{C}$,  infer that $\text{A} = \text{B} = \text{C} = \text{D}$ and $\text{D} \neq \text{X} = \text{Y} = \text{Z}$." Like TI-L, TE-L tests CL transfer based on a single concept (grouping) and likewise relies on extremely limited information (e.g.\ solely "$\text{D} = \text{C}$"). The TE-L training phase is exactly the same as in TE (Section \ref{subsec:transitiveequivalence}) but introduces a single novel item ($\mathbf{x}'$) at the end of TE training, namely doing so by presenting two linking-pair trials $(\mathbf{x}, \mathbf{x}')$ and $(\mathbf{x}', \mathbf{x})$, where $\mathbf{x}$ is a familiar item. In the testing phase, the model must 0-shot infer that $\mathbf{x}'$ inherits $\mathbf{x}$'s equivalence relationships.

Multi-task 0-shot continual learning, also introduced here, tests the ability to acquire and retain knowledge across multiple ZI tasks. We specifically test the ability to learn TI then TE (TI-TE) and vice versa (TE-TI), testing ZI responses only after both tasks are presented. Critically, we omit task-identity cues, a more stringent setting than task-cued (also called "task-aware" or "task-incremental" \cite{van2022three}) settings common in studies of CL (e.g.\ \cite{kirkpatrick2017overcomingcatastrophic,zenke2017continuallearning,lopez2017gradient}). Within an episode, two abilities are tested: (i) inferring the currently active ZI task without task-identity cues, and (ii) retaining knowledge from tasks presented earlier. To our knowledge, each of these abilities -- particularly in the context of ZI -- has received little attention in past work. We provide further details in reporting results in \ref{subsec:continuallearningresults}.

\section{Results}\label{sec:results}

We tested AutoHebb on the ZI and CL tasks described on Section \ref{sec:tasks}, comparing performance to that of leading models in meta-learning and CL. In choosing models to test we had two aims, namely (i) to test established and competitive modern learning models against AutoHebb, and (ii) to test modern learning models that are systematically comparable to AutoHebb in model components, and in so doing provide the equivalent of ablation analysis of AutoHebb. A more complete rationale is provided in Appendix \ref{appendixsec:experimentdetails}. Briefly, the models spanned three categories: meta-learning models (MAML (\cite{finn2017modelagnosticmetalearning},  Reptile \cite{nichol2018firstordermetalearning}, OML \cite{javed2019metalearning}, MER \cite{riemer2018learning}, ANML \cite{beaulieu2020learning}, FLP \cite{lindsey2020learningtolearn}), 2-factor Hebbian plastic networks (\cite{miconi18differentiableplasticity,najarro2020meta}), denoted 2-factor-A (2F-A) and 2-factor-ABCD (2F-ABCD), and explicit continual learning strategies (weight regularization methods EWC \cite{kirkpatrick2017overcomingcatastrophic} and SI \cite{zenke2017continuallearning}; and replay-based methods DER \cite{buzzega2020darkexperience} and A-GEM \cite{chaudhry2018efficient}). The two specific AutoHebb architectures we tested -- termed 1- and 2-neuromodulator AutoHebb (AH-1NM and AH-2NM), the latter a model variant inspired by multiple neuromodulator systems in the brain -- are diagrammed in Figure \ref{appendixfig:autohebb_two_arch}. Complete experimental details are in Appendix \ref{appendixsec:experimentdetails}. Benchmarking results take the mean of the average accuracy over 6 meta-training runs (model instances) with different random parameter initializations and report two standard errors from the mean; in figures showing example outputs (e.g. Figures \ref{fig:ti_task}, \ref{fig:te_task}, \ref{fig:linking_tasks}), average accuracy over 100 episodes for a single meta-trained model is reported, and is indicated accordingly. 

\subsection{Zero-shot inference (ZI)}\label{subsec:zeroshotresults}

We tested AutoHebb and leading learning models on ZI tasks, followed by algorithmic analyses of AutoHebb networks. The testing results disclosed unexpected strengths and weaknesses of leading models, in addition to AutoHebb's exceptional capacity for ZI. Further details of task setup, benchmarks, architectures, and training are in Appendices \ref{appendixsec:autohebbtraining} and \ref{appendixsec:experimentdetails}.

\paragraph{ZI in the TI task}

We meta-trained each model using the episode structure outlined in Section \ref{subsec:transitiveinference}. Each meta-training episode randomly samples a set of $8$ items (input representations) and a sequence of $20$ training and $10$ test trials (or separate batches in the cases of MAML and Reptile \cite{nichol2018firstordermetalearning}). The test trials contain both adjacent and non-adjacent pairs of items from the underlying ranking. After meta-training, all parameters are frozen, and the outputs of the models for each trial type (item pair) are collected.

To evaluate ZI ability, we benchmarked model performance on `critical' item pairs: non-adjacent pairs that do not contain the first or last items in the ranking (dotted item pairs in Figure \ref{fig:ti_task}a), following standard practice in studies of TI \cite{mcgonigle1977monkeys,jensen2017serial,jensen2019discovering,ramawat2023transitive}. Model outputs and performance of leading models are shown in Figures \ref{fig:ti_task}b and \ref{fig:ti_task}c \cite{beaulieu2020learning}. These results indicated that AutoHebb networks achieve substantially better performance than established models. Interestingly, FLP and 2F-ABCD performed comparably to AH-1NM, and a number of additional models showed above-chance performance, though AH-2NM outperformed all models (Table \ref{table:zeroshot}). Additional example model outputs are in Figures \ref{fig:ti_supp_symb} and \ref{fig:ti_supp_grid}.

\paragraph{Reverse-engineering learning/inference algorithms from AutoHebb networks}
A primary motivation for AutoHebb is the algorithmic tractability potentially afforded by virtue of solely feedforward architecture. Given this possibility, we adopted a procedure (outlined in Appendix \ref{appendixsec:autohebbjustification}) by which we sought to reverse-engineer learning/inference algorithms from meta-learned AutoHebb networks that successfully performed a given task. Importantly, we sought to do so while retaining understanding of how algorithmic variables and operations related to specific neural components (e.g.\ activities, weights, neuromodulators, plasticity rules, non-linearities, etc.; \cite{tyulmankov2022meta, miconi2025neural, lippl2024amathematicaltheory, jian2025discovering}).

In Appendix \ref{appendixsec:neuralmechanisms}, we provide algorithms implemented by AutoHebb for TI and TI-L. In order to find networks with the fewest possible components (e.g. hidden layers, plastic weights, nonlinearities, etc.), we meta-trained a collection of AutoHebb networks with components progressively removed and observed their task performance. For TI, this led to an architecture that consists of only a fixed and plastic readout (Eq. \ref{eq:minimaltinetwork}). In Appendix \ref{appendixsubsec:neuralmechti}, we find that this minimal TI network learns a scalar rank representation for each item and applies a subtraction mechanism between the two items' scalar rank representations. During learning, only the ranks of items presented in a trial are updated. Algorithmic analysis of AutoHebb networks performing TI-L are presented further below.

\begin{table}
  \caption{\textbf{Model performance on two ZI tasks: TI and transitive equivalence TE tasks.} Best scores are \textbf{bolded}. Both AutoHebb models outperform all other methods on TI and TE. Among other models on TI, MAML, Reptile, ANML, FLP, and 2F-ABCD methods show viable performance on TI, whereas MAML, OML, and 2F-A perform relatively poorly. On TE, apart from Autohebb, ANML, 2F-A, and 2F-ABCD show strong performance, while all other models do not perform above chance. Trial type-based breakdowns of TE performance are reported in Table \ref{table:zeroshotbreakdown}.\\}
  \label{table:zeroshot}
  \centering
  \setlength{\tabcolsep}{5pt}
  \begin{tabular}{lcc}
    \toprule
    Model & TI & TE \\
    \midrule
    MAML \cite{finn2017modelagnosticmetalearning} & $0.697 \pm 0.025$ & $0.502 \pm 0.010$ \\
    Reptile \cite{nichol2018firstordermetalearning} & $0.744 \pm 0.012$ & $0.509 \pm 0.010$ \\
    MER \cite{riemer2018learning} & $0.500 \pm 0.009$ & $0.495 \pm 0.002$ \\
    OML \cite{javed2019metalearning} & $0.603 \pm 0.015$ & $0.503 \pm 0.003$ \\
    FLP \cite{lindsey2020learningtolearn} & $0.845 \pm 0.011$ & $0.504 \pm 0.002$ \\
    ANML \cite{beaulieu2020learning} & $0.702 \pm 0.011$ & $0.831 \pm 0.005$ \\
    2F-A & $0.765 \pm 0.040$ & $0.717 \pm 0.050$ \\
    2F-ABCD & $0.878 \pm 0.022$ & $0.844 \pm 0.015$ \\
    \midrule
    AH-1NM & $0.904 \pm 0.053$ & $\mathbf{0.945 \pm 0.042}$ \\
    AH-2NM & $\mathbf{0.986 \pm 0.006}$ & $0.933 \pm 0.014$ \\
    \bottomrule
  \end{tabular}
\end{table}

\paragraph{ZI on the TE task}

TE provides a simple paradigm for investigating ZI of grouping:\ a concept fundamental to numerous concepts and cognitive domains \cite{sidman1994equivalence,zentall2008concept}. Recent theoretical work \cite{lippl2024amathematicaltheory} shows that kernel models can perform TI but not TE, highlighting the qualitative difference in inductive bias (and representation) at stake between these tasks. We evaluated models on 3 groups of 3 items, corresponding to 81 item pairs (63 training, 18 test) (Figure \ref{fig:te_task}a). For each episode during meta-training, we presented all 63 training pairs and then 30 test pairs (randomly sampled) for each episode during meta-training, adopting the same stopping criteria as TI meta-training. Results are mean accuracy across 100 episodes per model, averaged (± 2 s.e.) across six model instances.

Table \ref{table:zeroshot} reports mean model performance, equally weighted for the two categories of item pairs:\ equivalent (`same-group') pairs and the non-equivalent (`different-group') pairs. Breakdowns of performance based on these categories are in Table \ref{table:zeroshotbreakdown}, with further details in Appendix \ref{appendixsec:experimentdetails}. Notably, prior to any learning, all (meta-trained) models defaulted to outputting `different-group' (for all item pairs except for same-item pairs), likely due to the inherent imbalance of these trial type categories in the TE task. Different-group accuracy thus saturates for many models (Table \ref{table:zeroshotbreakdown}), making same-group item pairs the critical test of ZI for this task. Strikingly, among all models, only AutoHebb, ANML, 2F-A, and 2F-ABCD performed above chance (Figure \ref{fig:te_task}b-c; see also example outputs in Figure \ref{fig:te_supp_grid}), with AutoHebb networks moreover outperforming all models.

\paragraph{Emergent (untrained) learning and generalization abilities in AutoHebb}
One possibility afforded by meta-learning is the possibility of finding "meta-test" abilities: learning or generalization abilities that go beyond what was explicitly optimized in meta-learning ("meta-trained"). Such abilities, if expressed, provide evidence that meta-learning conferred an inductive bias (here implemented as fixed weights in a network) that better implements the learning/generalization ability in question. For the present tasks (TI and TE), we were especially interested in this possibility given that the concepts/relations they probe (ranking and grouping, respectively) are in principle applicable to an indefinitely large number of items and group sizes.

Strikingly, we found that AutoHebb generalized to larger numbers of items and group sizes (Figures \ref{supplementfig:length_gen14}, \ref{supplementfig:length_gen20}, \ref{supplementfig:length_gen_overall}, \ref{supplementfig:te_item_length_gen} \ref{supplementfig:te_group_length_gen}) in both TI and TE (length generalization; in TI, meta-learning performed for 8 items, with AutoHebb showing generalization to 20 items (or more); in TE, meta-learning performed for 3 items and 3 groups, with AutoHebb showing generalization to 5 items and 6 groups of 3 items; see Appendix \ref{appendixsec:autohebbextraresults} for further results); further, two representative models (OML and ANML) showed relatively weaker or no length generalization.

Apart from length generalization, we were also interested in probing the ability of models to self-regulate learning in conditions not seen during meta-learning, as such abilities are naturally relevant to CL. In particular, previous work on TI shows that repeated exposure of traditional learning models to a single trial type (e.g.\ item pair EF) can severely interfere with subsequent correct responses, whereas animals performing TI are robust to this manipulation \cite{jensen2019discovering,jensen2017transitive}. Performing the trial exposure procedure (solely in meta-testing), we found that AutoHebb similarly showed robustness in subsequent responses, in contrast to two representative models tested (OML and ANML, Figure \ref{supplementfig:mass_symb}). Notably, direct examination of the neuromodulator in AutoHebb networks showed that networks progressively down-regulated neuromodulation during massed trial presentation (Figure \ref{supplementfig:mass_neuromod}).

Lastly, for the TI-L and TE-L tasks, we adopted a training-testing strategy such that performance of these tasks constitutes evidence of an additional emergent ability that has been termed "rapid knowledge assembly" \cite{nelli2023neural,miconi2025neural}. We provide further details in presenting results from these two tasks further below.

\subsection{Continual learning (CL)}\label{subsec:continuallearningresults}
We next tested AutoHebb on three non-traditional CL tasks: TI-L, TE-L, and multi-task 0-shot learning (outlined in Section \ref{subsec:methodscontinual}); further, as with the ZI tasks (Section \ref{subsec:zeroshotresults}), we also tested established learning models on these tasks. Surprisingly, we found that, besides AutoHebb, no model uniformly succeeded on all three CL tasks. However, on a subset of tasks, ANML, FLP, 2F-A, and 2F-ABCD showed viable performance.

\begin{table}
  \caption{\textbf{Model performance on two CL tasks:\ TI-L and TE-L.} Best scores are \textbf{bolded}. TE-L Original corresponds to ZI performance on the originally presented groups after the networks are shown the new item. TE-L New Item is the ZI performance on all test trials that have the new item as one input. TE-L Total is the equal-weighted average of the two scores. On TI-L, both AutoHebb models (AH-1NM and AH-2NM) outperform all other models. Both 2-factor models (2F-A, 2F-ABCD) and FLP also show successful linking. On TE-L, both AutoHebb models outperform on both the original TE task (TE-L Original) after being presented the linking trial, as well as the linking task (TE-L New Item), with AH-1NM slightly outperforming AH-2NM. Among other models, ANML shows the strongest linking ability. Trial type-based breakdowns of TE performance are reported in Table \ref{table:linkingtebreakdown}.\\}
  \label{table:zeroshotlinking}
  \centering
  \setlength{\tabcolsep}{5pt}
  \begin{tabular}{lc|ccc}
    \toprule
    Model & TI-L & TE-L Original & TE-L New Item & TE-L Total \\
    \midrule
    MAML \cite{finn2017modelagnosticmetalearning} & $0.621 \pm 0.035$ & $0.498 \pm 0.006$ & $0.501 \pm 0.005$ & $0.500 \pm 0.005$ \\
    Reptile \cite{nichol2018firstordermetalearning} & $0.726 \pm 0.008$ & $0.510 \pm 0.009$ & $0.500 \pm 0.003$ & $0.505 \pm 0.005$ \\
    MER \cite{riemer2018learning} & $0.504 \pm 0.011$ & $0.499 \pm 0.008$ & $0.493 \pm 0.014$ & $0.496 \pm 0.004$ \\
    OML \cite{javed2019metalearning} & $0.663 \pm 0.015$ & $0.501 \pm 0.017$ & $0.509 \pm 0.012$ & $0.505 \pm 0.012$ \\
    FLP \cite{lindsey2020learningtolearn} & $0.761 \pm 0.020$ & $0.500 \pm 0.009$ & $0.497 \pm 0.014$ & $0.499 \pm 0.004$ \\
    ANML \cite{beaulieu2020learning} & $0.601 \pm 0.017$ & $0.816 \pm 0.012$ & $0.736 \pm 0.013$ & $0.776 \pm 0.011$ \\
    2F-A & $0.787 \pm 0.072$ & $0.699 \pm 0.041$ & $0.542 \pm 0.010$ & $0.621 \pm 0.025$ \\
    2F-ABCD & $0.838 \pm 0.041$ & $0.798 \pm 0.010$ & $0.571 \pm 0.008$ & $0.684 \pm 0.006$ \\
    \midrule
    AH-1NM & $0.918 \pm 0.029$ & $\mathbf{0.922 \pm 0.037}$ & $\mathbf{0.862 \pm 0.042}$ & $\mathbf{0.892 \pm 0.040}$ \\
    AH-2NM & $\mathbf{0.976 \pm 0.009}$ & $0.907 \pm 0.017$ & $0.837 \pm 0.011$ & $0.872 \pm 0.013$ \\
    \bottomrule
  \end{tabular}
\end{table}

\paragraph{CL on TI-L task} In evaluating TI-L, we were expressly interested in the possibility that models meta-trained (or hyperparameter-tuned) only on TI would emergently show an ability to perform TI-L. We therefore took the models meta-trained on TI (but not TI-L) and simply evaluated their performance on TI-L, i.e. keeping all meta-trained parameters fixed and presenting the networks with new episodes. In these episodes, the training phase followed that in Section \ref{subsec:methodscontinual} (under TI-L), in particular where $8$ items were generated, then $4$ items were assigned to two separate item rankings (lists). We showed $10$ training trials from list 1, then $10$ training trials from list 2, and finally $4$ linking trials DE/ED (Figure \ref{fig:linking_tasks}a). TI-L performance was measured on critical (inferential) item pairs: across-list item pairs excluding end items (dotted item pairs in Figure \ref{fig:linking_tasks}a). 

We found that nearly all models performed poorly and/or inconsistently on these inferential item pairs (see Figure \ref{fig:linking_tasks}b for example outputs from OML and ANML; see also example outputs for all models in Figures \ref{fig:ti_supp_symb} and \ref{fig:ti_supp_grid}); interestingly, the exceptions were models incorporating Hebbian plasticity, namely FLP, 2F-A, and 2F-ABCD, which showed viable performance both on average (Table \ref{table:zeroshotlinking}) and across all trial types (Figures \ref{fig:ti_supp_symb} and \ref{fig:ti_supp_grid}). AutoHebb outperformed all models, with two neuromodulators (AH-2NM) providing marginal improvement over one (AH-1NM) (Table \ref{table:zeroshotlinking}).

In algorithmic analyses of AutoHebb networks, we found that two sets of plastic weights appeared necessary to perform TI-L (Eq. \ref{eq:tilarch}), unlike the minimal TI network. In particular, the simplest AutoHebb network that could perform TI-L had one hidden layer with plastic weights both in the hidden and readout layers, without any nonlinearity. Plastic weights in the hidden layer stored information linking currently shown items to items that had been (in past trials) co-presented with the currently shown items. In contrast, the plastic weights in the readout encode past network responses associated with each item (via prior association with a specific presented item pair; e.g.\ item A associated with an AB trial). Given an input, the network 'recalls' associated items not currently shown in the trial and uses the trial outcomes associated with all recalled items to compute a scalar rank representation. In contrast, the minimal TI network produces a scalar rank representation for a given item based only on the outcome of trials with (directly) co-presented items. See Appendix \ref{appendixsubsec:neuralmechtil} and Algorithm \ref{algorithm:til} for complete analysis and discussion.

\begin{figure}[t]
    \centering
    \includegraphics[width=1\linewidth]{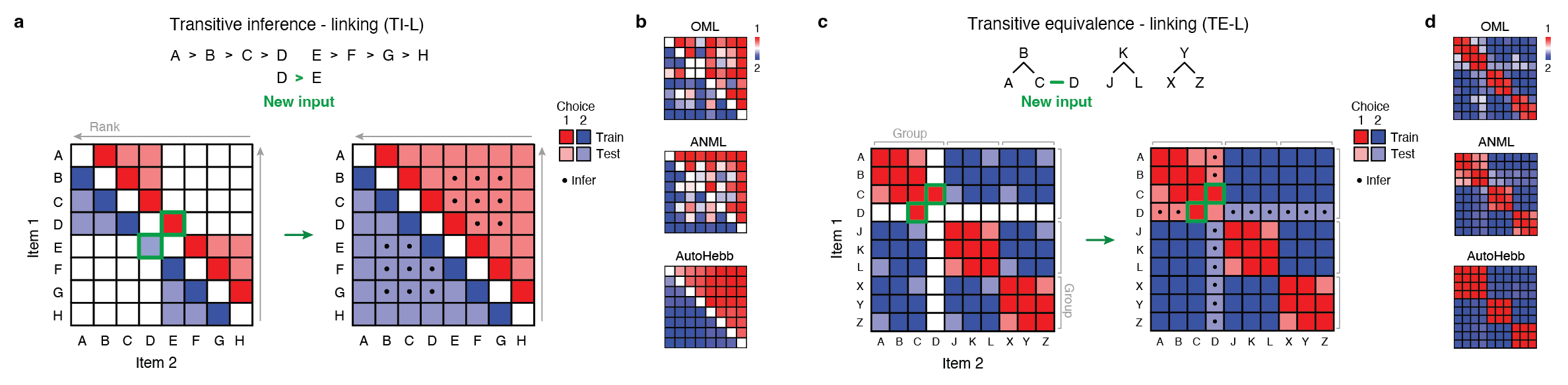}
    \caption{\textbf{Two continual learning (CL) tasks and performance of representative models.} \textbf{a }and \textbf{c}, Task schematics. Each task (TI-L and TE-L) is based on the ZI tasks TI and TE, respectively, and tests 0-shot inference following presentation of a new item pair (new input; diagramed in green). In TI-L, also known as "list-linking" \cite{treichler1996concurrent,nelli2023neural,miconi2025neural}, the item pair is a "linking pair" (here DE or ED) that connects separate previously learned rankings (here A to D and E to H). In TE-L, the new input is an item pair (here CD/DC) containing a novel item (here D) and previously presented item (here C). Note that both tasks test the ability to perform structure-sensitive transfer to items beyond that of the new item pair.  \textbf{b }and\textbf{ d}, Average model output (across all item pairs) following presentation of the new item pair. See Figures  \ref{fig:ti_supp_symb} and \ref{fig:ti_supp_grid} for example outputs across all models. Model performance across all models reported in Table \ref{table:zeroshotlinking}.}
    \label{fig:linking_tasks}
\end{figure}

\paragraph{CL on TE-L task} As in TI-L, with TE-L we were expressly interested in the possibility that models meta-trained (or hyperparameter-tuned) only on TE would emergently show an ability to perform TE-L. We therefore took the models meta-trained on TE and simply evaluated their performance on TE-L. The episode training phase followed that in Section \ref{subsec:methodscontinual}, where the original 63 TE training trials are presented, then 4 linking trials of the form CD/DC as shown in Figure \ref{fig:linking_tasks}c. TE-L performance was measured on critical (inferential) item pairs: pairs containing D and an item not already in the linking trial (dotted item pairs in Figure \ref{fig:linking_tasks}c). We found that AutoHebb achieved the highest performance across all models, both on inferential linking trials and on inferential trials for originally presented groups (example outputs in Figure \ref{fig:linking_tasks}d; performance across models in Tables \ref{table:zeroshotlinking} and \ref{table:linkingtebreakdown}). Notably, ANML was the only baseline that showed substantial above-chance performance.

\paragraph{Multi-task ZI learning}

We lastly tested AutoHebb and established models on a challenging multi-task paradigm. Given limited prior work on continual learning across single-concept 0-shot tasks, we examined models' ability to perform TI followed by TE (TI-TE) and vice versa (TE-TI), testing successful (0-shot) inference after both tasks were presented. To obtain models capable of this task, during meta-learning each episode consisted of presentation of TI then TE or \textit{vice versa} (Figure \ref{fig:multi_task_continual}a), where only training phases were shown for each task (as described in Sections \ref{subsec:methodscontinual} and \ref{subsec:zeroshotresults}); further, to avoid overfitting to task boundaries,  TI and TE episode lengths were randomized within $20$-$100$ and $63$-$120$ trials, respectively (Figure \ref{fig:multi_task_continual}a).

As an overall assessment across continual learning models, we plot in Figure \ref{fig:multi_task_continual}b model performance across random sequences of 10 tasks (each either TI or TE); performance here is the mean performance (specifically on 0-shot inference trials) across all tasks presented up to that point (task \#). Strikingly, this test indicated exceptional performance of AutoHebb. The performances of ANML and OML decayed more slowly, but asymptoted to a lower accuracy. Full experimental details are in Appendix \ref{appendixsubsec:multitaskswitchingfiguredetails}.

On 2-task tests (TI-TE and TE-TI; Table \ref{table:multitask}), AH-2NM was the only model successful on all tasks, while ANML was successful on all tasks except on TE ZI trials where items belonged to the same group (Table \ref{table:multitaskbreakdown}). The limited ability of models to perform these tasks suggests that it is a considerable challenge to perform different kinds of ZI when task-identity cues are absent; that said, AutoHebb shows exceptional multi-task ability among models, demonstrating a unique capacity to engage different relational inductive biases (ranking vs.\ grouping) depending on the input presented (i.e.\ the sequence of trials shown in testing, corresponding to uncued tasks), in addition to retaining ZI knowledge from past lifetime experience.

\begin{table}[t]
  \caption{\textbf{Model performance on multi-task ZI: TI$\rightarrow$TE and TE$\rightarrow$TI.} Best scores are \textbf{bolded}. EWC, SI, A-GEM, and DER are non-meta-learning methods. AutoHebb with two neuromodulators (AH-2NM) is the only model that can continually learn multiple ZI tasks. While ANML shows reasonable performance overall, Table \ref{table:multitaskbreakdown} it only performs chance on ZI trials when items belong to the same group. Of the meta-learning models, 2F-A, 2F-ABCD, and AH-1NM show good TI performance in the TI$\rightarrow$TE direction. Of the non-meta-learning methods, A-GEM shows strong performance on TI in the TI$\rightarrow$TE direction while all other tasks remain at chance performance. Trial type-based breakdowns of TE performance are reported in Table \ref{table:multitaskbreakdown}. $^*$Only 3 runs are reported. \\}
  \label{table:multitask}
  \centering
  \setlength{\tabcolsep}{5pt}
  \begin{tabular}{lcc|cc}
    \toprule
    & \multicolumn{2}{c}{TI $\rightarrow$ TE} & \multicolumn{2}{c}{TE $\rightarrow$ TI} \\
    Model & TI & TE & TI & TE \\
    \midrule
     MAML \cite{finn2017modelagnosticmetalearning} & $0.545 \pm 0.010$ & $0.518 \pm 0.003$ & $0.552 \pm 0.008$ & $0.517 \pm 0.009$ \\
    Reptile \cite{nichol2018firstordermetalearning} & $0.585 \pm 0.030$ & $0.517 \pm 0.006$ & $0.614 \pm 0.034$ & $0.525 \pm 0.008$ \\
    MER \cite{riemer2018learning} & $0.500 \pm 0.006$ & $0.495 \pm 0.005$ & $0.502 \pm 0.005$ & $0.503 \pm 0.010$ \\
    OML \cite{javed2019metalearning} & $0.502 \pm 0.007$ & $0.502 \pm 0.005$ & $0.513 \pm 0.010$ & $0.505 \pm 0.007$ \\
    FLP \cite{lindsey2020learningtolearn} & $0.498 \pm 0.006$ & $0.502 \pm 0.004$ & $0.490 \pm 0.009$ & $0.502 \pm 0.008$ \\
    ANML \cite{beaulieu2020learning} & $0.630 \pm 0.010$ & $0.700 \pm 0.012$ & $0.617 \pm 0.017$ & $0.693 \pm 0.012$ \\
    2F-A & $0.711 \pm 0.016$ & $0.526 \pm 0.005$ & $0.600 \pm 0.029$ & $0.543 \pm 0.014$ \\
    2F-ABCD & $0.716 \pm 0.026$ & $0.519 \pm 0.004$ & $0.663 \pm 0.035$ & $0.537 \pm 0.018$ \\
    \midrule
    EWC \cite{kirkpatrick2017overcomingcatastrophic} & $0.507 \pm 0.005$ & $0.499 \pm 0.005$ & $0.494 \pm 0.010$ & $0.501 \pm 0.008$ \\
    SI \cite{zenke2017continuallearning} & $0.489 \pm 0.007$ & $0.498 \pm 0.008$ & $0.495 \pm 0.009$ & $0.505 \pm 0.005$ \\
    A-GEM \cite{chaudhry2018efficient} & $0.874 \pm 0.008$ & $0.505 \pm 0.001$ & $0.469 \pm 0.007$ & $0.512 \pm 0.006$ \\
    DER \cite{buzzega2020darkexperience} & $0.489 \pm 0.009$ & $0.502 \pm 0.005$ & $0.506 \pm 0.006$ & $0.492 \pm 0.004$ \\
    \midrule
    AH-1NM$^*$ & $0.700 \pm 0.207$ & $0.567 \pm 0.061$ & $0.637 \pm 0.162$ & $0.548 \pm 0.042$ \\
    AH-2NM & $\mathbf{0.877 \pm 0.009}$ & $\mathbf{0.772 \pm 0.014}$ & $\mathbf{0.888 \pm 0.009}$ & $\mathbf{0.777 \pm 0.014}$ \\
    \bottomrule
  \end{tabular}
\end{table}

\section{Discussion}
We introduce AutoHebb, a neural network architecture that addresses three major limitations of current models of learning: (1) lack of autonomous control of learning, (2) lack of strong (or appropriate) inductive biases, and (3) limited simplicity and expressivity. AutoHebb is a variant of plastic neural network \cite{miconi2018differentiable,miconi2018backpropamine,najarro2020meta,tyulmankov2022meta} and is thus comprised of fixed and plastic weights; fixed weights are identified via meta-learning optimization. On a set of non-traditional zero-shot inference (ZI) and continual learning (CL) tasks, we found that AutoHebb not only consistently outperformed popular learning models, but also showed multiple emergent learning-generalization abilities (abilities not explicitly optimized in meta-learning) which other models either did not express or expressed less strongly -- these abilities include rapid knowledge assembly, length generalization, and self-regulation of plasticity. Lastly, beyond performance, the architectural simplicity of AutoHebb also enabled reverse-engineering the networks' algorithms, demonstrating the tractability of these models to analysis and explanation.

Importantly, our findings delineate the model components essential to AutoHebb's exceptional capacity for ZI/CL, doing so in two ways:\ by evaluating performance on various qualitatively different ZI/CL tasks (thereby probing different kinds of inductive biases and CL capacities), and by evaluating performance of models that share one or more components with AutoHebb (thereby controlling for the shared model components). In particular, it was notable that performance of the various models tested was generally poor, with the exception of three models -- ANML, which implements meta-learning and neuromodulation but lacks Hebbian plasticity; 2-factor Hebbian, which implements meta-learning and Hebbian plasticity but lacks neuromodulation; and FLP, which also implements meta-learning and Hebbian plasticity but lacks neuromodulation -- showed competitive performance on at least one (though not all) tasks (e.g. ANML on TE and TE-L; 2-factor Hebbian on TI, TI-L, and TE; FLP on TI and TI-L). This pattern of performance across these related models implies that the three main model components of AutoHebb -- i.e. neuromodulation (autonomous control of learning), Hebbian plasticity, and meta-learning -- are jointly essential for AutoHebb's consistently exceptional ZI/CL abilities, at least as evidenced in the present tasks.

\begin{figure}[t]
    \centering
    \includegraphics[width=1.0\linewidth]{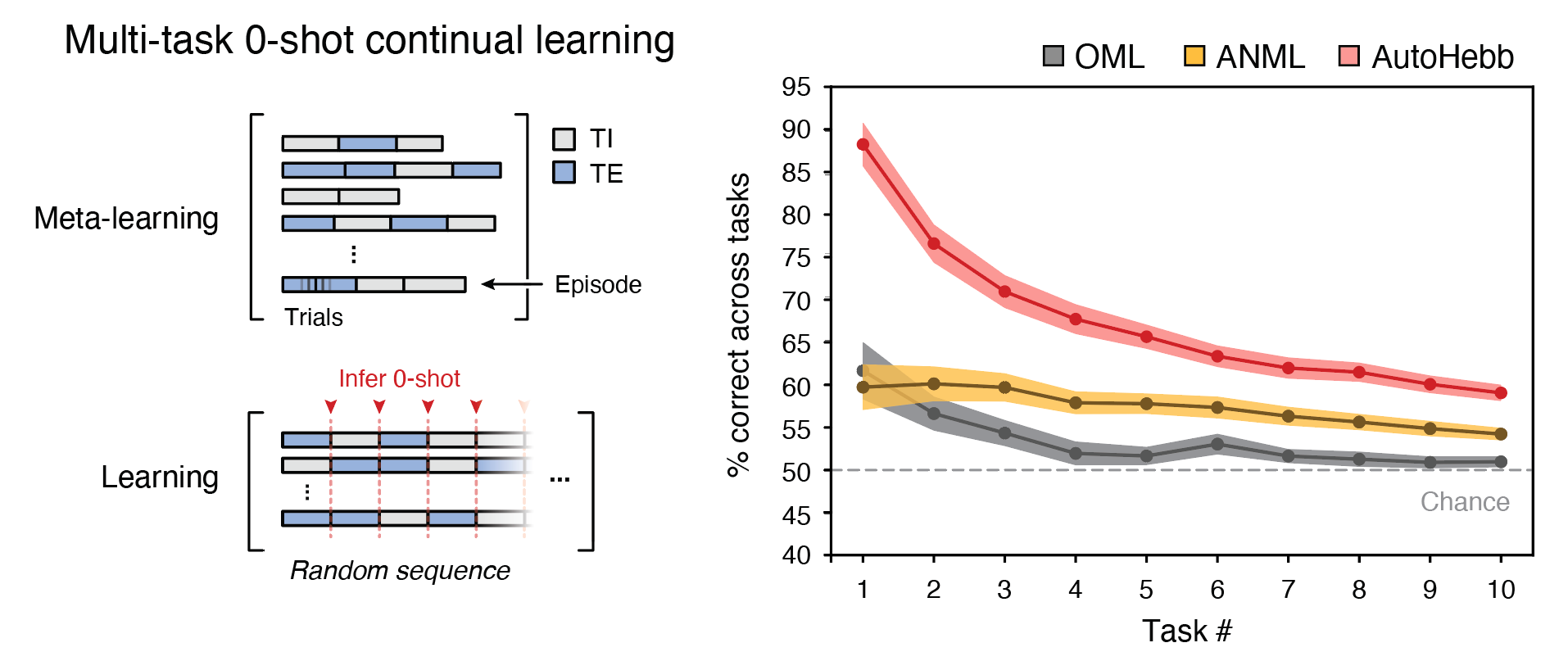}
    \caption{\textbf{Multi-task 0-shot continual learning:\ model setup, tasks, and performance.} Left, schematic of meta-learning and learning modeling procedure. Models were meta-trained on episodes in which 2 to 4 TI and TE tasks (i.e.\ having different randomly generated items) were serially presented and with variable numbers of trials. Models were then presented with a random sequence of 10 TI and TE tasks at evaluation. Right, performance of models at testing (mean ± 2 s.e., n=100 task sequences). Full details are in Appendix \ref{appendixsubsec:multitaskswitchingfiguredetails}.}
    \label{fig:multi_task_continual}
    \vspace{-\intextsep}
\end{figure}

In future work, we envision a variety of use cases and task settings where AutoHebb may contribute. In neuroscience, AutoHebb is of potential interest as a "fully neural" learning model (i.e. solely made of units, weights, local learning rules, and global signals like neuromodulators), providing a naturally more brain-like alternative to models relying on externally applied learning signals (e.g.\ researcher-provided gradients \cite{lillicrap2016random,Richards2019deeplearning,lillicrap2020backpropagation,yang2021artificial}); further, among meta-learned neural network models, AutoHebb provides a simpler alternative to those based on recurrence \cite{miconi2018backpropamine,miconi2025neural} or spikes \cite{NEURIPS2020_bdbd5ebf,confavreux2025balancing}, while also providing a more expressive alternative to models based on 2-factor Hebbian plasticity \cite{najarro2020meta,tyulmankov2022meta}; aside from potential enhancement in task performance, these differences may facilitate connections to experimental data (behavior and neural activity). In machine learning, AutoHebb is an alternative to gradient-based optimization in simulating within-lifetime learning (currently dominant), particularly in the setting of CL \cite{wang2024comprehensivesurveycontinuallearning}. Further, components of AutoHebb may be combined with existing models such as transformers for online learning or test-time adaptation through post-training methods (e.g. GRPO \cite{shao2024deepseekmathpushinglimitsmathematical}) either by strategically placing plastic weights within blocks and throughout layers (analogous to LoRA \cite{hu2022lora}) or by adding AutoHebb as an entirely separate module. Importantly -- whether in a basic or applied settings -- the combined simplicity and expressiveness of AutoHebb networks may be suited for algorithmic discovery, analysis, and interpretability, specifically in the context of brain-like constraints.

Lastly, we emphasize that the basic AutoHebb architecture (prior to meta-learning optimization) is fairly general-purpose, and thus applicable to many settings, including more standard learning tasks. Even so, our focus on single concept-based ZI tasks affords exceptionally unambiguous ways to test and analyze whether and how models realize transferable (abstract) structure, in contrast to tasks in which transferable structure across tasks is minimal (e.g. permuted MNIST \cite{goodfellow2016deeplearning,kirkpatrick2017overcomingcatastrophic,zenke2017continuallearning}) or multifarious and relatively ambiguous (e.g. split-CIFAR \cite{zenke2017continuallearning,lopez2017gradient}, split-MiniImageNet \cite{vinyals2016matching}, Continual World \cite{wolczyk2021continualworldroboticbenchmark}). In this way, the tasks we investigate in this study point to a non-traditional -- and possibly advantageous -- basis for future work seeking to identify new neurally grounded algorithms for learning, inference, and generalization.

\bibliographystyle{unsrt} 
\bibliography{references} %

\newpage

\appendix

\section{Reverse-engineering algorithms from AutoHebb networks }\label{appendixsec:autohebbjustification}
One of the major modeling aims potentially enabled by the AutoHebb architecture is to identify (or possibly discover) the essential algorithm implemented in performant AutoHebb networks. Further, doing so would also entail identifying the respective roles of the neural components of AutoHebb (fixed weights, plastic weights, main vs. control networks, neuromodulator, local learning rule, etc.), of which AutoHebb is fully comprised. Briefly, we consider the explicit identification of algorithms important for a number of reasons, including (a) facilitating or enabling improvement of algorithms in an applied setting, (b) anchoring scientific hypotheses about learning and generalization in the brain, as putatively realized in AutoHebb networks, to the hypotheses' algorithm-level descriptions, which would serve thus as the basis for deriving neural or behavioral predictions.

To identify algorithms in AutoHebb networks, our basic approach consisted of four steps: (1) identify the simplest AutoHebb architecture capable of adequately performing the task (which we term "minimal network"), doing so by re-training (meta-training) networks with progressively simpler architecture, (2) collect observations of a performant network's hidden activity, (3) identify components of the minimal network which can be idealized without alteration of task performance (or other behavioral patterns), (4) analyze the idealized network (under various task conditions) to obtain a explicit algorithm. These four steps are approximate; in practice, there is potential iteration between steps. We provide further details as follows.

 For (1), we meta-trained cohorts of AutoHebb networks that varied these architectural hyperparameters: depth (of main and control networks), number of layers containing plastic weights, and the homo/heterogeneity of the plastic weight scales (denoted \textbf{A}). We did not algorithmically analyze 2-neuromodulator networks, though we did benchmark these latter networks. Additional potential hyperparameters include the unit non-linearities and layer widths; however, we did not explore these. For (2), we typically examined
 \begin{itemize}
	\item fixed weights (and biases) as a function of item position (item 1 vs. 2)
	\item neuromodulator activity over trials
	\item neuromodulator activity with respect to trial output
	\item neuromodulator activity with respect to trial outcome (external feedback)
	\item hidden activity as a function of task variables (item identity, trial output, trial outcome)
	\item hidden activity as a function of task structure (e.g.\ ranking, grouping)
\end{itemize}
For (3), we took findings from (2) that suggested idealization would be possible while retaining essential task behavior(s); we then either reasoned that the idealization was directly accurate or re-ran meta-training with the idealization as a hard constraint to re-assess task performance and behavior. For (4), we summarized the outcome of plastic weight updates to the network's output (e.g.\ beginning with initial state of the network) expressed in explicit mathematical form, then simplified and interpreted these plastic weight terms in an appropriate task- and condition-specific manner.

\section{Identification of learning algorithms from AutoHebb networks}\label{appendixsec:neuralmechanisms}

Here we applied the approach described in Appendix \ref{appendixsec:autohebbjustification} to identify algorithms for two of the tasks investigated in this study, namely TI and TI-L. We first provide general notation across tasks, then for each algorithm we provide a basic outline of the algorithm, followed by a specific analysis of neural components. 

\paragraph{Notation.} Our notation is as follows: let $N$ be the number of items with item dimension $d$ and $\mathbf{1}_{d} = \begin{bmatrix} 1 & \cdots & 1 \end{bmatrix} \in \mathbb{R}^{1 \times d}$ the vector of ones viewed as a $1 \times d$ matrix. If $\mathbf{x}_A \in \{-1,+1\}^{d}$ (viewed as a $d \times 1$ matrix) is an item representation corresponding to item $A$, then we let $\mathbf{x}_{AB} = \text{Concat}(\mathbf{x}_A, \mathbf{x}_B) \in \{-1, +1\}^{2d}$ be the concatenation of two items $A$ and $B$, which is the input format for trial type $AB$ in the tasks presently studied. (We suppress the subscript $AB$ when it is implicitly understood or refers to an arbitrary trial input.) Let $\mathbf{P}(t)$ be the plastic weight matrix following the 3-factor Hebbian update rule at time $t$ (Eq. \ref{eq:3factorrule}). All networks we consider have at most one hidden layer, so we denote the hidden representation corresponding to $\mathbf{x}$ at time $t$ as $\mathbf{h}(t)$ and similarly for $\mathbf{h}_{AB}(t)$ and $\mathbf{h}_A(t)$.

\subsection{A learning algorithm for TI}\label{appendixsubsec:neuralmechti}
In this section, we analyze a minimal AutoHebb network for transitive inference (TI) (described in Section \ref{subsec:transitiveinference}). As further elaborated below, the basic algorithm performs a subtraction of the ranking of the two presented input items in each trial. Each item's ranking is represented by a single scalar value, and the network uses a subtraction mechanism to compare these values to determine which item has a higher ranking. If the network is incorrect, it updates the scalar values (corresponding to ranking) of the two presented items by an amount proportional to the difference in the scalar values.

\paragraph{Main network.} Meta-training progressively simpler AutoHebb networks (see \ref{appendixsec:autohebbjustification}) yielded a minimal network that had a simple perceptron architecture (with both fixed and plastic weights) given by 
\begin{equation}\label{eq:minimaltinetwork}
    y =(\mathbf{\bar{1}}+\mathbf{P}(t))\mathbf{x},
\end{equation}
where $\mathbf{\bar{1}}_{d} = \text{Concat}(\mathbf{1}_d, -\mathbf{1}_d) \in \mathbb{R}^{1 \times 2d}$. Note that the fixed weights $\mathbf{\bar{1}}_{d}$ implements a subtraction between item representations given different item positions (first vs. second position). This anti-correlated relationship between position-specific weights was observed in all meta-trained networks (Fig. \ref{appendixfig:tianticorrelatedweights}). Further, since no meta-learning is needed in Eq. \ref{eq:minimaltinetwork}, we remove the sigmoid that would be needed to be applied to the readout $y$ (Section \ref{subsec:autohebbarch}) in meta-training, and instead determine the choice based on the sign of $y$ using the standard Heaviside function $H(y)$. 

\paragraph{Control network.} In the minimal network, the neuromodulator was found to be predominantly determined by whether reward was received or not ($r = \pm 1$). We idealize this behavior here by setting the neuromodulator value to $0$ for correct responses and $-\eta$ for incorrect responses, where $\eta$ is a fixed positive value. Note that we chose this idealization because it was sufficient to capture basic TI performance; it is also possible that the native meta-trained control network, in addition to other possible idealizations, may produce additional TI-related behavioral patterns of interest (\cite{jensen2019discovering}).

\paragraph{Plastic weight updates.} As a step towards providing a mechanistic account of successful task performance on TI for the idealized network, we analyze the behavior of the plastic weights throughout training trials. Under mild conditions on the item representations and $\eta$, we show the plastic weights converge in the limit for any infinite sequence of training trials. We note the idealized fixed weights are not used in this section. However, the analysis on learning TI in the subsequent section relies on the fixed weights in the idealized main network together with further constraints on item representations and $\eta$.

Given the above description of the control network, the update rule for $P(t)$ can be broken down into two cases:
\begin{align}\label{eq:plasticweightupdateti}
\mathbf{P}(t+1) = \begin{cases} \mathbf{P}(t), & \quad \text{correct response} \\ \mathbf{P}(t) - \eta y\mathbf{x}^\top, & \quad \text{incorrect response}. \end{cases}
\end{align}
Note that this model updates its plastic weights $\mathbf{P}(t)$ only after incorrect responses and therefore self-regulates learning in a trivial way after successfully learning a specific trial type (see Appendix \ref{appendixsubsec:masspresentation}).

Since the pre-synaptic activity is always the input $\mathbf{x}$, and the scalar readout is the post-synaptic activity, we can group together terms in the plastic weights $\mathbf{P}(t)$ based on trial types and write a decomposition
\begin{equation}\label{eq:plasticweightdecompti}
\mathbf{P}(t) = \sum_{XY}\rho_{XY}(t)\mathbf{x}_{XY}^T\, ,
\end{equation}
where $\rho_{XY}(t)$ are coefficients obtained through the 3-factor learning rule for the trial consisting of the ordered pair of items $X$ and $Y$. If $y_{XY}(t) = (\mathbf{\bar{1}}+\mathbf{P}(t))\mathbf{x}_{XY}$ we can further expand $\rho_{XY}(t)$ as
\begin{equation}\label{eq:rhodecompti}
\rho_{XY}(t) = -\eta\sum_{i=0}^{t-1}y_{XY}(i)\mathbb{1}_{\{XY \text{ shown at time }i \text{ and model incorrect}\}}\, ,
\end{equation}
where $\mathbb{1}_{\{\bm{\cdot}\}}$ is the indicator function. The following theorem shows the absolute convergence of $\lim_{t \to \infty}\rho_{XY}(t)$ and, hence, the convergence of the plastic weights $\mathbf{P}(t)$ in the limit. The required conditions are the existence of some decision boundary (not necessarily the limit of the plastic weights) that can correctly classify training trials and a small enough value for $\eta$.

\begin{theorem}\label{thm:ticonvergence}
    Assume the network in Eq. \ref{eq:minimaltinetwork} and update rule in Eq. \ref{eq:plasticweightupdateti}. Let $\mathbf{x}_1$, $\mathbf{x}_2$, \ldots, $\mathbf{x}_N$ be $N$ items in $\{-1,+1\}^d$. For a trial $\mathbf{x}_{IJ}$, let $s(\mathbf{x}_{IJ})$ be the sign of a correct model response for trial type $IJ$. Suppose there exists a weight vector $\mathbf{w}^* \in \mathbb{R}^{1 \times 2d}$ such that for each trial type $IJ$ we have
    \begin{equation*}
    \mathbf{w}^*s(\mathbf{x}_{IJ})\mathbf{x}_{IJ} > 0\,.
    \end{equation*}
    Let $\mathbf{Q} = \{\mathbf{q}_t\}_{t=0}^\infty$ be a sequence of trials with inputs $\mathbf{q}_t$. If $\eta \leq \frac{1}{d}$ as in Eq. \ref{eq:plasticweightupdateti}, then $\mathbf{P}(t)$ converges as $t \to \infty$.
\end{theorem}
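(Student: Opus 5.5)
The plan is to treat Eq.~\ref{eq:plasticweightupdateti} as a perceptron whose step size is proportional to the magnitude of the erroneous output, and to show that the total length of all weight increments is finite. Convergence of $\mathbf{P}(t)$ then follows from a Cauchy/absolute-summability argument, and with it convergence of each $\rho_{XY}(t)$ in Eq.~\ref{eq:rhodecompti}.

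Write $\mathbf{w}(t) = \mathbf{\bar{1}} + \mathbf{P}(t)$, so that $y_t = \mathbf{w}(t)\mathbf{q}_t$. On a correct trial nothing changes. On an incorrect trial, $\mathbf{w}(t+1) = \mathbf{w}(t) - \eta y_t \mathbf{q}_t^\top$. Two observations drive the proof.
\begin{itemize}
\item Incorrectness means $y_t$ has sign opposite to $s(\mathbf{q}_t)$, or $y_t = 0$, in which case the update is trivial. Hence $-\eta y_t = \eta |y_t|\, s(\mathbf{q}_t)$, and the update is a perceptron step $\eta|y_t|\, s(\mathbf{q}_t)\mathbf{q}_t^\top$.
\item Since $\mathbf{q}_t \in \{-1,+1\}^{2d}$, we have $\|\mathbf{q}_t\|^2 = 2d$.
\end{itemize}

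The first step is monotonicity of the norm. Expanding the squared norm on an incorrect trial gives
\begin{equation*}
\|\mathbf{w}(t+1)\|^2 = \|\mathbf{w}(t)\|^2 - 2\eta y_t^2 + 2d\,\eta^2 y_t^2 = \|\mathbf{w}(t)\|^2 - 2\eta y_t^2(1-\eta d).
\end{equation*}
By the hypothesis $\eta \le 1/d$, this is at most $\|\mathbf{w}(t)\|^2$. Hence $\|\mathbf{w}(t)\| \le \|\mathbf{w}(0)\| = \|\mathbf{\bar{1}}\| = \sqrt{2d}$ for all $t$. This is exactly where the condition $\eta \le 1/d$ enters.

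The second step uses the separating vector $\mathbf{w}^*$. Let $\gamma = \min_{IJ} \mathbf{w}^* s(\mathbf{x}_{IJ})\mathbf{x}_{IJ}$. There are finitely many trial types, so $\gamma > 0$. On an incorrect trial,
\begin{equation*}
\mathbf{w}^*\mathbf{w}(t+1)^\top - \mathbf{w}^*\mathbf{w}(t)^\top = \eta |y_t|\, \mathbf{w}^* s(\mathbf{q}_t)\mathbf{q}_t \ge \eta\gamma |y_t|.
\end{equation*}
On a correct trial this difference is $0$. So the sequence $\mathbf{w}^*\mathbf{w}(t)^\top$ is nondecreasing. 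By Cauchy--Schwarz and the first step, it is bounded above by $\|\mathbf{w}^*\|\sqrt{2d}$. Telescoping then gives
\begin{equation*}
\sum_{t \text{ incorrect}} \eta|y_t| \le \frac{\|\mathbf{w}^*\|\sqrt{2d} - \mathbf{w}^*\mathbf{\bar{1}}^\top}{\gamma} < \infty.
\end{equation*}

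The final step concludes absolute convergence. Each increment satisfies $\|\mathbf{P}(t+1)-\mathbf{P}(t)\| = \eta|y_t|\sqrt{2d}$, so $\sum_t \|\mathbf{P}(t+1)-\mathbf{P}(t)\| < \infty$. Therefore $\mathbf{P}(t)$ is Cauchy and converges. The same bound shows that each $\rho_{XY}(t)$ converges absolutely, since its increments are among the $\eta|y_t|$ terms.

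The main obstacle is that the usual perceptron argument bounds the number of mistakes, and that bound relies on a fixed step size. Here the step $\eta|y_t|$ may shrink, so no mistake bound is available. The fix is to bound the \emph{total step length} $\sum\eta|y_t|$ directly via the $\mathbf{w}^*$-projection, which needs only that $\|\mathbf{w}(t)\|$ stays bounded. I expect the care needed to lie in checking that the norm identity gives nonincrease even in the boundary case $\eta = 1/d$, where the output flips sign exactly; it does, since there the decrement term is simply $0$.
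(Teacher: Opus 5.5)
Your proof is correct, but it organizes the argument differently from the paper. The paper uses a single Lyapunov function, $V(t)=\|\mathbf{w}(t)-\mathbf{w}^*\|^2$. Expanding it on an error trial gives $V(t+1)=V(t)-2\eta y_t^2(1-\eta d)+2\eta y_t\mathbf{w}^*\mathbf{q}_t$. The paper then drops the middle term, which is where $\eta\le 1/d$ enters, and uses that $y_t$ and $s(\mathbf{q}_t)$ have opposite signs to get $V(t+1)\le V(t)-2\eta\gamma|y_t|$, hence $\sum_t\eta|y_t|\le V(0)/(2\gamma)$.

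You instead split the same expansion into two monotone quantities, in the style of Novikoff's perceptron proof:
\begin{itemize}
\item the norm $\|\mathbf{w}(t)\|$, which is nonincreasing whenever $\eta\le 1/d$ regardless of separability (the same computation the paper uses in Theorem~\ref{thm:plasticweightconvergencetiweak});
\item the projection $\mathbf{w}^*\mathbf{w}(t)^\top$, which gains at least $\eta\gamma|y_t|$ on each error.
\end{itemize}

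The paper's route is shorter and yields a stronger geometric by-product: the distance from $\mathbf{w}(t)$ to every separating vector is nonincreasing (Fej\'er monotonicity). Your route cleanly separates the role of the step-size condition from the role of separability. Your bound $(\sqrt{2d}\,\|\mathbf{w}^*\|-\mathbf{w}^*\mathbf{\bar{1}}^\top)/\gamma$ is also invariant under rescaling $\mathbf{w}^*$. In fact, apply the paper's estimate to a rescaled separator $c\,\mathbf{w}^*$, giving $\|\mathbf{\bar{1}}-c\,\mathbf{w}^*\|^2/(2c\gamma)$, and minimize over $c>0$. The minimum is attained at $c=\sqrt{2d}/\|\mathbf{w}^*\|$ and equals exactly your bound. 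So you get the sharpest form of the paper's estimate automatically, expressed through the normalized margin $\gamma/\|\mathbf{w}^*\|$.
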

\begin{proof}
Let $\mathbf{w}(t) = \mathbf{\bar{1}}+\mathbf{P}(t)$, and define
\[V(t) = \|\mathbf{w}(t)-\mathbf{w}^*\|^2\,.\]
If the model's response $y_{\mathbf{q}_t}$ is incorrect for trial $\mathbf{q}_t$, then by Eq. \ref{eq:plasticweightupdateti}, we have
\begin{align*}
V(t+1) & = \|\mathbf{w}(t+1)-\mathbf{w}^*\|^2 \\
& = \|\mathbf{w}(t) - \eta y_{\mathbf{q}_t}\mathbf{q}_t^\top-\mathbf{w}^*\|^2 \\
& = V(t) - 2\eta y_{\mathbf{q}_t}(\mathbf{w}(t)-\mathbf{w}^*)\mathbf{q}_t + 2\eta^2 y_{\mathbf{q}_t}^2d \\
& =  V(t) - 2\eta s(\mathbf{q}_t)y_{\mathbf{q}_t}(\mathbf{w}(t)-\mathbf{w}^*)s(\mathbf{q}_t)\mathbf{q}_t + 2\eta^2 y_{\mathbf{q}_t}^2d \\
& = V(t) - 2\eta s(\mathbf{q}_t)y_{\mathbf{q}_t}\mathbf{w}(t)s(\mathbf{q}_t)\mathbf{q}_t +2\eta s(\mathbf{q}_t)y_{\mathbf{q}_t}\mathbf{w}^*s(\mathbf{q}_t)\mathbf{q}_t + 2\eta^2 y_{\mathbf{q}_t}^2d \\
& = V(t) - 2\eta y_{\mathbf{q}_t}^2  + 2\eta s(\mathbf{q}_t)y_{\mathbf{q}_t}\mathbf{w}^*s(\mathbf{q}_t)\mathbf{q}_t + 2\eta^2 y_{\mathbf{q}_t}^2d \\
& = V(t) - 2\eta y_{\mathbf{q}_t}^2(1 - \eta d)  + 2\eta s(\mathbf{q}_t)y_{\mathbf{q}_t}\mathbf{w}^*s(\mathbf{q}_t)\mathbf{q}_t\, .
\end{align*}
Since $\eta \leq \frac{1}{d}$, we have the inequality
\begin{align*}
V(t+1) & \leq V(t) + 2\eta s(\mathbf{q}_t)y_{\mathbf{q}_t}\mathbf{w}^*s(\mathbf{q}_t)\mathbf{q}_t \\
& \leq V(t) - 2\eta |y_{\mathbf{q}_t}|\mathbf{w}^*s(\mathbf{q}_t)\mathbf{q}_t \, ,
\end{align*}
where we use the fact $y_{\mathbf{q}_t}$ and $s(\mathbf{q}_t)$ have opposite signs. Furthermore, since there are only finitely many trial types $IJ$, there exists a $\gamma$ such that $\mathbf{w}^*s(\mathbf{x}_{IJ})\mathbf{x}_{IJ} \geq \gamma >  0$, so we have the inequality
\begin{equation*}
V(t+1) \leq V(t) - 2\eta |y_{\mathbf{q}_t}|\gamma \,.
\end{equation*}
Rearranging terms, we get
\begin{equation*}
V(t+1) + 2\eta |y_{\mathbf{q}_t}|\gamma\leq V(t)\,.
\end{equation*}
If we start from $t=0$ and chain the inequalities, we obtain
\begin{equation*}
2\eta\gamma\sum_{t=1}^\infty |y_{\mathbf{q}_t}|\mathbb{1}_{\{\text{incorrect response for }\mathbf{q}_t\}}\leq V(0)\,.
\end{equation*}
After dividing both sides of the inequality by $2\gamma$, we obtain an absolute convergence for the sum of all coefficients in Eq. \ref{eq:plasticweightdecompti} (expanding the $\rho_{XY}(t)$ using update rule Eq. \ref{eq:plasticweightupdateti}). Thus, the plastic weights converge in the limit.
\end{proof}

Next, we provide bounds for the probability of the existence of $\mathbf{w}^*$ in Theorem \ref{thm:ticonvergence}, starting with the lower bound by calculating the probability of linearly independent trial vectors. We remark that the existence of $\mathbf{w}^*$ implies the existence of a vector $\mathbf{u} \in \mathbb{R}^{1 \times d}$ such that \[\mathbf{v}^* = \begin{bmatrix} \mathbf{u}^\top \\ -\mathbf{u}^\top \end{bmatrix}\] also satisfies the condition of Theorem \ref{thm:ticonvergence}. If \[\mathbf{w}^* = \begin{bmatrix} \mathbf{a}^\top \\ \mathbf{b}^\top \end{bmatrix}\] for some $\mathbf{a}$ and $\mathbf{b}$ in $\mathbb{R}^{1 \times d}$, then it is straightforward to verify \[\mathbf{u} = \mathbf{a} - \mathbf{b}\] gives such a $\mathbf{v}^*$.

\begin{proposition}\label{prop:boundbelowprobseparation}
Let $P(N,d)$ be the probability of the existence of $\mathbf{w}^*$ in Theorem \ref{thm:ticonvergence} for $N$ items in $d$ dimensions. Then
\[P(N,d) \geq \prod_{i=2}^{N}(1-2^{i-2-d})\, .\]
\end{proposition}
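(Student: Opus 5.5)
Throughout, the items $\mathbf{x}_1,\ldots,\mathbf{x}_N$ are independent and uniform on $\{-1,+1\}^d$, as in the TI task. The plan is to exhibit a simple sufficient condition for the existence of $\mathbf{w}^*$ and then bound the probability of that condition from below by a sequential conditioning argument.

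\textbf{Step 1: reduce to a statement about items.} By the remark preceding the statement, it suffices to find $\mathbf{u}\in\mathbb{R}^{1\times d}$ with $\mathbf{v}^*=[\mathbf{u},\,-\mathbf{u}]$. For such a vector, $\mathbf{v}^*\mathbf{x}_{IJ}=\mathbf{u}(\mathbf{x}_I-\mathbf{x}_J)$. So the condition of Theorem \ref{thm:ticonvergence} becomes a condition on item scores: there must be $\mathbf{u}$ whose scores $\mathbf{u}\mathbf{x}_I$ are ordered consistently with the ranking. This reduction works because the sign $s(\mathbf{x}_{IJ})$ depends only on which item ranks higher. Any strictly monotone choice of target scores works, for instance $\mathbf{u}\mathbf{x}_I = c_I$ with $c_I=N-\mathrm{rank}(I)$; strict monotonicity of the $c_I$ then makes every trial type strictly correctly classified.

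\textbf{Step 2: linear independence suffices.} If $\mathbf{x}_1,\ldots,\mathbf{x}_N$ are linearly independent in $\mathbb{R}^d$, the linear system $\mathbf{u}\mathbf{x}_I=c_I$ for $I=1,\ldots,N$ is solvable for arbitrary targets $c_I$, so $\mathbf{w}^*$ exists. Hence
\[P(N,d)\geq \Pr[\mathbf{x}_1,\ldots,\mathbf{x}_N\text{ linearly independent}].\]
One could weaken the requirement to independence of the differences $\mathbf{x}_I-\mathbf{x}_J$, but plain independence already gives the stated product.

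\textbf{Step 3: bound the independence probability by conditioning.} Write
\[\Pr[\text{all independent}]=\prod_{i}\Pr[\mathbf{x}_i\notin\mathrm{span}(\mathbf{x}_1,\ldots,\mathbf{x}_{i-1})\mid \text{first } i-1 \text{ independent}].\]
The key fact is a standard bound: a $k$-dimensional subspace of $\mathbb{R}^d$ contains at most $2^k$ points of $\{-1,+1\}^d$. To see this, choose $k$ coordinates on which the projection of the subspace is injective. A point of the subspace is then determined by those $k$ coordinates, each of which takes only two values on the cube.

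With $k=i-1$, the $i$-th vector lies in the span with probability at most $2^{i-1-d}$. Symmetry of the cube sharpens this: the span is closed under negation, so $\pm$ pairs either both lie in it or both lie outside. The first vector is a nonzero sign vector, hence always independent, which gives the factor for $i=1$ equal to $1$. Reindexing the remaining factors then has to produce exactly $1-2^{i-2-d}$ for $i=2,\ldots,N$.

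\textbf{Main obstacle: matching the exponent.} The crude count gives $1-2^{i-1-d}$ for the $i$-th vector, which is off by one from the stated bound. To recover $2^{i-2-d}$ I would first try a refined count. For a subspace spanned by $k$ sign vectors (with $k<d$), I would aim to show it contains at most $2^{k-1}$ cube points, using the negation symmetry together with the injective-coordinates argument. If that fails, the fallback is to exploit the structure of the trial vectors, i.e.\ independence of the $N-1$ differences. Adjacent differences $\mathbf{x}_I-\mathbf{x}_{I+1}$ have entries in $\{0,\pm2\}$, and that gives an extra halving per step. Either route closes the argument once the per-step factor $1-2^{i-2-d}$ is established, after which multiplying the conditional probabilities yields the claim.
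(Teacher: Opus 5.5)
There is a genuine gap, and it lies in Step~2 rather than in the counting. Linear independence of the $N$ items is strictly stronger than what you need, and its probability is genuinely below the target, so no sharper count can close the off-by-one. Your remark that plain independence already gives the stated product is false. Already for $N=2$, $\mathbf{x}_1,\mathbf{x}_2$ are linearly dependent whenever $\mathbf{x}_2=\pm\mathbf{x}_1$, so $\Pr[\text{independent}]=1-2^{1-d}<1-2^{-d}$. The target $1-2^{-d}$ is in fact the exact value of $P(2,d)$, since only $\mathbf{x}_2=\mathbf{x}_1$ is fatal. For $N=d+1$, linear independence is impossible while the stated product is positive.

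Your route (a) also fails on its own terms. The span of a single sign vector $\mathbf{v}$ contains the two cube points $\pm\mathbf{v}$, not $2^{0}$. A $k$-dimensional subspace such as $\{(a_1,\ldots,a_k,a_k,\ldots,a_k)\}$ contains $2^k$ cube points. Negation symmetry pairs points up but does not reduce their number.

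The missing idea is that the classifier only ever sees differences $\mathbf{u}(\mathbf{x}_I-\mathbf{x}_J)$. So you only need to prescribe scores up to a common additive shift. Index items by rank and solve $\mathbf{u}(\mathbf{x}_i-\mathbf{x}_{i+1})=1$ for $i<N$; by telescoping, this gives $\mathbf{u}(\mathbf{x}_i-\mathbf{x}_j)=j-i>0$ for all $i<j$. This requires only that the $N-1$ adjacent differences be linearly independent, i.e.\ that the items be affinely independent.

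In the conditioning step, $\mathbf{x}_i$ then fails only if it lies in the affine hull of $\mathbf{x}_1,\ldots,\mathbf{x}_{i-1}$, which has dimension $i-2$. Your injective-coordinate argument applies verbatim to affine subspaces and gives at most $2^{i-2}$ cube points. That yields the factor $1-2^{i-2-d}$ for $i=2,\ldots,N$, which is exactly the paper's proof.

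Your fallback (b) names the right condition, but the justification you give is not the mechanism. The saving does not come from the entries of the differences lying in $\{0,\pm2\}$. It comes from the affine hull of $i-1$ points being one dimension smaller than their linear span.
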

\begin{proof}
Let $\mathbf{x}_1$, $\mathbf{x}_2$, \ldots, $\mathbf{x}_N$ be $N$ item representations such that the ranking of $\mathbf{x}_i$ is greater than the ranking of $\mathbf{x}_j$ if and only if $i < j$. Further suppose that the $d \times (N-1)$ matrix of adjacent item differences $\mathbf{D}$ whose $i$th column is $\mathbf{x}_i-\mathbf{x}_{i+1}$ has full rank $N-1$. Then there exists a vector $\mathbf{u}$ in $\mathbb{R}^d$ such that \[\mathbf{D}^\top\mathbf{u} = \mathbf{1}_{N-1}\, ,\] i.e. for each $i < N$, we have \[\mathbf{u}^\top(\mathbf{x}_i - \mathbf{x}_{i+1}) = 1\, .\] Thus, the vector \[\mathbf{w}^* = \begin{bmatrix} \mathbf{u}^\top \\ -\mathbf{u}^\top\end{bmatrix}\] satisfies the constrain in Theorem \ref{thm:ticonvergence}, and we only need to compute the probability of the existence of item representations such that $\mathbf{D}$ has full rank to obtain a lower bound.

Since the vectors $\mathbf{x}_i - \mathbf{x}_1$ for $i > 1$ are linear combinations of the columns of $\mathbf{D}$, the matrix $\mathbf{D}$ has full rank $N-1$ if and only if the $\mathbf{x}_i$ are affinely independent. For $k-1$ affinely independent vectors, the dimension of the affine subspace generated by those vectors is $k-2$. By Lemma \ref{lem:affinehypercube}, this subspace contains at most $2^{k-2}$ possible item representations. Thus, the probability that introducing a new item vector $\mathbf{x}_k$ affinely independent from the first $k-1$ vectors is \[P(\text{$\mathbf{x}_k$ affinely independent $|$ $\mathbf{x}_1$, $\mathbf{x}_2$, $\ldots$, $\mathbf{x}_{k-1}$}) \geq 1-\frac{2^{k-2}}{2^d} = 1 - 2^{k-2-d} \, .\] Taking the product over all conditional probabilities gives the result.
\end{proof}

The lower bound in Proposition \ref{prop:boundbelowprobseparation} remains close to $1.0$ for approximately $N < d-10$ and then decays rapidly to $0$. However, simulations with the idealized network show successful task performance for approximately $N < \frac{5}{4}d$, beyond which networks empirically fail to learn TI. In order to provide a better assessment of the capacity of the idealized network, we provide an upper bound on $P(N, d)$.

\begin{proposition}\label{prop:boundaboveprobseparation}
With the notation from Proposition \ref{prop:boundbelowprobseparation}, we have \[P(N,d) \leq \min\left\{ 1, \frac{ 2\displaystyle\sum_{i=0}^{d-1} \binom{\binom N2-1}{i} }{ N! } \right\}\, .\]
\end{proposition}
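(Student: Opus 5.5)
The plan is to turn the existence of $\mathbf{w}^*$ into a statement about which total orders of the items are induced by linear functionals, and then bound the number of such orders by counting regions of a central hyperplane arrangement (Cover's function-counting bound). By the remark before Proposition \ref{prop:boundbelowprobseparation}, we may assume $\mathbf{w}^* = \begin{bmatrix}\mathbf{u}^\top & -\mathbf{u}^\top\end{bmatrix}$. For a trial $IJ$ the condition $\mathbf{w}^* s(\mathbf{x}_{IJ})\mathbf{x}_{IJ} > 0$ then reads $s(\mathbf{x}_{IJ})\,\mathbf{u}^\top(\mathbf{x}_I - \mathbf{x}_J) > 0$. Taking the trial types to range over all ordered pairs $I \neq J$, as in the test phase, $\mathbf{w}^*$ exists if and only if some $\mathbf{u}\in\mathbb{R}^d$ satisfies
\[\mathbf{u}^\top\mathbf{x}_1 > \mathbf{u}^\top\mathbf{x}_2 > \cdots > \mathbf{u}^\top\mathbf{x}_N .\]
In words, the hidden ranking must be realized by projecting the items onto a single direction.

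Next I use symmetry. The item vectors are i.i.d.\ uniform on $\{-1,+1\}^d$, and the ranking is assigned independently of them. The joint law of $(\mathbf{x}_1,\ldots,\mathbf{x}_N)$ is therefore invariant under permuting indices, so every total order $\pi$ of the items has the same probability of being realized by some $\mathbf{u}$. Hence
\[P(N,d) = \frac{1}{N!}\,\mathbb{E}\big[\#\{\pi : \pi \text{ realized by some } \mathbf{u}\}\big].\]
It then suffices to give a deterministic bound on the number of realizable orders that holds for every configuration of items.

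For fixed items, consider the $m=\binom N2$ hyperplanes $H_{ij} = \{\mathbf{u} : \mathbf{u}^\top(\mathbf{x}_i-\mathbf{x}_j)=0\}$ through the origin in $\mathbb{R}^d$. A strict order is realized exactly when $\mathbf{u}$ lies in an open region of the complement of $\bigcup H_{ij}$, and distinct realized orders lie in distinct regions. Two degenerate cases need no separate treatment:
\begin{itemize}
\item if $\mathbf{x}_i=\mathbf{x}_j$ for some $i\neq j$, no strict order is realizable at all;
\item coincident hyperplanes only reduce the count of distinct hyperplanes, and the bound below is monotone in the number of hyperplanes.
\end{itemize}
So the number of realizable orders is at most the number of regions of a central arrangement of at most $m$ hyperplanes in $\mathbb{R}^d$. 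The classical bound (Cover's function-counting theorem, or induction via deletion--restriction $C(m,d)=C(m-1,d)+C(m-1,d-1)$) gives at most $2\sum_{i=0}^{d-1}\binom{m-1}{i}$ regions. Combining this with the symmetry identity, and with the trivial bound $P(N,d)\le 1$, yields the proposition.

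The main obstacle is applying the region-count bound cleanly to a non-generic arrangement, since our hyperplanes have many linear dependencies (for example $\mathbf{x}_i-\mathbf{x}_k = (\mathbf{x}_i-\mathbf{x}_j)+(\mathbf{x}_j-\mathbf{x}_k)$). I would handle this with the recursion argument: adding one central hyperplane $H$ increases the number of regions by the number of regions that the previous arrangement induces on $H\cong\mathbb{R}^{d-1}$. Induction then shows that the general-position count $2\sum_{i=0}^{d-1}\binom{m-1}{i}$ is an upper bound for arbitrary central arrangements.
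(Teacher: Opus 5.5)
Your proposal is correct and follows essentially the same route as the paper: reduce to an antisymmetric $\begin{bmatrix}\mathbf{u}^\top & -\mathbf{u}^\top\end{bmatrix}$, count the chambers of the central arrangement of the $\binom N2$ hyperplanes $\mathbf{u}^\top(\mathbf{x}_i-\mathbf{x}_j)=0$ via the recursion $R(M,d)\le R(M-1,d)+R(M-1,d-1)$ (the paper's Lemma~\ref{lem:hyperplanechambercount}), and divide by $N!$ because every ordering is equally likely. Your exchangeability identity $P(N,d)=\frac{1}{N!}\mathbb{E}[\#\{\text{realizable orders}\}]$ and your handling of coincident items and coincident hyperplanes make precise what the paper states more loosely (``the probability of a chamber containing weight vectors that correctly order the items is at most $1/N!$'').
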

\begin{proof}
There are $\binom{N}{2}$ unordered pairs of $N$ items each determining a hyperplane via the equation \[\mathbf{u}^\top(\mathbf{x}_i-\mathbf{x}_j) = 0\, ,\] where $\mathbf{u}$ is the vector of variables. As long as $\mathbf{u}$ is away from these $\binom{N}{2}$ hyperplanes, the weight vector $\mathbf{w} = \begin{bmatrix}\mathbf{u}^\top \\ -\mathbf{u}^\top\end{bmatrix}$ determines a strict ordering among all items. By Lemma \ref{lem:hyperplanechambercount}, there are at most \[2\displaystyle\sum_{i=0}^{d-1} \binom{\binom N2-1}{i}\] regions in $\mathbb{R}^d$ determined by the hyperplanes. From our earlier remark, any weight vector $\mathbf{w}^*$ from Theorem \ref{thm:ticonvergence}, if it exists, induces a vector of type $\mathbf{w}$ that also satisfies the condition of $\mathbf{w}^*$. Therefore, the probability $\mathbf{w}^*$ exists is equivalent to the probability of a $\mathbf{w}^*$ of the form given by $\mathbf{w}$. Any such $\mathbf{w}$ lying away from all decision boundaries must be contained in one of these chambers, and there is at most one chamber containing weight vectors giving the correct order. Since any ordering is equally likely for any sampled set of items, the probability of a chamber containing weight vectors that correctly order the items is at most $1/N!$, which gives the desired bound.
\end{proof}

For a fixed $d$, the numerator of the upper bound in Proposition \ref{prop:boundaboveprobseparation} is a polynomial in $N$ of $O(N^{2d-2})$, which grows much more slowly compared to the denominator $N!$ as $N \to \infty$. For example, if $d=200$, the upper bound rapidly decays to $0$ when $N > 266$, with $P(270, 200)$ already bounded from above by $0.0001$.

In summary, we have shown that the plastic weights converge in the limit for $\eta \leq 1/d$ under the condition that a weight vector $\mathbf{w}^*$ exists. We then showed that the probability a given set of $N$ items in $d$ dimensions is bounded by
\begin{equation*}
    \prod_{i=2}^{N}(1-2^{i-2-d}) \leq P(N,d) \leq \min\left\{ 1, \frac{ 2\displaystyle\sum_{i=0}^{d-1} \binom{\binom N2-1}{i} }{ N! } \right\} \, ,
\end{equation*}
which tightly bounds $P(N,d)$ close to $1$ for $N < d-10$ and close to $0$ for $N > (1+\epsilon)d$ for some small $\epsilon$. We note that a crude analysis of the upper bound shows $\epsilon \to 0$ as $d \to \infty$. Thus, we have shown that for most $N < d$, the plastic weights converge for any sequence of trials with probability almost $1$. However, our analysis only assumes such a $\mathbf{w}^*$ exists and not that $\mathbf{P}(t)$ converges to $\mathbf{w}^*$. We note that there is a proof of convergence without the need for $\mathbf{w}^*$ (Theorem \ref{thm:plasticweightconvergencetiweak}), but it leaves out the crucial case $\eta = 1/d$, which is required for our theoretical results on successful TI performance in the next section.

\paragraph{Learning.} The previous section analyzed plastic weight behavior in the limit without reference to TI performance. Here, we discuss when the converged plastic weights contribute to successful TI. The analysis amounts to the consideration of two cases: 1) trial sequences with infinite model errors, and 2) trial sequences with finite errors. In case 1), the model converges to $y=0$ for trial types with infinite errors, while, in case 2), the model converges to $y \neq 0$ with probability $1$. Furthermore, in case 2), if each trial type is shown infinitely often, then $y$ necessarily converges to the correct response.

In case 1), the argument is straightforward: since $y_{IJ}$ converges in the limit for any trial type $IJ$ and there are infinite errors, we have infinitely many $t$ such that \[|y_{IJ}(t+1) - y_{IJ}(t)| = 2\eta d|y_{IJ}(t)|\, ,\] which converges to $0$ by the definition of a convergent limit. For case 2), we require an additional assumption that $\eta = 1/d$, where a weight update are a trial type $IJ$ becomes a reflection across the decision hyperplane for trial $IJ$:
\begin{align}\label{eq:householderreflectti}
\mathbf{\bar{1}} + \mathbf{P}(t+1) & = \mathbf{\bar{1}} + \mathbf{P}(t) - \eta y_{IJ}\mathbf{x}_{IJ}^{\top} \nonumber \\
& = (\mathbf{\bar{1}} + \mathbf{P}(t)) - \eta (\mathbf{\bar{1}} + P(t))\mathbf{x}_{IJ}\mathbf{x}_{IJ}^{\top} \nonumber \\
& = (\mathbf{\bar{1}} + \mathbf{P}(t))(\mathbf{I}_{2d} - \eta\mathbf{x}_{IJ}\mathbf{x}_{IJ}^{\top}) \nonumber \\
& = (\mathbf{\bar{1}} + \mathbf{P}(t))\left(\mathbf{I}_{2d} - \frac{2}{2d}\mathbf{x}_{IJ}\mathbf{x}_{IJ}^{\top}\right) \nonumber \\
& = (\mathbf{\bar{1}} + \mathbf{P}(t))\left(\mathbf{I}_{2d} - 2\frac{\mathbf{x}_{IJ}\mathbf{x}_{IJ}^{\top}}{\|\mathbf{x}_{IJ}\|^2}\right) \, ,
\end{align}
where $\mathbf{I}_{2d}$ is the $2d \times 2d$ identity matrix. This mitigates the decay of the magnitude of the error-induced weight updates since reflections preserve the distance to the decision hyperplane and forces finitely many nonzero ($y \neq 0$) errors under convergence. However, weight updates of one trial type $IJ$ can cause the response of another correlated trial type $KL$ to land directly on the decision hyperplane, which would prevent the model from learning a correct response to $KL$.

\begin{theorem}\label{thm:finitelymanyerrors}
Fix the number of items $N$ and their dimension $d$. If $\eta = 1/d$, then:
\begin{enumerate}
    \item[1)] If a sampled set of items has a $\mathbf{w}^*$ as in Theorem \ref{thm:ticonvergence}, then any infinite sequence of trials induces finitely many nonzero error updates.
    \item[2)] Let $\epsilon > 0$, and let $\xi \sim \text{Unif}([-\epsilon, \epsilon]^{2d})$ be a random perturbation of $\mathbf{\bar{1}}$.  The probability that the perturbation $\xi$ satisfies the following is $1$:
    \begin{itemize}
    \item Let $\mathbf{w}_{\xi}(t) = \mathbf{\bar{1}}+\xi + \mathbf{P}(t)$ be the perturbed weights, where the plastic weight rule in Eq. \ref{eq:plasticweightupdateti} treats $\xi$ as part of the fixed weights. For every set of item representations having a corresponding weight vector $\mathbf{w}^*$ from Theorem \ref{thm:ticonvergence} and any sequence of trials $\mathbf{Q} = \{\mathbf{q}_t\}_{t=0}^\infty$ where each trial type $IJ$ is presented infinitely often, the limiting weights $\lim_{t \to \infty} \mathbf{w}_{\xi}(t)$ converge, and we have
    \begin{equation*}
        \lim_{t \to \infty} |\mathbf{w}_{\xi}(t)\mathbf{x}_{IJ}| > 0\, .
    \end{equation*} 
    \end{itemize}
\end{enumerate}
\end{theorem}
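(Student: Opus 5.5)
The plan for part 1) is to combine the convergence of Theorem \ref{thm:ticonvergence} with the isometric nature of the update when $\eta = 1/d$. First note that the proof of Theorem \ref{thm:ticonvergence} never uses the specific value of the initial weights $\mathbf{w}(0)=\mathbf{\bar{1}}$, only that $V(0)$ is finite. Hence, under the existence of $\mathbf{w}^*$, the weights $\mathbf{w}(t)$ converge to some limit $\mathbf{w}_\infty$ for any initial vector, and in particular for the perturbed $\mathbf{\bar{1}}+\xi$ of part 2).

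By Eq. \ref{eq:householderreflectti}, every error update is right-multiplication by the Householder reflection $\mathbf{R}_{IJ}=\mathbf{I}_{2d}-\frac{1}{d}\mathbf{x}_{IJ}\mathbf{x}_{IJ}^\top$. Correct trials leave $\mathbf{w}$ unchanged. Let $S$ be the set of trial types that incur infinitely many nonzero error updates. Suppose $S \neq \emptyset$ and argue by contradiction.

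\begin{itemize}
\item \emph{Step 1: the limit lies on every hyperplane in $S$.} For $IJ\in S$, the change $|\mathbf{w}(t+1)\mathbf{x}_{IJ}-\mathbf{w}(t)\mathbf{x}_{IJ}| = 2|y_{IJ}(t)|$ at each error must tend to $0$, as in the case-1) argument. Hence $\mathbf{w}_\infty\mathbf{x}_{IJ}=0$.
\item \emph{Step 2: after some time, only reflections fixing $\mathbf{w}_\infty$ act.} Types outside $S$ make only finitely many nonzero updates, so there is a time $T$ after which every update is a reflection $\mathbf{R}_{IJ}$ with $IJ\in S$. Each such reflection fixes $\mathbf{w}_\infty$, because $\mathbf{w}_\infty\mathbf{x}_{IJ}=0$.
\item \emph{Step 3: the distance to the limit is frozen.} For $t\ge T$ we therefore have $\mathbf{w}(t+1)-\mathbf{w}_\infty=(\mathbf{w}(t)-\mathbf{w}_\infty)\mathbf{R}$ for some orthogonal $\mathbf{R}$ (or the identity). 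So $\|\mathbf{w}(t)-\mathbf{w}_\infty\|$ is constant for $t \ge T$. Since it tends to $0$, $\mathbf{w}(t)=\mathbf{w}_\infty$ for all $t\ge T$.
\item \emph{Step 4: contradiction.} Every subsequent error then has $y=\mathbf{w}_\infty\mathbf{x}_{IJ}=0$, which is a zero update. This contradicts $S\neq\emptyset$.
\end{itemize}

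This Step 1–3 argument is the crux. The potential-function bound alone only gives $\sum|y|<\infty$, not finiteness, so the isometry of the reflections is what closes the gap.

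For part 2), I would apply part 1) to the initial vector $\mathbf{\bar{1}}+\xi$. For each admissible item set and trial sequence, there are finitely many nonzero error updates. Hence
\[
\lim_{t\to\infty}\mathbf{w}_\xi(t)=(\mathbf{\bar{1}}+\xi)\mathbf{M}
\]
with $\mathbf{M}$ a finite product of reflections $\mathbf{R}_{IJ}$. Because $N$ and $d$ are fixed, there are finitely many item sets, hence finitely many vectors $\mathbf{x}_{IJ}$. The family $\mathcal{M}$ of all finite words in the corresponding reflections is therefore countable.

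The bad event is that $(\mathbf{\bar{1}}+\xi)\mathbf{M}\mathbf{x}_{IJ}=0$ for some $\mathbf{M}\in\mathcal{M}$ and some $\mathbf{x}_{IJ}$. Each such equation is an affine hyperplane in $\xi$, since its normal $\mathbf{M}\mathbf{x}_{IJ}\neq 0$ because $\mathbf{M}$ is invertible. Each hyperplane has Lebesgue measure zero, so the countable union does too. Thus with probability $1$ the uniform $\xi$ avoids all of them.

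For such $\xi$, given any admissible item set and any sequence in which every type appears infinitely often, the limit is $(\mathbf{\bar{1}}+\xi)\mathbf{M}$ for one of these $\mathbf{M}$. It therefore satisfies $|\lim_t \mathbf{w}_\xi(t)\mathbf{x}_{IJ}|>0$ for every $IJ$. A subtle point to check is that this single null set works simultaneously for all item sets and all sequences. Quantifying over the countable set $\mathcal{M}$ rather than over sequences is what makes that work.
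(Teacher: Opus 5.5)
Your proposal is correct and takes essentially the same route as the paper. In part 1) you combine the initialization-independent convergence from Theorem \ref{thm:ticonvergence} with the fact that, at $\eta=1/d$, every update is a Householder reflection that fixes the limit. The paper measures distance to the intersection $L$ of the infinitely-visited decision hyperplanes rather than to the limit point $\mathbf{w}_\infty$, which is an immaterial difference; your version makes the final ``zero updates thereafter'' step slightly more explicit. In part 2) you use the same measure-zero argument as the paper: a countable union of hyperplanes $(\mathbf{\bar{1}}+\xi)\mathbf{M}\mathbf{x}_{IJ}=0$, taken over finite reflection words $\mathbf{M}$.
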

\begin{proof}
For 1), since $\eta \leq 1/d$, the weights \[\mathbf{w}(t) = \mathbf{\bar{1}}+\mathbf{P}(t)\] converge by Theorem \ref{thm:ticonvergence}. Suppose towards a contradiction that there are infinitely many nonzero errors in some sequence of trials. Let $E$ be the set of all trial types that have infinitely many nonzero errors. For each trial type $IJ$ in E, let $H_{IJ}$ be its decision hyperplane in weight space, and define \[L := \bigcap_{IJ}H_{IJ}\, .\] For trial types not in $E$, the weights $\mathbf{w}(t)$ stop updating for those trial types after some finite time $T$. Thus, for $t > T$, the weight vector $\mathbf{w}(t)$ preserves its distance to $L$ and continues to reflect through the decision hyperplanes $H_{IJ}$. However, we have already shown infinitely many nonzero error updates implies $\mathbf{w}(t)$ must converge to a weight vector in $L$, and since the distance of $\mathbf{w}(t)$ to $L$ is constant after finite time $T$, this constant must be $0$. This contradicts the assumption of infinite nonzero errors.

To prove 2), we show the set of weight vectors $\mathbf{w}$ in $\mathbb{R}^{2d}$ lying on the decision boundary of any trial type after any finite sequence of trials has measure zero. In the following, we consider arbitrary weight initializations $\mathbf{w}_0 =\mathbf{w}(0)$, not only $\mathbf{w}_{\xi}(0)$. Each trial type defines a weight update given by a Householder reflection (Eq. \ref{eq:householderreflectti}) through the decision boundary for the trial type, which is independent of $\mathbf{w}(t)$. Thus, if $\mathbf{R}_{\tau}$ is the Householder reflection corresponding to trial $\tau$, the dynamics of any sequence of weight updates can be represented by a sequence of reflections (we treat $\mathbf{w}(t)$ as a row vector for notational convenience):
\begin{equation*}
    \mathbf{w}(t) = \mathbf{w}_0\mathbf{R}_{\tau_1}\mathbf{R}_{\tau_2}\cdots\mathbf{R}_{\tau_t} \, .
\end{equation*}
For simplicity and without loss of generality, we ignore trials where no weight update is applied. Let $\sigma = (\tau_{1}, \tau_{2}, \ldots, \tau_{t})$ be a sequence of reflections, and let \[\mathbf{Q}_{\sigma} = \mathbf{R}_{\tau_{t}}\cdots\mathbf{R}_{\tau_2}\mathbf{R}_{\tau_1}\, .\] For each trial type $IJ$, the weights $\mathbf{w}(t)$ lie on the decision boundary for $IJ$ if 
\begin{align*}
\langle \mathbf{w}(t), \mathbf{x}_{IJ} \rangle & = \langle \mathbf{w}_0\mathbf{Q}_{\sigma}^\top, \mathbf{x}_{IJ}\rangle \\
& = \langle \mathbf{w}_0, \mathbf{Q}_{\sigma}^\top \mathbf{x}_{IJ} \rangle \\
& = 0
\end{align*}
Since reflections are invertible, the matrix $\mathbf{Q}_{\sigma}^\top$ is invertible and $\mathbf{Q}_{\sigma}^\top\mathbf{x}_{IJ} \neq 0$. Thus \[\langle \mathbf{w}_0, \mathbf{Q}_{\sigma}^\top \mathbf{x}_{IJ} \rangle = 0\] defines a hyperplane in the weight space $\mathbb{R}^{2d}$ for initializations $\mathbf{w}_0$. Since there are
\begin{itemize}
\item finitely many possible collections of items,
\item finitely many trial types per collection of items,
\item and countably many finite sequences of reflections $\sigma$,
\end{itemize}
we have countably infinitely many hyperplanes where $\mathbf{w}(0)$ lies on the decision boundary of some trial type. A countably infinite set of measure zero sets has measure zero, hence, a uniformly randomly sampled $\mathbf{w}_0$ does not lie on the decision boundary of any possible trial type after any finite sequence of weight updates with probability $1$. Since $\xi$ is drawn uniformly from an $\epsilon$-hypercube around $\mathbf{\bar{1}}$, the same conclusion holds for $\mathbf{w}_{\xi}(0)$.

The convergence of $\mathbf{w}_{\xi}(t)$ follows from Theorem \ref{thm:ticonvergence}---the proof does not rely on the weight initialization. By part 1) (which also does not require any special weight initialization by the more general form of Theorem \ref{thm:ticonvergence}), the weight updates for $\mathbf{w}_{\xi}(t)$ cease after a finite number of trials, so the limit is nonzero outside a set of measure zero for the initialization $\mathbf{w}_{\xi, 0}$.

\end{proof}

It follows immediately from the update rule Eq. \ref{eq:plasticweightupdateti} the limiting weights for $\mathbf{w}_{\xi}$ in Theorem \ref{thm:finitelymanyerrors} 2) lie on the correct side of the decision boundary for every trial type as long as every trial trial type is presented infinitely often. Note that we have not made any claims about learning when $\mathbf{\xi} = \mathbf{0}$. Although $\epsilon$ can be made arbitrarily small, the limiting case when $\mathbf{\xi} = \mathbf{0}$ introduces algebraic identities that complicate extending the proof of Theorem \ref{thm:finitelymanyerrors} to the specialization $\mathbf{w}_0 = \mathbf{\bar{1}}$. In our simulations, we found that $\mathbf{w}(t) = \mathbf{\bar{1}} + \mathbf{P}(t)$ was rarely contained in the decision boundary of some trial type at the end of an episode. Nonetheless, this proof illustrates the essential learning behavior of our idealized network.

\paragraph{A representation of rank.} We now describe how the model represents rank. First, we will show that, in expectation over trial sequences, the learned rank representation is position-independent. Second, we show that when the network is presented with an item $I$ in a particular trial, it determines its rank by recalling previous trial outcomes only involving $I$ (in the same position as in the current trial). Further, the rank representation of $I$ is changed only when the model responds incorrectly to a trial containing item $I$.

Let \[\mathbf{w}(t) = \mathbf{\bar{1}} + \mathbf{P}(t)\] be the weights at time $t$. We can express $\mathbf{w}(t)$ as a concatenation of two vectors $\mathbf{a}(t)$ and $\mathbf{b}(t)$ in $\mathbb{R}^{d \times 1}$ acting on the individual items in a trial: \[\mathbf{w}(t) = \begin{bmatrix} \mathbf{a}(t)^\top & \mathbf{b}(t)^\top\end{bmatrix}\, .\] Then, averaging over the distribution of trial sequences, we show $\mathbb{E}[\mathbf{a}(t)] = -\mathbb{E}[\mathbf{b}(t)]$ by exploiting a symmetry in the distribution of sequences of trials. If we define \[\mathbf{w}(t)^{\dagger} = \begin{bmatrix} -\mathbf{b}(t)^\top & -\mathbf{a}(t)^\top\end{bmatrix}\, ,\] then $\mathbf{w}(t)\mathbf{x}_{IJ}$ is incorrect if and only if $\mathbf{w}(t)^{\dagger}\mathbf{x}_{JI}$ is incorrect. Furthermore, the weight updates on the incorrect responses mirror each other:
\begin{align*}
\mathbf{w}(t+1) & = \mathbf{w}(t) - \eta y_{IJ}(t)\mathbf{x}_{IJ}^\top \\
& = \begin{bmatrix} \mathbf{a}(t)^\top & \mathbf{b}(t)^\top\end{bmatrix} - \eta(\mathbf{a}(t)^\top\mathbf{x}_I + \mathbf{b}(t)^\top\mathbf{x}_J )\mathbf{x}_{IJ}^\top \, ,
\end{align*}
and, if $y_{JI}(t)^{\dagger} = \mathbf{w}(t)^{\dagger}\mathbf{x}_{JI}$, then

\begin{align*}
\mathbf{w}(t)^{\dagger} - \eta y_{JI}(t)^{\dagger}\mathbf{x}_{JI}^\top 
& =  \begin{bmatrix} -\mathbf{b}(t)^\top & -\mathbf{a}(t)^\top\end{bmatrix} - \eta(-\mathbf{a}(t)^\top\mathbf{x}_I -\mathbf{b}(t)^\top\mathbf{x}_J )\mathbf{x}_{JI}^\top \\
& = \mathbf{w}(t+1)^{\dagger}\, .
\end{align*}
Let $\mathbf{x}_{IJ}^{\dagger} = \mathbf{x}_{JI}$. Since $\mathbf{\bar{1}} = \mathbf{\bar{1}}^{\dagger}$, and we have a bijection between finite sequences of trials $\mathbf{Q} = \{\mathbf{q}_t\}_{t=0}^T$ and $\mathbf{Q}^{\dagger} = \{\mathbf{q}_t^{\dagger}\}_{t=0}^T$, we have \[\mathbb{E}[\mathbf{a}(t)] = -\mathbb{E}[\mathbf{b}(t)]\, .\] If $r_{I_L}(t) = \mathbf{a}(t)^\top\mathbf{x}_I$ and $r_{I_R}(t) = \mathbf{b}(t)^\top\mathbf{x}_I$, then 
\begin{align*}
\mathbb{E}[r_{I_L}(t)] & = \mathbb{E}[\mathbf{a}(t)]^\top\mathbf{x}_I \\
& = -\mathbb{E}[\mathbf{b}(t)]^\top\mathbf{x}_I \\
& = -\mathbb{E}[r_{I_R}(t)]\, ,
\end{align*}
so the magnitude of the rank of an item is independent of its the position in expectation. 

Now we consider the components of the representation. In the following, we only consider $r_{I_L}$, but the arguments are symmetric for $r_{I_R}$. Let $\mathbf{x}_{I0}$ be the input vector with item $I$ in the first position and the vector of zeros in the second position and vice versa for $\mathbf{x}_{0J}$. Then, using Eq. \ref{eq:plasticweightdecompti}, the readout for an item $I$ in the first position takes on a simple form (suppressing time $t$ notation)

\begin{align*}
\mathbf{\bar{1}}\mathbf{x}_{I0} + \mathbf{P}\mathbf{x}_{I0} & = c_I + \sum_{XY}\rho_{XY}\mathbf{x}_{XY}^T\mathbf{x}_{I0} \\
& = c_I + \sum_{Y}\rho_{IY}\mathbf{x}_{IY}^T\mathbf{x}_{I0} + \epsilon \\
& = c_I + d\sum_{Y}\rho_{IY} + \epsilon\, ,
\end{align*}
where $\epsilon$ is a small quantity (in expectation) measuring interference between inputs relative to their dimension. Thus, we have, approximately, 
\begin{equation}\label{eq:rankrepresentationti}
    r_{I_L}(t) \approx c_I + d\sum_{Y}\rho_{IY}(t) \, .
\end{equation}
Due to this observation and Eq. \ref{eq:plasticweightdecompti}, we see that updates only affect items presented in the trial responsible for the update; for a trial $IJ$ where the model responds incorrectly, we have the plastic weight update:
\begin{align*}
    \mathbf{P}(t+1) & = \mathbf{P}(t) - \eta y_{IJ}(t)\mathbf{x}_{IJ}^\top \\
    & = \left(\sum_{XY}\rho_{XY}(t)\mathbf{x}_{XY}^T\right) - \eta y_{IJ}(t)\mathbf{x}_{IJ}^\top \\
    & = \left(\sum_{XY \neq IJ}\rho_{XY}(t)\mathbf{x}_{XY}^T\right) + (\rho_{IJ}(t) - \eta y_{IJ}(t))\mathbf{x}_{IJ}^\top \, ,
\end{align*}
so, by Eq. \ref{eq:rankrepresentationti}, the rank representation for $I$ at time $t+1$ is
\begin{align*}
    r_{I_L}(t+1) & \approx c_I + d\sum_{Y}\rho_{IY}(t+1) \\
    & = c_I + \left(d\sum_{Y \neq J}\rho_{IY}(t)\right) + d\rho_{IY}(t+1) \\
    & = c_I + \left(d\sum_{Y \neq J}\rho_{IY}(t)\right) + d(\rho_{IY}(t) - \eta y_{IJ}(t)) \\
    & = c_I + \left(d\sum_{Y}\rho_{IY}(t)\right) - \eta dy_{IJ}(t) \\
    & \approx r_{I_L} - \eta dy_{IJ}(t)\, .
\end{align*}
By a similar calculation, we have $r_{K_L}(t+1) \approx r_{K_L}(t)$ for $K \neq I$.

\paragraph{An algorithm for TI.} We are now able to present an algorithm based on our analyses. First, we remark that the position-dependent behavior of the network can be removed if, for example, a fixed random projection without plastic weights but with an anti-correlated structure on the item-wise weights is inserted before the readout (e.g. a row-wise concatenation $\text{Concat}\left(\mathbf{U}, -\mathbf{U}\right)$ for some random matrix $\mathbf{U} \in \mathbb{R}^{2d \times d}$) combined with modifying $\mathbf{\bar{1}}$ to be simply a vector of ones without any negative values. However, we remain faithful to this position-dependent behavior in Algorithm \ref{algorithm:ti} as part of our aims outlined in Appendix \ref{appendixsec:autohebbjustification} and initialize rank vectors $\mathbf{R}_L$ and $\mathbf{R}_R$ corresponding to the left and right position rank representations, respectively. These are initialized from a common randomly sampled vector $\mathbf{R}$ (line 1), which can be viewed as the values $c_{X_i}$ for items $X_i$, almost directly, except we allow more flexibility in the choice of sampling distribution.

The rank difference $\rho$ in line 3 is exactly our idealized network response. Line 4 ensures updates only occur on incorrect trials. Our analysis above shows that the network only updates it rank representation for items present in the given trial, which manifests in lines 5 and 6. The precise update in our analysis for a trial $IJ$ was $\pm\eta dy_{IJ}(t)$, but the dimension $d$ in the algorithm is no longer relevant. A suitable learning rate $\eta$ can be found via a simple hyperparameter search. For example, if $M=8$ and $\mathbf{R}$ is initialized by uniformly sampling from $[-0.1, 0.1]$, then $\eta = 0.7$ will successfully perform TI.

\begin{figure}[t]
    \centering
    \includegraphics[width=0.5\linewidth]{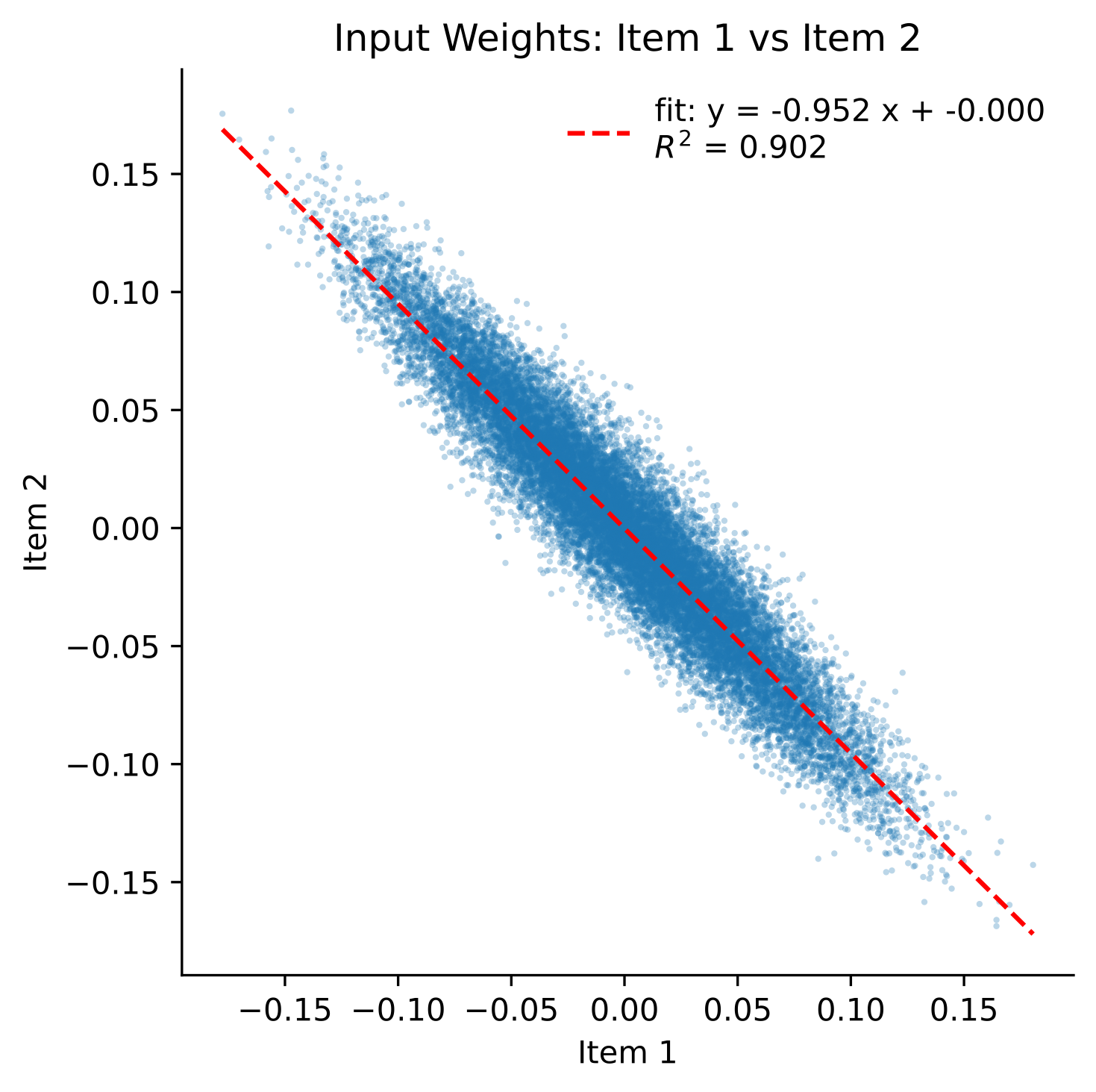}
    \caption{Anti-correlative relationship between item 1 and item 2 weights in the first layer of an AutoHebb TI model.}
    \label{appendixfig:tianticorrelatedweights}
\end{figure}

\begin{algorithm}[t]
\caption{TI Algorithm}
\begin{algorithmic}[1]
\Require Positive integer $M$ \Comment{Number of items} 
\Require $\eta > 0$ \Comment{Neuromodulator value for negative reward}
\Require Randomly initialize rank vector $\mathbf{R} \in \mathbb{R}^M$ \Comment{Innate rank representation}
\Require Episode $\mathcal{E} = \{(i_1, j_1), (i_2, j_2), \ldots, (i_T, j_T)\}$, \Statex \qquad\qquad\qquad\qquad\qquad where $|i_k-j_k| = 1$ and $1 \leq i_s,j_t \leq M$
\State $\mathbf{R}_L = \mathbf{R}_R = \mathbf{R}$ \Comment{Position-dependent ranks}
\For{$i,j$ in $\mathcal{E}$}
\State $\rho \gets \mathbf{R}_L[i] - \mathbf{R}_R[j]$ \Comment{Relative rank}
\If{($\rho \leq 0$ and $i > j$) or ($\rho > 0$ and $i < j$)} \Comment{Update on errors only}
\State $\mathbf{R}_L[i] \gets \mathbf{R}_L[i] - \eta\rho$
\State $\mathbf{R}_R[j] \gets \mathbf{R}_R[j] + \eta\rho$
\EndIf
\EndFor
\end{algorithmic}
\label{algorithm:ti}
\end{algorithm}

\subsection{A neural mechanism for TI-L}\label{appendixsubsec:neuralmechtil}

A successful model for TI-linking (TI-L, Section \ref{subsec:methodscontinual}),  must be able to modify its internal ranking of items absent in a given trial. Therefore, the toy network in Appendix \ref{appendixsubsec:neuralmechti} is insufficient, according to our algorithm. While we have not proved that any one-plastic-weight network with many hidden layers combined with a rich control network cannot perform TI-L, the mechanism outlined in Appendix \ref{appendixsubsec:neuralmechti} is consistent with the two-hidden layer networks with only one set of plastic weights we trained but did not report here. The AutoHebb models used our experimental results in Table \ref{table:zeroshotlinking} show successful TI-L performance with an additional set of plastic weights. Through ablating model components of these successful networks, we arrived at the toy network used throughout this subsection.

\paragraph{Main network.} Like the idealized network in Appendix \ref{appendixsubsec:neuralmechti}, we found that nonlinearities were unnecessary for successful TI-L. However, we could not obtain successful linking without at least two sets of plastic weights, which led us to a network with one hidden layer. Throughout our simulations, we observed that meta-trained AutoHebb models carry an anti-correlative structure between item-wise weights in the first set of meta-trainable weights in the main network, so the analogue of the subtraction between item representations in Appendix \ref{appendixsubsec:neuralmechti} occurs earlier in this network. We set $\mathbf{W} = \text{Concat}\left(\mathbf{U},-\mathbf{U}\right)$ (denoted $\begin{bmatrix}\mathbf{U} & -\mathbf{U} \end{bmatrix}$ for simplicity), a row-wise concatenation of a random matrix $\mathbf{U} \in \mathbb{R}^{2d \times d}$, and we obtain the following idealized AutoHebb network:

\begin{equation}\label{eq:tilarch}
y =(\mathbf{1}+\alpha_2\mathbf{P}_2(t))(\mathbf{W}+\alpha_1\mathbf{P}_1(t))\mathbf{x} \quad,
\end{equation}

where $\mathbf{1}$ is a $1 \times 2d$ matrix of $1$'s and $\alpha_1$ and $\alpha_2$ are positive scalars.

\paragraph{Control network.} The essential behavior of the control network for the meta-trained networks was observed to be similar to that in Appendix \ref{appendixsubsec:neuralmechti}. The control network for our idealized network in Eq. \ref{eq:tilarch} behaves exactly as in Appendix \ref{appendixsubsec:neuralmechti}; we fix a neuromodulator value $-\eta < 0$ for incorrect model responses but do not update plastic weights on correct responses.

\paragraph{Plastic weights.} 
As in the TI case, we will analyze the general behavior of the plastic weights throughout learning. We first discuss general convergence criteria and then analyze the interactions between the two sets of innate and plastic weights for learning and inference, which yields a four different interactions. Together, these four interactions yield a richer mechanism compared with the idealized TI network of Eq. \ref{eq:minimaltinetwork}. In the idealized TI-L network of Eq. \ref{eq:tilarch}, the first set of plastic weights store past item associations that are retrieved by inputs, as in the idealized TI network. However, the associative data stored are the hidden representations rather than the trial outcomes. These trial outcomes are now stored downstream in the second set of plastic weights, which are accessed by hidden states. With two successive plastic weights, the network produces a hidden state that can recall trial outcomes of associated items not present in the input. In other words, the network utilizes the ranks of all items associated with the input items to generate a choice, whereas the idealized TI network considers only the input items themselves.

Based on the idealized control network above, the update rules for $\mathbf{P}_1(t)$ and $\mathbf{P}_2(t)$ are as follows:
\begin{equation}\label{eq:tilplasticupdaterule}
\begin{aligned}
    \mathbf{P}_1(t+1) & = \begin{cases} \mathbf{P}_1(t), & \quad \text{correct response} \\ \mathbf{P}_1(t) - \eta \mathbf{h}\mathbf{x}^\top, & \quad \text{incorrect response} \end{cases} \\\\
    & \text{and} \\\\
    \mathbf{P}_2(t+1) & = \begin{cases} \mathbf{P}_2(t), & \quad \text{correct response} \\ \mathbf{P}_2(t) - \eta y\mathbf{h}^\top, & \quad \text{incorrect response}. \end{cases} \qquad \\
\end{aligned}
\end{equation}

If we absorb the $\alpha_i$ into $\eta$, we can view the network as having distinct neuromodulator values for incorrect response, $\lambda_1 = \alpha_1\eta$ and $\lambda_2 = \alpha_2\eta$. Similar to Theorem \ref{thm:plasticweightconvergencetiweak}, the plastic weights $\mathbf{P}_1$ and $\mathbf{P}_2$ converge as $t \to \infty$ when $\lambda_1 < 1/d$ and $\lambda_2$ is any positive value. While there is a similar analysis one can perform for the case of separating weight vector $\mathbf{w}^*$ when $\lambda_1 = 1/d$, we will show in our theoretical analysis for TI-L that the optimal value for $\lambda_1$ is less than $1/d$. Since we are not concerned with general TI behavior in this section, we omit this analysis.

\paragraph{Learning.} We now aim to understand how the idealized network utilizes both $\mathbf{P}_1$ and $\mathbf{P}_2$ to learn. We show the updates for $\mathbf{P}_1(t)$ enable the network to recall past trial outcomes in $\mathbf{P}_2(t)$ for items different from the inputs. We end with a theoretical statement that guarantees successful linking when each individual list is learned.

Like Eq. \ref{eq:plasticweightdecompti}, we can decompose each of the plastic weights:

\begin{equation}
    \mathbf{P}_1(t) = \sum_{IJ}\tilde{\mathbf{h}}_{IJ}(t)\mathbf{x}_{IJ}^\top \, ,
\end{equation}
where
\begin{align}\label{eq:tilhiddendecomp}
    \tilde{\mathbf{h}}_{IJ}(t) = -\eta\sum_{i=0}^{t-1}\mathbf{h}_{IJ}(i)\mathbb{1}_{\{IJ \text{ shown at time }i \text{ and model incorrect}\}}\, ,
\end{align}
and
\begin{equation}\label{eq:tilsecondplasticdecomp}
    \mathbf{P}_2(t) = \sum_{IJ}\mathbf{\beta}_{KL}(t)\mathbf{w}_{KL}^\top \, ,
\end{equation}
where $\mathbf{w}_{KL} = \mathbf{W}\mathbf{x}_{KL}$ and $\beta_{IJ}(t)$ is some time-evolving coefficient depending on the responses $y$.

Assuming orthogonality of item representations ($\mathbf{x}_I\cdot\mathbf{x}_J = 0$ for $I \neq J$) and an incorrect response for some trial $KL$, we get the following expression for $\mathbf{P}_1(t+1)$:
\begin{equation}\label{eq:tilp1updatedecomp}
\begin{aligned}
    \mathbf{P}_1(t+1) & = \mathbf{P}_1(t) -\eta\mathbf{h}_{KL}(t)\mathbf{x}_{KL}^\top \\
    & = \sum_{IJ}\tilde{\mathbf{h}}_{IJ}(t)\mathbf{x}_{IJ}^\top -\eta\mathbf{h}_{KL}(t)\mathbf{x}_{KL}^\top \\
    & = \sum_{IJ}\tilde{\mathbf{h}}_{IJ}(t)\mathbf{x}_{IJ}^\top -\eta(\mathbf{W}\mathbf{x}_{KL} + \alpha_1\mathbf{P}_1(t)\mathbf{x}_{KL})\mathbf{x}_{KL}^\top \\
    & = \sum_{IJ}\tilde{\mathbf{h}}_{IJ}(t)\mathbf{x}_{IJ}^\top -\eta\mathbf{W}\mathbf{x}_{KL}\mathbf{x}_{KL}^\top - \lambda_1d\left(\sum_{J}\tilde{\mathbf{h}}_{KJ}(t) + \sum_{I}\tilde{\mathbf{h}}_{IL}(t)\right)\mathbf{x}_{KL}^\top \\
    & = \sum_{IJ \neq KL}\tilde{\mathbf{h}}_{IJ}(t)\mathbf{x}_{IJ}^\top + \left(\tilde{\mathbf{h}}_{KL}(t) -\eta\mathbf{W}\mathbf{x}_{KL} - \lambda_1d\left(\sum_{J}\tilde{\mathbf{h}}_{KJ}(t) + \sum_{I}\tilde{\mathbf{h}}_{IL}(t)\right)\right)\mathbf{x}_{KL}^\top \, .
\end{aligned}
\end{equation}
Thus, only the vector coefficient for $\mathbf{x}_{KL}^\top$ is updated. The term $-\eta\mathbf{W}\mathbf{x}_{KL}$ contributes a fixed innate vector update, while \[-\lambda_1d\left(\sum_{J}\tilde{\mathbf{h}}_{KJ}(t) + \sum_{I}\tilde{\mathbf{h}}_{IL}(t)\right)\] contributes a collection of item associations stored in the plastic weights in the form of $\tilde{\mathbf{h}}_{IJ}(t)$, where at least one of $I = K$ or $J = L$. Note that the $\tilde{\mathbf{h}}_{IJ}(t)$ themselves consist of past hidden states $\mathbf{h}_{IJ}(i)$ by Eq. \ref{eq:tilhiddendecomp}. With sufficient training examples and by a recursive argument, the $\mathbf{h}_{KL}(t)$ can have nontrivial dot product with $\mathbf{w}_{IJ}$ for $I \neq K$ and $J \neq L$. Through $\mathbf{P}_2(t)$ and its decomposition given in Eq. \ref{eq:tilsecondplasticdecomp}, this memory structure enables the network to recall past trial outcomes whose inputs consist of items distinct from both $K$ and $L$.

Under idealized assumptions about the learned representations of each individual list, we show that the network is guaranteed to link both lists and form the correct order among all items after seeing a single linking pair.

\begin{theorem}\label{thm:tilguarantee}
     Let $\mathbf{x}_1$, \ldots, $\mathbf{x}_{2N}$ be $2N$ items in $\{-1,+1\}^d$. Assume the network \[f_t(\mathbf{x}_I, \mathbf{x}_J) = (\mathbf{1}+\alpha_2\mathbf{P}_2(t))(\mathbf{W}+\alpha_1\mathbf{P}_1(t))\mathbf{x}_{IJ}\] in Eq. \ref{eq:tilarch} and update rule in Eq. \ref{eq:tilplasticupdaterule}. Let $\delta \in [0,1]$, and suppose for some time $T$ the following conditions hold:
\begin{enumerate}
    \item[(1)] (Sufficiently uncorrelated inputs.) For $I \neq J$, we have $|\mathbf x_I^\top\mathbf x_J| \leq d\delta$.
    \item[(2)] (Perfectly symmetric within-list TI.) If we set \[\mathbf{A}_T:=\mathbf W+\alpha_1\mathbf P_1(T)=[\mathbf{A}_L\;\mathbf{A}_R]\,, \quad \mathbf{h}_i:=\mathbf{A}_L\mathbf x_i\,, \quad r_i:=f_T(\mathbf x_i,\mathbf 0)\,, \] then we have:
    \begin{itemize}
        \item $\mathbf{A}_R\mathbf x_i=-\mathbf{h}_i$ for all $i$;
        \item $r_i = r_{i+N}$ for $i \leq N$;
        \item $r_i < r_{i+1}$ for $i < N$.
    \end{itemize}
    \item[(3)] (Learning rates in right range to link and preserve within-list ranks.) Let $\Delta =r_N-r_1$ and $g =\min_{i<N}(r_{i+1}-r_i)$. Set \[D = \|\mathbf{h}_N-\mathbf{h}_{N+1}\|\, , \text{ and}
    \quad
    K_h = \max_{i<N}\frac{\max\{\|\mathbf{h}_{i+1}-\mathbf{h}_i\|,\|\mathbf{h}_{i+N+1}-\mathbf{h}_{i+N}\|\}}{r_{i+1}-r_i}\,.\] The effective learning rates $\lambda_1 = \alpha_1\eta$ and $\lambda_2 = \alpha_2\eta$ satisfy \[0 \leq \lambda_1<(2d)^{-1}\, , \quad \frac{1}{D^2} < \lambda_2 < \frac{g+2d\lambda_1\Delta\delta}{g\Delta D K_h+2d\lambda_1\Delta\delta D^2}\, .
\]
\end{enumerate}
Then, after one update on either $(\mathbf x_N,\mathbf x_{N+1})$ or $(\mathbf x_{N+1},\mathbf x_N)$, we have \[f_{T+1}(\mathbf x_i,\mathbf x_j) < 0 < f_{T+1}(\mathbf x_j,\mathbf x_i)\] for every $1 \leq i < j \leq 2N$.
\end{theorem}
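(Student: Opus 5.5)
The plan is to compute the single update explicitly and then check the sign of $f_{T+1}(\mathbf x_i,\mathbf x_j)$ pair by pair. Write $\mathbf v:=\mathbf 1+\alpha_2\mathbf P_2(T)$. Condition (2) gives $\mathbf A_T\mathbf x_{ij}=\mathbf h_i-\mathbf h_j$, hence $f_T(\mathbf x_i,\mathbf x_j)=r_i-r_j$ with $r_i=\mathbf v\mathbf h_i$.

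For the linking pair $(\mathbf x_N,\mathbf x_{N+1})$ this gives $f_T=r_N-r_{N+1}=r_N-r_1=\Delta>0$. That response is incorrect, so the idealized control network triggers the update (\ref{eq:tilplasticupdaterule}) with $y=\Delta$ and hidden state $\mathbf u:=\mathbf h_N-\mathbf h_{N+1}$. The reversed pair gives $y=-\Delta$ and $-\mathbf u$; since both factors flip sign, the update is identical, so one case suffices.

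After absorbing the $\alpha$'s into $\lambda_1,\lambda_2$ (keeping the update of the readout vector in row form), the updated quantities are
\begin{equation*}
\mathbf A_{T+1}\mathbf x_{ij}=\mathbf h_i-\mathbf h_j+c_{ij}\mathbf u,\qquad c_{ij}:=-\lambda_1(\mathbf x_N^\top\mathbf x_i+\mathbf x_{N+1}^\top\mathbf x_j),\qquad \mathbf v'=\mathbf v-\lambda_2\Delta\,\mathbf u^\top .
\end{equation*}
Expanding the product and using $\mathbf v\mathbf u=\Delta$ and $\|\mathbf u\|=D$ yields the master identity
\begin{equation*}
f_{T+1}(\mathbf x_i,\mathbf x_j)=(r_i-r_j)+c_{ij}\Delta(1-\lambda_2D^2)-\lambda_2\Delta\,\mathbf u^\top(\mathbf h_i-\mathbf h_j).
\end{equation*}
All remaining steps are sign estimates on this expression. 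Since $f_{T+1}$ is also antisymmetric up to the $c_{ij}$ term, the reversed inequality $f_{T+1}(\mathbf x_j,\mathbf x_i)>0$ is handled in parallel.

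\textbf{Bounding the ingredients.} By condition (1), $c_{ij}=-2d\lambda_1$ when $(i,j)=(N,N+1)$. When exactly one index coincides, $|c_{ij}|\le\lambda_1 d(1+\delta)$, and otherwise $|c_{ij}|\le 2d\lambda_1\delta$. Because $\lambda_2D^2>1$, the factor $(1-\lambda_2D^2)$ is negative, so the middle term has a controlled sign whenever $c_{ij}$ does.

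For the last term, telescope $\mathbf h_i-\mathbf h_j$ along adjacent items within each list. Each step is bounded by $K_h(r_{k+1}-r_k)$ using the definition of $K_h$, and Cauchy--Schwarz against $\mathbf u$ gives $|\mathbf u^\top(\mathbf h_i-\mathbf h_j)|\le DK_h|r_i-r_j|$ for within-list pairs.

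\textbf{Case analysis.} There are three cases.
\begin{enumerate}
\item[(a)] The linking pair itself gives $f_{T+1}=\Delta(1-2d\lambda_1)(1-\lambda_2D^2)<0$, using $\lambda_1<(2d)^{-1}$ and $\lambda_2>D^{-2}$.
\item[(b)] For within-list pairs with $i<j$, we have $r_i-r_j\le -g$, and this margin must dominate the two perturbation terms. These are bounded by $2d\lambda_1\delta\Delta(\lambda_2D^2-1)$ and $\lambda_2\Delta DK_h(r_j-r_i)$. Rearranging gives exactly the upper bound on $\lambda_2$ in condition (3); I expect the denominator $g\Delta DK_h+2d\lambda_1\Delta\delta D^2$ to appear here.
\item[(c)] For cross-list pairs $i\le N<j$, the term $r_i-r_j=r_i-r_{j-N}$ may be positive, up to $\Delta$. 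The sign must therefore come from the last term. Decompose
\begin{equation*}
\mathbf h_i-\mathbf h_j=(\mathbf h_i-\mathbf h_N)+\mathbf u+(\mathbf h_{N+1}-\mathbf h_j),
\end{equation*}
so the last term contributes $-\lambda_2\Delta D^2$ plus corrections that the telescoping bound controls.
\end{enumerate}

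\textbf{Main obstacle.} I expect the cross-list case to be the crux. One must show that $\lambda_2\Delta D^2$ beats $(r_i-r_j)+\lambda_2\Delta DK_h(\text{within-list gaps})$ together with the $c_{ij}$ term, using only the two-sided window for $\lambda_2$. I would first try to exploit the symmetry $r_i=r_{i+N}$. This pairs the first-list gap $r_N-r_i$ with the second-list gap $r_{j}-r_{N+1}$, so that the positive part $r_i-r_{j-N}$ is cancelled against the corrections. After that, the lower bound $\lambda_2>D^{-2}$ should suffice, possibly combined with the same inequality that was used in case (b).
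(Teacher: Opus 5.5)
Your route is the paper's. The master identity is Eq.~(\ref{eq:perfectlinkingpostupdate}) written out: the paper's $\widetilde r_i-\widetilde r_j+\lambda_1\Delta H\,\mathbf q^\top\mathbf x_{ij}$, with $\widetilde r_k=\mathbf v'\mathbf h_k$ and $H=\lambda_2D^2-1$, so your middle term is $-c_{ij}\Delta H$. Your case (b) is the paper's within-list step; the upper bound on $\lambda_2$ is equivalent to $gc>2d\lambda_1\Delta\delta H$ with $c:=1-\lambda_2\Delta DK_h$.

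\textbf{The cross-list case is not proved.} You flag it as the crux but never close it. The mechanism you propose, cancelling the positive part of $r_i-r_j$ via $r_i=r_{i+N}$, is not what makes it work. The missing observation is that the case (b) inequality already forces $c>0$, because its right-hand side $2d\lambda_1\Delta\delta H$ is nonnegative. In other words, the readout update keeps $\widetilde r$ increasing within each list. Set $G:=(r_N-r_i)+(r_j-r_{N+1})\ge 0$, so that $r_i-r_j=\Delta-G$. Combining your decomposition with the telescoping bound on its two outer pieces gives
\[
f_{T+1}(\mathbf x_i,\mathbf x_j)\le \Delta-G-\lambda_2\Delta D^2+\lambda_2\Delta DK_hG-c_{ij}\Delta H=-\Delta H-cG-c_{ij}\Delta H\le-\Delta H(1-2d\lambda_1)-cG<0 .
\]
This uses only the crude bound $|c_{ij}|\le\lambda_1\|\mathbf q\|\,\|\mathbf x_{ij}\|=2d\lambda_1$. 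The reverse order gives $f_{T+1}(\mathbf x_j,\mathbf x_i)\ge\Delta H(1-2d\lambda_1)+cG>0$ in the same way.

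So the within-list gaps $G$ absorb the $K_h$ corrections, not the other way round, and the sign comes from $\Delta-\lambda_2\Delta D^2=-\Delta H$. This is exactly the paper's argument: the cross-list gaps of $\widetilde r$ are at least the linking gap $\Delta H$, and the first-layer term is at most $2d\lambda_1\Delta H$. Your case (a) is just the instance $G=0$.

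\textbf{The reversed linking pair does not give an identical update.} For $(\mathbf x_{N+1},\mathbf x_N)$ the hidden state and output both flip sign, so the $\mathbf P_2$ update coincides. However, the presynaptic factor of the $\mathbf P_1$ update is $\mathbf x_{(N+1)N}=\text{Concat}(\mathbf x_{N+1},\mathbf x_N)$, not $-\mathbf q$. The first-layer update is therefore $+\lambda_1\mathbf u\,\mathbf x_{(N+1)N}^\top$, and
\[
c_{ij}=\lambda_1\bigl(\mathbf x_{N+1}^\top\mathbf x_i+\mathbf x_N^\top\mathbf x_j\bigr).
\]
For example, $c_{N,N+1}=2\lambda_1\mathbf x_N^\top\mathbf x_{N+1}$ instead of $-2d\lambda_1$, so your formula in (a) changes. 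The conclusion still holds in this case:
\begin{itemize}
\item The cross-list bound above uses only $|c_{ij}|\le 2d\lambda_1$.
\item In the within-list case, the single possible overlap ($i=N+1$ or $j=N$) moves to the $(i,j)$ ordering. There $c_{ij}\ge\lambda_1 d(1-\delta)\ge 0$, which again has the favorable sign.
\end{itemize}
This is what the paper means by ``symmetric,'' but it has to be checked, not deduced from an identical update.
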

\begin{proof}
The proof proceeds in two steps. First, we analyze the effect of the update on the readout and show that the concatenated ranking due to only this readout update is correct. However, this part of the proof does not incorporate the update from the first layer. In the second step, we analyze the effect of the update on the first layer and ensure it does not interfere with the readout-updated ranking.

We provide some notation to start. Let $\mathbf{b}_T =\mathbf 1+\alpha_2\mathbf{P}_2(T)$ and $\mathbf{z}=\mathbf{h}_N-\mathbf{h}_{N+1}$. By assumption (2), we have \[\mathbf{A}_T\mathbf{x}_{IJ} = \mathbf{h}_I -\mathbf{h}_J\, ,\] and 
\[\mathbf{b}_T\mathbf{h}_I=r_I\, .\] In particular, $\|\mathbf{z}\|=D$ and
$\mathbf{b}_T\mathbf{z}=\Delta$. Let $\mathbf{q}$ denote the concatenated input of the presented linking pair, and assume that the linking pair is
$(\mathbf x_N,\mathbf x_{N+1})$. The proof for reverse case is symmetric. The pre-update hidden activity and output are $\mathbf{z}$ and $\Delta$, respectively. By assumption (2), the model response is incorrect, and the readout update is
\[\mathbf{b}_{T+1}=\mathbf{b}_T-\lambda_2\Delta\mathbf z^\top\,.\]

We first examine the effect of the readout update on the original hidden representations. We will show that the readout-updated scores $\widetilde{r}_I=\mathbf{b}_{T+1}\mathbf{h}_I$ have the desired concatenated ordering. For the linking pair, we have
\begin{align*}
\widetilde{r}_{N+1}-\widetilde{r}_N
&=-\mathbf{b}_{T+1}\mathbf{z}\\
&=-\Delta+\lambda_2\Delta D^2\\
&=\Delta(\lambda_2D^2-1).
\end{align*}
Thus the readout update reverses the linking pair precisely when $\lambda_2D^2>1$. Write $H:=\lambda_2D^2-1$, so that $\Delta H$ is the resulting gap between the lists' boundary items.

Next, consider adjacent items $I<J$ in either list. By Cauchy--Schwarz and the definition of $K_h$,
\begin{align*}
\widetilde{r}_J-\widetilde{r}_I
&=(r_J-r_I)-\lambda_2\Delta\mathbf{z}^\top(\mathbf{h}_J-\mathbf{h}_I)\\
&\ge (r_J-r_I)-\lambda_2\Delta D\|\mathbf{h}_J-\mathbf{h}_I\|\\
&\ge (1-\lambda_2\Delta D K_h)(r_J-r_I).
\end{align*}
We therefore write $c:=1-\lambda_2\Delta D K_h$ for the guaranteed fraction of each within-list gap retained by the readout update. Assumption (3) gives $H>0$ and \[gc>2d\lambda_1\Delta\delta H\ge0.\]
In particular, $c>0$, so the $\widetilde{r}_I$ preserve both within-list orders and separate their boundary items. Their within-list gaps are at least $gc$, and their cross-list gaps are at least $\Delta H$. It remains to show that the first-layer update cannot change these inequalities.

Since $\mathbf{b}_{T+1}\mathbf{z}=-\Delta H$, the exact post-update output is
\begin{equation}\label{eq:perfectlinkingpostupdate}
\begin{aligned}
f_{T+1}(\mathbf{x}_I,\mathbf{x}_J)
&=\mathbf{b}_{T+1}(A_T-\lambda_1\mathbf{z}\mathbf{q}^\top)\mathbf{x}_{IJ}\\
&=\widetilde{r}_I-\widetilde{r}_J+\lambda_1\Delta H\,\mathbf{q}^\top\mathbf{x}_{IJ}.
\end{aligned}
\end{equation}
Thus the first-layer update contributes an additional term that we must show cannot change the readout-updated rank inequalities. The analysis is split between 2 cases: within-list trials and non-linking cross-list trials.
\begin{itemize}
\item (Within-list) First consider $I<J$ in the same list. Then assumption (1) gives
\begin{align*}
|\mathbf{q}^\top\mathbf{x}_{IJ}|&
=|\mathbf{x}_N^\top\mathbf{x}_I+\mathbf{x}_{N+1}^\top\mathbf{x}_J| \\
& \leq 2d\delta \, .
\end{align*}
For $\mathbf{x}_{JI}$, one item may overlap, so we obtain a lower bound 
\begin{align*}
    \mathbf{q}^\top\mathbf{x}_{JI} & =\mathbf{x}_N^\top\mathbf{x}_J+\mathbf{x}_{N+1}^\top\mathbf{x}_I \\
    & \geq -2d\delta \, .
\end{align*}
Since $|\widetilde{r}_{I}-\widetilde{r}_J| \geq gc$ for within-list items, these bounds together with Eq. \ref{eq:perfectlinkingpostupdate} give
\begin{align*}
f_{T+1}(\mathbf{x}_I,\mathbf{x}_J)
&\leq-gc+2d\lambda_1\Delta\delta H<0,\\
f_{T+1}(\mathbf{x}_J,\mathbf{x}_I)
&\geq gc-2d\lambda_1\Delta\delta H>0.
\end{align*}
Thus every within-list comparison remains correct.

\item (Cross-list) Next consider $I\le N<J$. As shown above, we have  $|\widetilde{r}_I-\widetilde{r}_J| \geq \Delta H$. For cross-list trial types, the overlap can include the presented linking pair itself, so we instead use the upper bound
\[
|\mathbf{q}^\top\mathbf{x}_{IJ}|
\le\|\mathbf{q}\|\,\|\mathbf{x}_{IJ}\|=2d,
\]
which also holds for $\mathbf{x}_{IJ}$. This inequality together with Eq. \ref{eq:perfectlinkingpostupdate} yields
\begin{align*}
f_{T+1}(\mathbf{x}_I,\mathbf{x}_J)
&\leq -\Delta H + 2d\lambda_1\Delta H = -\Delta H(1-2d\lambda_1)<0\,,\\
f_{T+1}(\mathbf{x}_J,\mathbf{x}_I) & \geq \Delta H - 2d\lambda_1\Delta H = \Delta H(1-2d\lambda_1) > 0\, .
\end{align*}
The last inequalities follow from $H>0$ and $\lambda_1<(2d)^{-1}$ (assumption (3)).
\end{itemize}
Thus, a fully correct concatenated rank ordering is established.
\end{proof}
This theorem shows that given a sufficiently good within-list TI learner and the right learning rates, the model can list-link. Empirically, we have seen that on average, the rank representations of items in each list satisfy assumption (2), so assumption (3) is where meta-training or hyperparameter tuning is necessary. Note that $D$ must be greater than $\Delta K_h$ in order have a non-empty interval for $\lambda_2$. We leave the relaxing of the assumptions in the proof to future work.

\paragraph{An algorithm for TI-L.}
Algorithm \ref{algorithm:til} abstracts the memory-recall structure described above. We can think of $\mathbf{P}_1(t)$ as encoding a position-dependent hash table (line 1), whose keys are items and whose values are "scaled" items, which are keys for accessing rank representations (line 2). The scalar determines how strongly to consider the rank of that item for its response. As in Algorithm \ref{algorithm:ti}, the $\mathbf{R}_L$ and $\mathbf{R}_R$ vectors represent the position-dependent ranks. If the difference between two ranks is incorrect (line 5), then we update the memories following our analysis in Eq. \ref{eq:tilp1updatedecomp} (lines 6-8). Line 9 updates the readout, and lines 10-11 are simply using an abstracted version of the idealized network to store the updated the position-dependent ranks.

\begin{algorithm}[t]
\caption{TI-L Algorithm}
\begin{algorithmic}[1]
\Require Number of items $N$
\Require Randomly initialized rank vector
         $\mathbf R\in\mathbb R^N$
\Require Learning rates $\lambda_1,\lambda_2>0$
\Require Episode $\mathcal{E} = \{(i_1, j_1), (i_2, j_2), \ldots, (i_T, j_T)\}$, \Statex \qquad\qquad\qquad\qquad\qquad where $|i_k-j_k| = 1$ and $1 \leq i_s,j_t \leq N$

\State $\mathbf M_L=\mathbf M_R=\mathbf 0$
\State $\mathbf R_L=\mathbf R_R=\mathbf R$
\Statex

\For{$(i,j)$ in $\mathcal E$}

    \State $\rho\gets\mathbf R_L[i]-\mathbf R_R[j]$

    \If{($\rho \leq 0$ and $i > j$) or ($\rho > 0$ and $i < j$)}
        \Statex
        \State $\mathbf a\gets
        \mathbf e_i-\mathbf e_j+
        \mathbf M_L[i,:]^\top+
        \mathbf M_R[j,:]^\top$
        \Statex
        \State $\mathbf M_L[i,:]\gets
        \mathbf M_L[i,:]-\lambda_1\mathbf a^\top$ \Comment{Update $\mathbf{P}_1$}

        \State $\mathbf M_R[j,:]\gets
        \mathbf M_R[j,:]-\lambda_1\mathbf a^\top$
        \Statex
        \State $\mathbf R\gets
        \mathbf R-\lambda_2\rho\mathbf a$ \Comment{Update $\mathbf{P}_2$}
        \Statex
        \State $\mathbf R_L\gets
        (I+\mathbf M_L)\mathbf R$ \Comment{Update ranks}

        \State $\mathbf R_R\gets
        (I-\mathbf M_R)\mathbf R$

    \EndIf
\EndFor

\end{algorithmic}
\label{algorithm:til}
\end{algorithm}

\section{Additional emergent learning-generalization abilities ("meta-test" behaviors) }\label{appendixsec:autohebbextraresults}
In this section, we report additional results indicating that AutoHebb networks have emergent learning and generalization abilities, i.e.\ abilities not explicitly optimized during meta-learning. These abilities were assessed by "meta-testing" models (both AutoHebb and other models):\ that is, presenting models with lifetime experiences (new tasks, consisting of systematically different sequences of trials) that differ from those presented to the model during meta-learning or hyperparameter tuning. As follows we report results testing three representative models (OML, ANML, and 1-neuromodulator AutoHebb) to evaluate three such emergent abilities.

\subsection{Length generalization in TI}\label{appendixsubsec:lengthgen_ti}
In humans and animals, the ability to perform TI is thought to be based on an abstractable ability to perform serial ordering that extends beyond a specific number of items, notwithstanding limitations due solely to memory capacity. In our model setup, meta-learning (or hyperparameter tuning) was conducted only on episodes that presented 8 items; thus it remains an open question whether models can `generalize' their ability to perform TI to larger numbers of items. In testing three representative models (OML, ANML, and AutoHebb), we found that AutoHebb showed an exceptional ability to generalize to larger numbers of items, despite the ability of both OML and ANML to perform TI adequately. The results in this section are reported for training phases consisting of $3(N-1)$ trials (N is number of items), where the $2(N-1)$ different training trial types (covering all `adjacent' item pairs, e.g.\ 14 item pairs for 8-item TI) are presented in random order, followed by an additional $N-1$ randomly sampled training trials.

\begin{figure}[!htbp]
    \centering
    \includegraphics[width=1.0\linewidth]{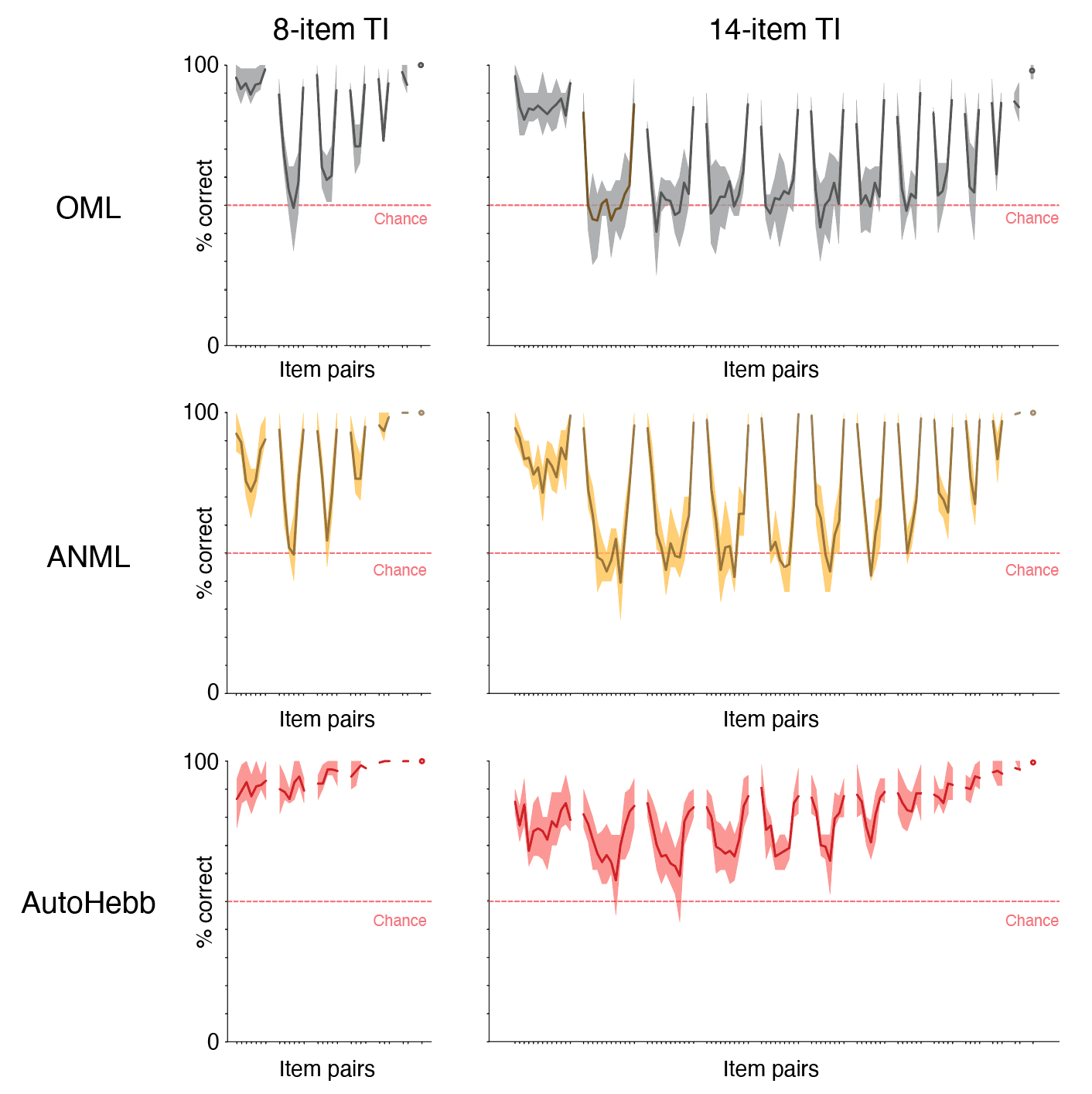}
    \caption{\textbf{TI length generalization with 14 items.} Plots follow the format based on Figure \ref{fig:ti_task}, where item pairs (on x-axis) are grouped and organized from left to right with increasing symbolic distance. Each model was meta-trained on 8 items (left) and subsequently evaluated (after meta-training) on 14 items (right). Note that ANML and OML show near-chance performance on many inferential trial types (item pairs) except for item pairs with larger symbolic distances. AutoHebb is the only method with above chance performance on all inferential trial types.}
    \label{supplementfig:length_gen14}
\end{figure}
\FloatBarrier

\begin{figure}[!htbp]
    \centering
    \includegraphics[width=1.0\linewidth]{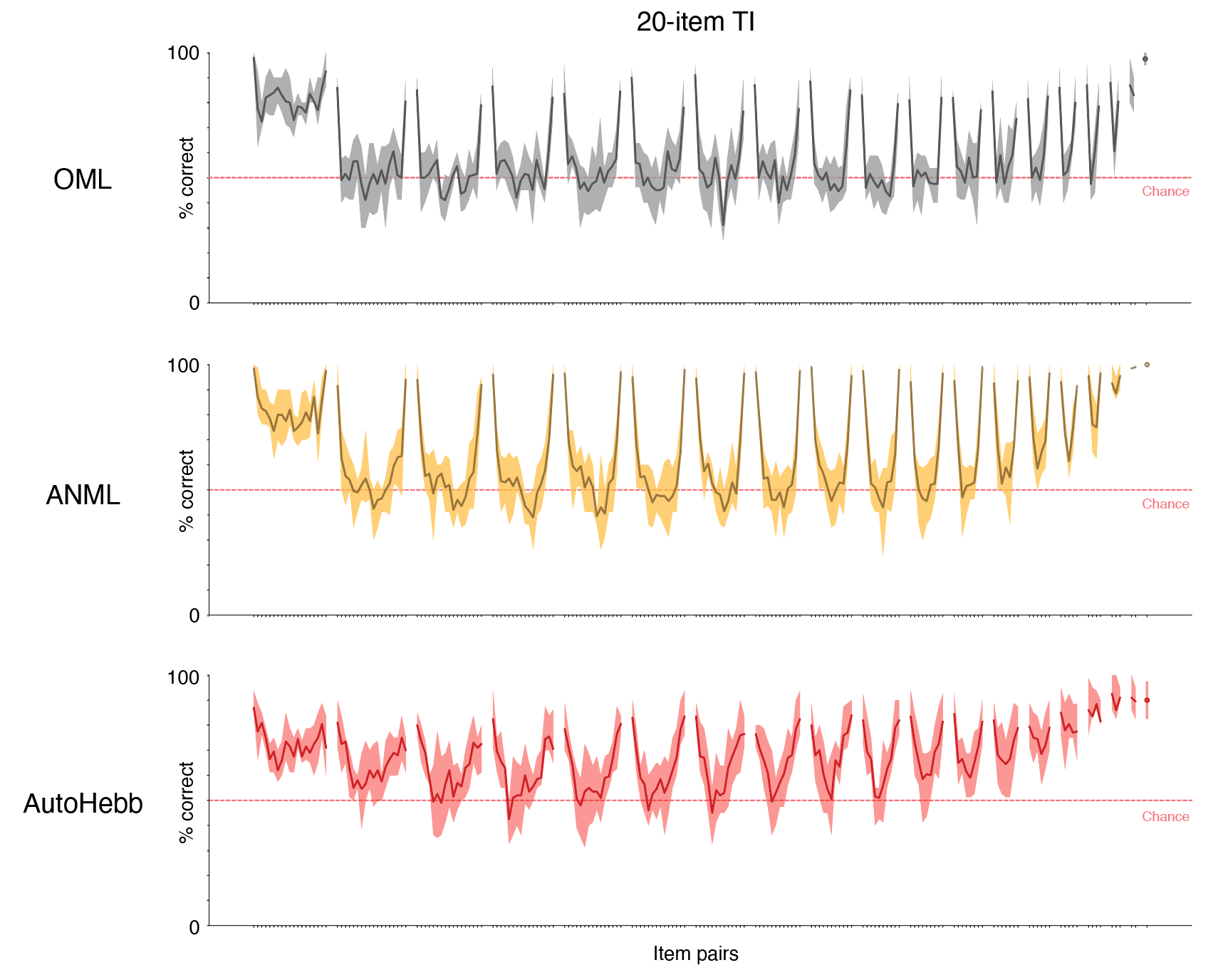}
    \caption{\textbf{TI length generalization with 20 items.} Plots follow the format based on Figure \ref{fig:ti_task}, where item pairs grouped from left to right with increasing symbolic distance. All models are meta-trained on only 8 items (see Figure \ref{supplementfig:length_gen14}, left) and evaluated on 20 items. Note that ANML and OML show relatively weaker performance than AutoHebb on inferential trials, though on this more difficult task AutoHebb chance performance on some inferential trials, indicating limitation in length generalization and/or memory capacity (e.g.\ compare to 14 items Figure \ref{supplementfig:length_gen14}).}
    \label{supplementfig:length_gen20}
\end{figure}
\FloatBarrier

\begin{figure}[!htbp]
    \centering
    \includegraphics[width=0.75\linewidth]{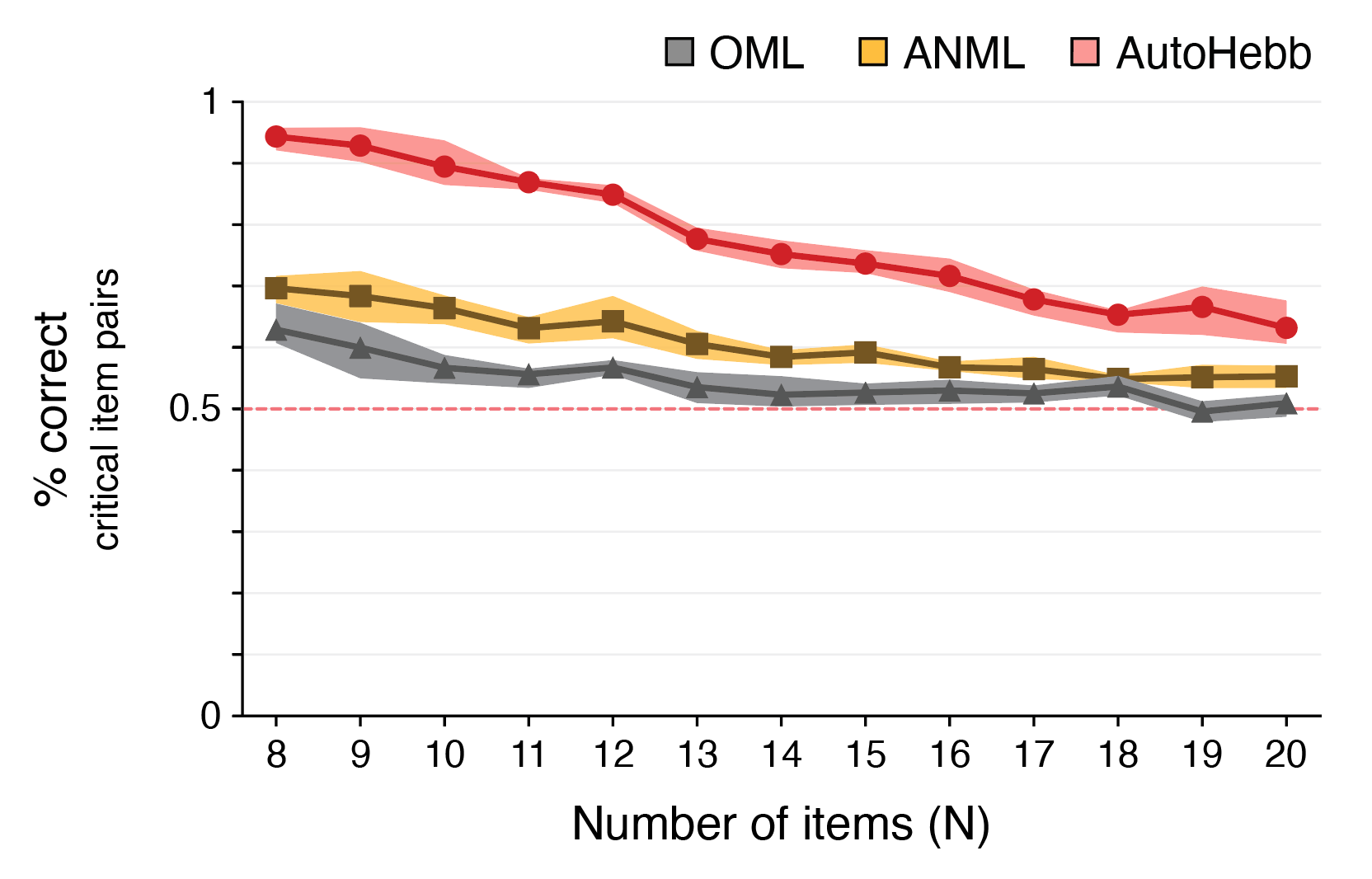}
    \caption{\textbf{Summary plot of length generalization in TI (8 to 20 items).} Performance shown for inferential ("0-shot" inference) trials only. AutoHebb consistently outperforms OML and ANML across all numbers of items.}
    \label{supplementfig:length_gen_overall}
\end{figure}
\FloatBarrier

\subsection{Length generalization in TE}\label{appendixsubsec:lengthgen_te}
As with TI, the ability to perform TE in humans and animals is thought to be based on an abstractable ability to perform grouping that extends beyond a specific number of items per group ($M$; generally $M = 3$ in experimental tests), and beyond a specific number of groups ($Q$), the latter of which is indeed commonly experimentally tested in TE-based experimental studies (e.g.\ \cite{shohamy2008integrating,zeithamova2012thehippocampus,schlichting2015learning}). In our model setup, meta-learning (or hyperparameter tuning) was conducted solely on episodes that presented TE with $M=3$ and $Q=3$; thus it remains an open question whether models can generalize their ability to perform TE to larger values of $M$ and $Q$. In testing three representative models (OML, ANML, and AutoHebb), we found that AutoHebb showed an exceptionally strong ability to generalize to larger numbers of items than ANML (note that OML did not perform the base TE task). However, ANML showed a competitive ability to perform TE on larger numbers of groups, with AutoHebb outperforming ANML on $Q = 3$ and $4$, but ANML outperforming AutoHebb on $Q > 5$. The results in this section are reported for training phases consisting of all training trial types presented exactly once in random order.

\begin{figure}[!htbp]
    \centering
    \includegraphics[width=1.0\linewidth]{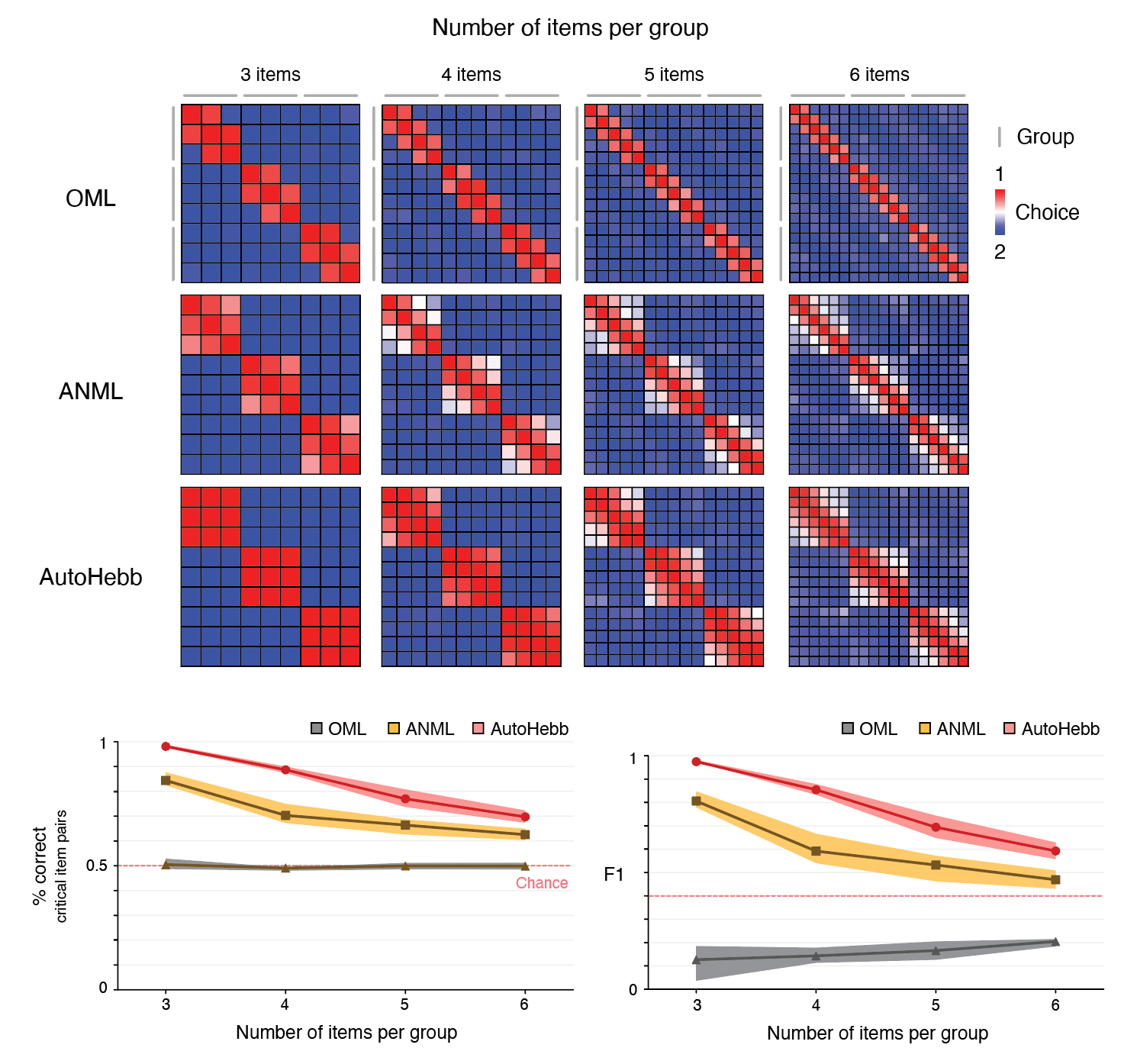}
    \caption{\textbf{Length generalization on TE with respect to numbers of items per group.} Top: Heatmaps show average model choices (1 for same group, 2 for different group, see Figure \ref{fig:te_task}), from 3 items per group to 6 items per group (left to right). In contrast to OML and ANML, AutoHebb generalizes to 4 items per group, though has limited ability on more than 4 items. Bottom: accuracy (calculated the same way as in Section \ref{subsec:zeroshotresults}) and F1 plots, where F1 is computed for same-group ZI trials.}
    \label{supplementfig:te_item_length_gen}
\end{figure}
\FloatBarrier

\begin{figure}[!htbp]
    \centering
    \includegraphics[width=1.0\linewidth]{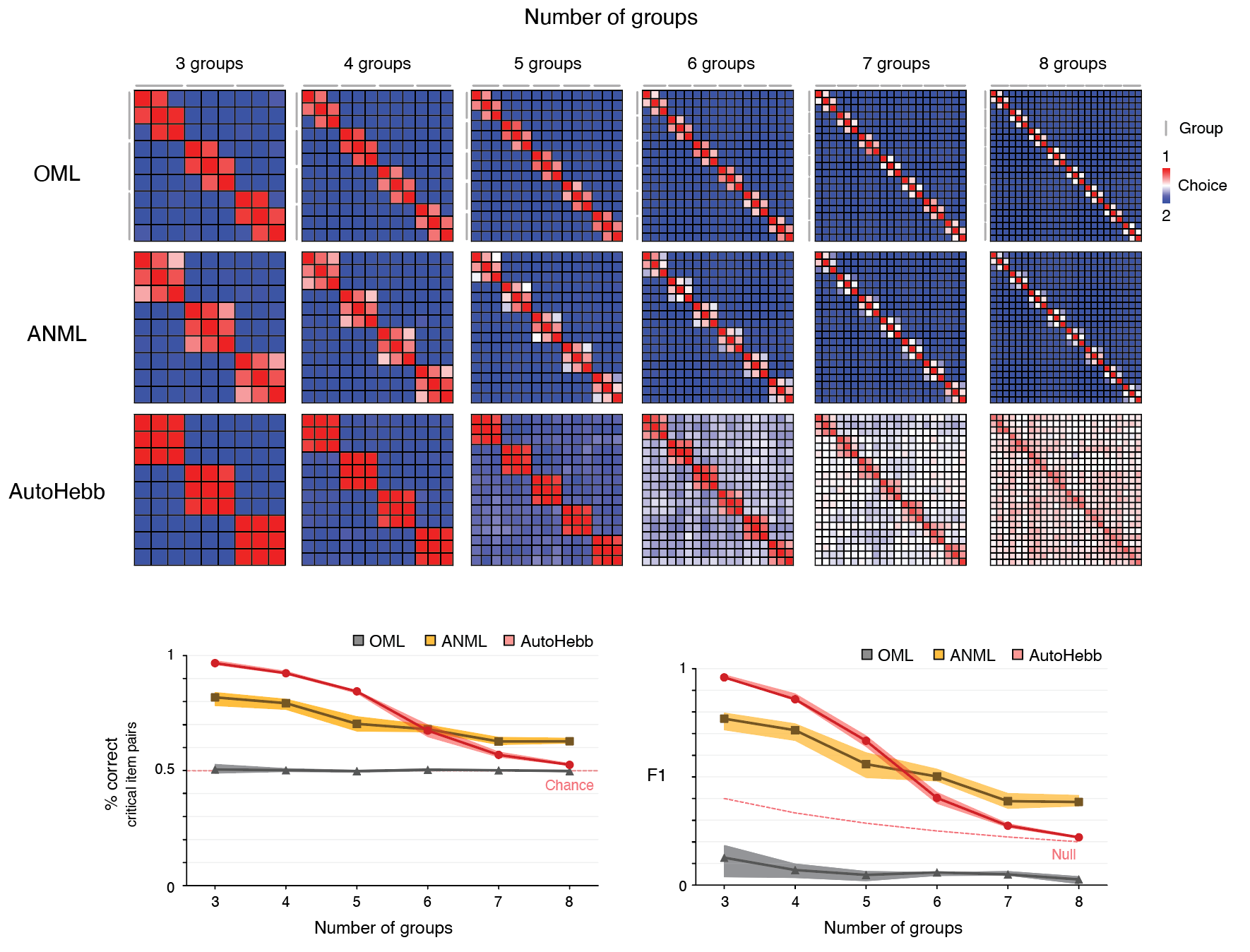}
    \caption{\textbf{Length generalization on TE with respect to numbers of groups.} Top: Heatmaps show average model choices (1 for same group, 2 for different group, see Figure \ref{fig:te_task}), from 3 groups of 3 items to 6 groups (left to right). AutoHebb shows successful ZI ability up to 6 groups. Notably, though ANML shows weaker generalization up to 5 groups, its overall accuracy outperforms the other two models after 6 groups. Generally, AutoHebb maintains its same-group ZI ability while its different-group accuracy decays, even for training trials. In contrast, ANML maintains its different group bias and loses its same group accuracy, even on training trials. These behaviors suggest different underlying inductive biases, as would be expected given the models' different architecture and learning rules. OML has similar behavioral bias as ANML, though OML cannot infer on same-group trials at any group size. Bottom: accuracy (calculated the same way as in Section \ref{subsec:zeroshotresults}) and F1 plots, where F1 is computed for same-group ZI trials.}
    \label{supplementfig:te_group_length_gen}
\end{figure}
\FloatBarrier

\subsection{Robustness to trial overexposure in TI }\label{appendixsubsec:masspresentation}
A critical aspect of natural (and successful) learning is self-regulation, or the ability to control or stabilize plastic changes that would otherwise destabilize or interfere with important prior learning \cite{kudithipudi2022biological,ParisiContinuallifelonglearning2019,wang2024comprehensivesurveycontinuallearning}. Intriguingly, in  TI tasks, experimental work has found that living subjects successfully retain the ability to report previously learned serial order when a single item pair is shown a large number of times (trial overexposure or "massed presentation", \cite{jensen2019discovering,jensen2017transitive}); notably, this behaviorally exhibited instance of stable learning is not expressed in some models of TI \cite{jensen2019discovering}. In this way, testing this behavioral pattern (which is not optimized during meta-training in our model setup) evaluates a non-trivial ability to self-regulate learning, moreover one grounded in empirical findings in living subjects.

Similarly to previous tests, we ran three representative meta-learned models (OML, ANML, and AutoHebb) on a modified trial sequence in TI, namely where 100 additional trials (all of which were the item pair DE) were presented after the 20 adjacent-only trials. We found that OML and ANML generally showed substantial decrease in performance on trial types involving D or E (besides DE itself), with some trial types showing decrease in performance to chance or below chance levels. In contrast, AutoHebb showed only minor changes in performance across all trial types. Examination of the neuromodulator signal in AutoHebb network revealed that the network rapidly downregulated neuromodulation (and thus plasticity) within approximately 20 trials of (repeated) exposure to DE.

\begin{figure}[!htbp]
    \centering
    \includegraphics[width=1.0\linewidth]{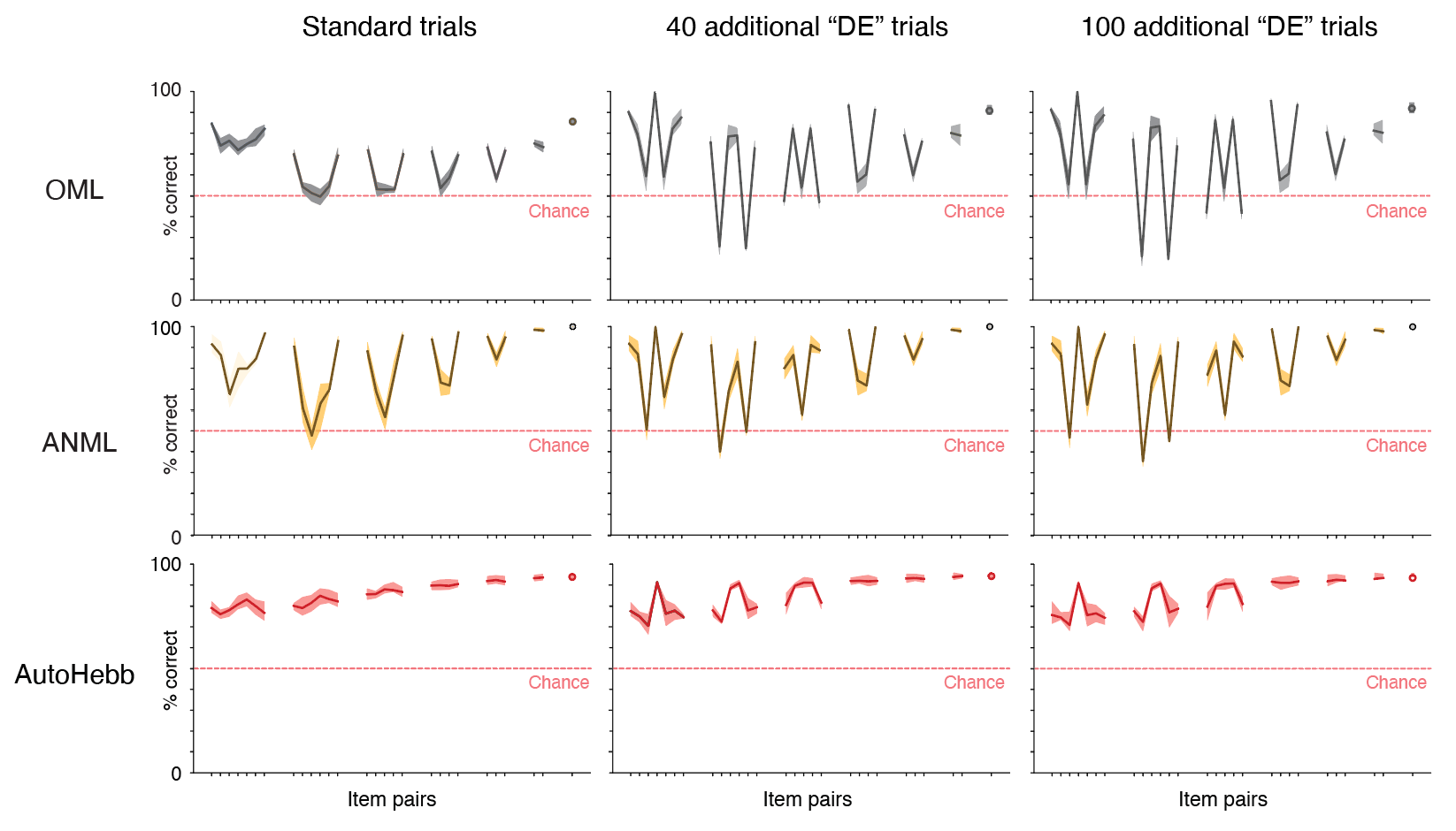}
    \caption{\textbf{Performance of three representative models on TI before vs.\ after overexposure of a single trial type (item pair DE).} Plot follows the format of Figure 4, where trial types are ordered by symbolic distance. (Left) Performance after the standard 20 training trials are shown, (Middle) Performance after an additional 40 trials of DE are shown, (Right) Performance after an additional (from standard training trials) 100 trials of DE are shown. Results are averaged over 100 episodes with 2 s.e. bands. Note that for OML and AMNL, several trial types approach (or reach) chance performance, indicating lack of robustness to the overexposure manipulation; in contrast, AutoHebb trials remain well above chance.}
    \label{supplementfig:mass_symb}
\end{figure}
\FloatBarrier

\begin{figure}[!htbp]
    \centering
    \includegraphics[width=1.0\linewidth]{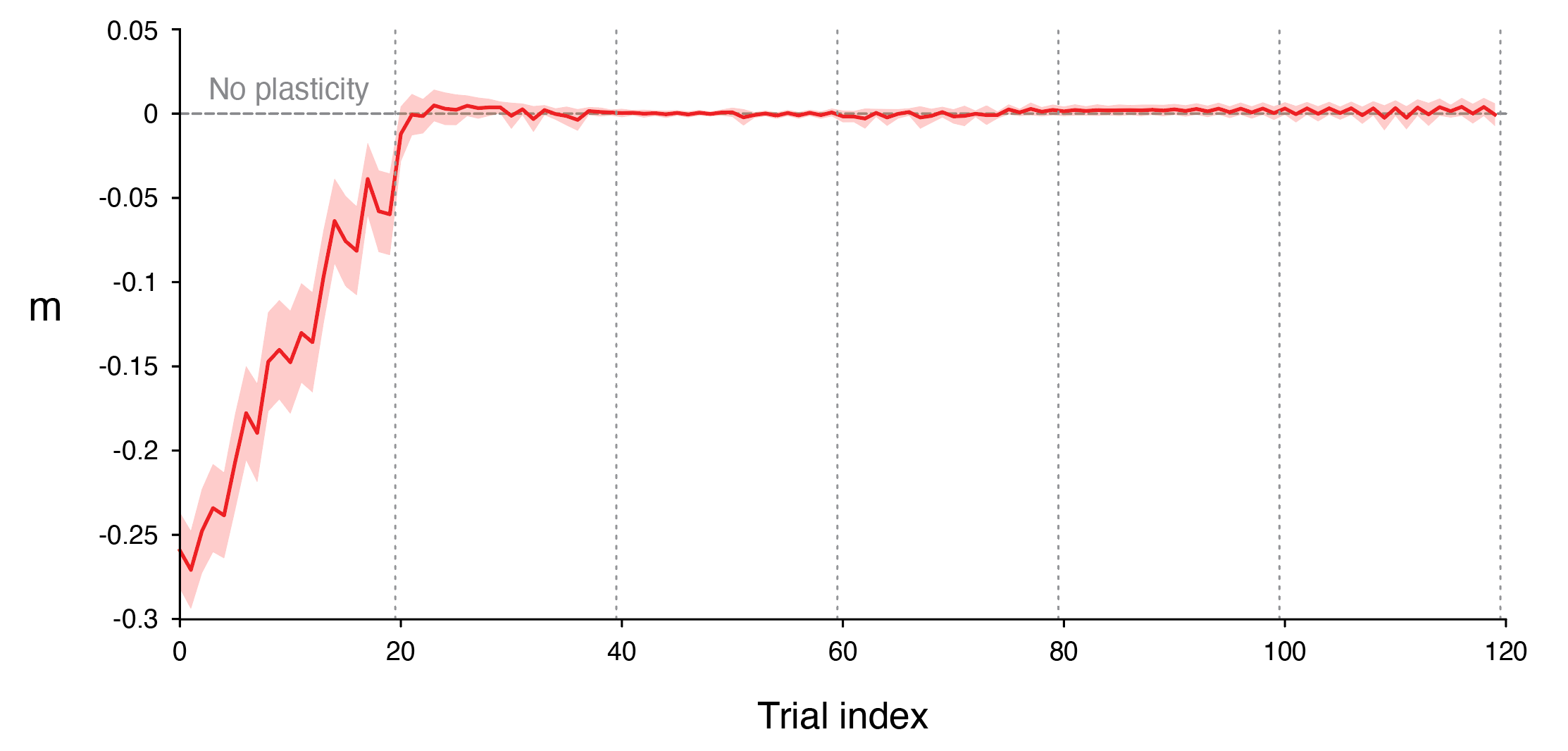}
    \caption{\textbf{Neuromodulator in AutoHebb network during overexposure to single trial type (item pair DE).} The x-axis shows the trial index of the repeat presentation of item pair DE. Note that the neuromodulator is fully suppressed after approximately 20 trials. Results are averaged over 100 episodes, and show mean ± 2 s.e.}
    \label{supplementfig:mass_neuromod}
\end{figure}
\FloatBarrier

\clearpage
\section{Supplementary figures}\label{appendixsec:extrafigures}

\begin{figure}[!htbp]
    \centering
    \includegraphics[width=1\linewidth]{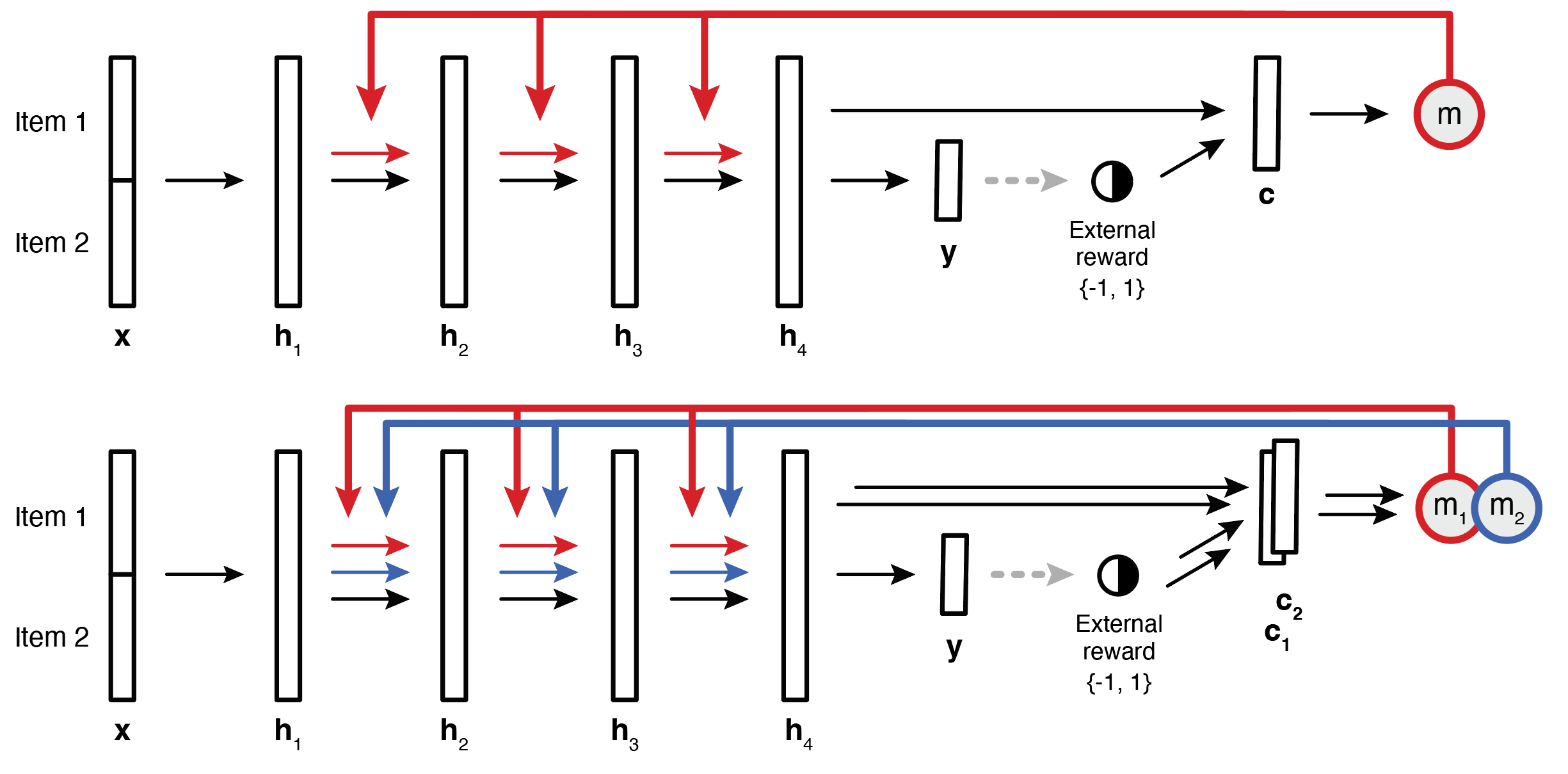}
    \caption{\textbf{AutoHebb networks benchmarked in the present study:\ 1- and 2-neuromodulator AutoHebb (AH-1NM and AH-2NM).} For both models, the main network was chosen to be a multi-layer perceptron (MLP) with four hidden layers; the models differed by the number of neuromodulators (and corresponding control network; in the 2-neuromodulator network, the control networks were parallel, as diagrammed). AH-2N is loosely inspired by the existernce of multiple neuromodulatory systems in the brain, and was included in benchmarking to evaluate the possibility that parallel neuromodulated plastic architecture would enhance task performance. Note that each set of plastic weights at a given layer in the 2-neuromodulator network were summed (Eq. \ref{eq:twoneuromodulators}). Top, AH-1N, Bottom, AH-2NM. Note that additional AutoHebb networks were also investigated in this study, and are indicated accordingly were applicable. Implementation details can be found in Appendix \ref{appendixsec:autohebbtraining}.}
    \label{appendixfig:autohebb_two_arch}
\end{figure}
\FloatBarrier

\begin{figure}[!htbp]
    \centering
    \includegraphics[width=1.0\linewidth]{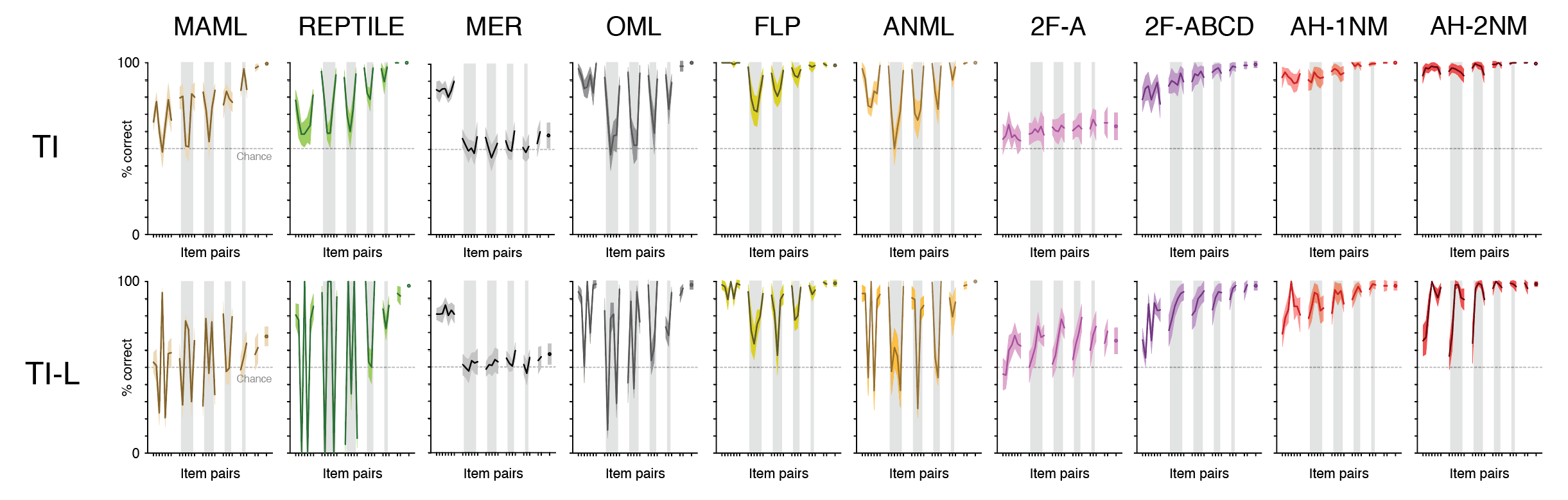}
    \caption{\textbf{TI and TI-L tasks: model performance by trial type.} Plots follow the convention in Figure \ref{fig:ti_task}c, where within each plot, trial types (item pairs) are organized by symbolic distance. Plotted are model outputs for an example model instance for each indicated model. Note that on TI-L, various models such as MAML, Reptile, OML, ANML show overtly inconsistent performance across trial types, in contrast to FLP, 2F-A, 2F-ABCD, AH-1NM, and AH-2NM.}
    \label{fig:ti_supp_symb}
\end{figure}
\FloatBarrier

\begin{figure}[!htbp]
    \centering
    \includegraphics[width=1.0\linewidth]{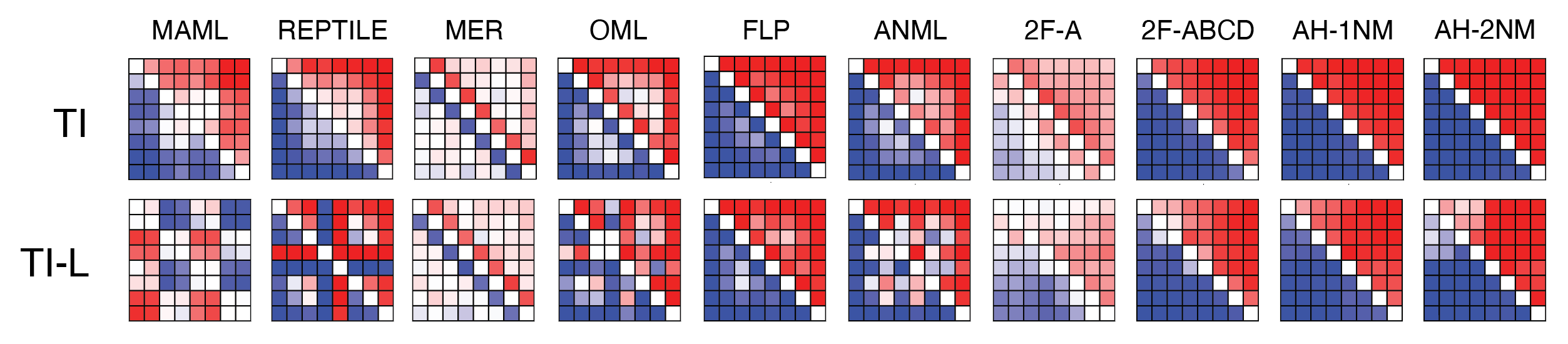}
    \caption{\textbf{TI and TI-L tasks: average model outputs.} Plots follow the convention in Figures \ref{fig:ti_task} and \ref{fig:linking_tasks}. Plotted are model outputs for an example model instance for each indicated model.}
    \label{fig:ti_supp_grid}
\end{figure}
\FloatBarrier

\begin{figure}[!htbp]
    \centering
    \includegraphics[width=1.0\linewidth]{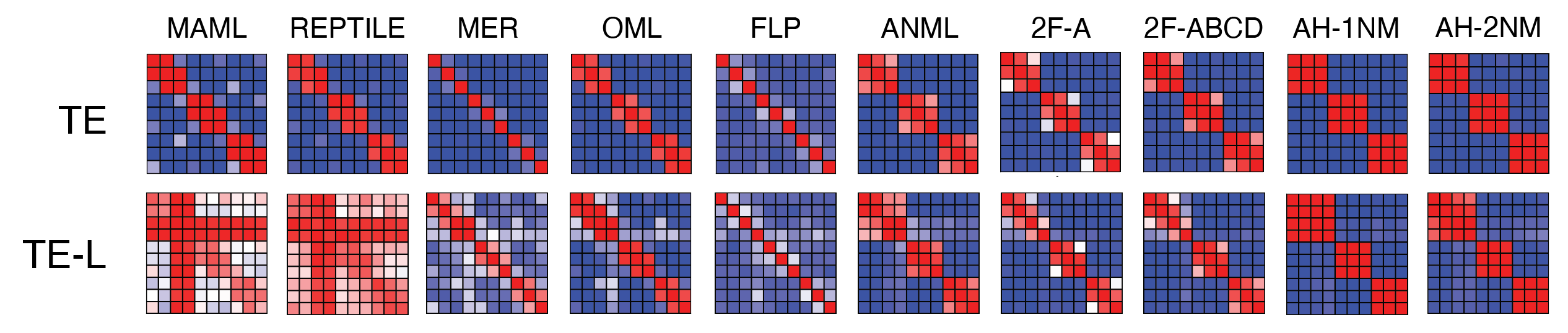}
    \caption{\textbf{TE and TE-L tasks: average model outputs.} Plots follow the convention in Figures \ref{fig:te_task} and \ref{fig:linking_tasks}. Plotted are model outputs for an example model instance for each indicated model.}
    \label{fig:te_supp_grid}
\end{figure}
\FloatBarrier

\begin{table}
  \caption{\textbf{Model performance on TE by trial type (See Figure \ref{fig:te_task}a).} Note that models generally show a bias to output "different" by default on the task, resulting in high performance on different-group trials.\\}
  \label{table:zeroshotbreakdown}
  \centering
  \setlength{\tabcolsep}{5pt}
  \begin{tabular}{lcc}
    \toprule
    Model & Diff-group & Same-group \\
    \midrule
    MAML \cite{finn2017modelagnosticmetalearning} & $0.731 \pm 0.010$ & $0.273 \pm 0.024$ \\
    Reptile \cite{nichol2018firstordermetalearning} & $0.822 \pm 0.042$ & $0.196 \pm 0.043$ \\
    MER \cite{riemer2018learning} & $0.948 \pm 0.008$ & $0.042 \pm 0.011$ \\
    OML \cite{javed2019metalearning} & $0.929 \pm 0.007$ & $0.077 \pm 0.008$ \\
    FLP \cite{lindsey2020learningtolearn} & $0.820 \pm 0.015$ & $0.187 \pm 0.018$ \\
    ANML \cite{beaulieu2020learning} & $0.959 \pm 0.006$ & $0.703 \pm 0.008$ \\
    2F-A & $0.901 \pm 0.014$ & $0.533 \pm 0.099$ \\
    2F-ABCD & $0.947 \pm 0.009$ & $0.741 \pm 0.027$ \\
    \midrule
    AH-1NM & $0.964 \pm 0.032$ & $0.926 \pm 0.055$ \\
    AH-2NM & $0.953 \pm 0.021$ & $0.913 \pm 0.026$ \\
    \bottomrule
  \end{tabular}
\end{table}

\begin{table}
  \caption{\textbf{Model performance on TE-L by trial type (See Figure \ref{fig:linking_tasks}c).} Results are separated into "Original" trials -- ZI trials involving only the three originally -presented groups as shown in Figure \ref{fig:te_task}a -- and "New Item" trials -- ZI trials involving the new item and items not seen during the linking phase of training as shown in Figure \ref{fig:linking_tasks}c. In contrast to Table \ref{table:zeroshotbreakdown}, the linking trials reduce the bias towards predicting "different" on TE-L Original trials and, in some cases, improve the models' predictions. However, the majority of models have chance or below chance on one category of the TE-L New Item trials. \\}
  \label{table:linkingtebreakdown}
  \centering
  \setlength{\tabcolsep}{5pt}
  \begin{tabular}{lcc|cc}
    \toprule
    & \multicolumn{2}{c}{TE-L Original} & \multicolumn{2}{c}{TE-L New Item} \\
    Model & Diff-group & Same-group & Diff-group & Same-group \\
    \midrule
    MAML \cite{finn2017modelagnosticmetalearning} & $0.273 \pm 0.015$ & $0.724 \pm 0.020$ & $0.066 \pm 0.005$ & $0.935 \pm 0.011$ \\
    Reptile \cite{nichol2018firstordermetalearning} & $0.273 \pm 0.059$ & $0.748 \pm 0.057$ & $0.109 \pm 0.037$ & $0.890 \pm 0.039$ \\
    MER \cite{riemer2018learning} & $0.505 \pm 0.159$ & $0.493 \pm 0.156$ & $0.478 \pm 0.153$ & $0.508 \pm 0.167$ \\
    OML \cite{javed2019metalearning} & $0.763 \pm 0.014$ & $0.240 \pm 0.024$ & $0.636 \pm 0.020$ & $0.381 \pm 0.035$ \\
    FLP \cite{lindsey2020learningtolearn} & $0.753 \pm 0.039$ & $0.247 \pm 0.037$ & $0.693 \pm 0.036$ & $0.302 \pm 0.039$ \\
    ANML \cite{beaulieu2020learning} & $0.847 \pm 0.016$ & $0.785 \pm 0.018$ & $0.729 \pm 0.020$ & $0.744 \pm 0.026$ \\
    2F-A & $0.887 \pm 0.020$ & $0.511 \pm 0.073$ & $0.822 \pm 0.014$ & $0.263 \pm 0.025$ \\
    2F-ABCD & $0.938 \pm 0.007$ & $0.658 \pm 0.021$ & $0.820 \pm 0.017$ & $0.322 \pm 0.025$ \\
    \midrule
    AH-1NM & $0.904 \pm 0.045$ & $0.940 \pm 0.039$ & $0.831 \pm 0.070$ & $0.892 \pm 0.054$ \\
    AH-2NM & $0.931 \pm 0.030$ & $0.883 \pm 0.030$ & $0.874 \pm 0.043$ & $0.799 \pm 0.047$ \\
    \bottomrule
  \end{tabular}
\end{table}

\begin{table}
  \caption{\textbf{Model performance on multi-task ZI on TE, by task order and trial type (See Figure \ref{fig:te_task}a).} Note a strong different-group bias appears for all models in both task orders (TI to TE and TE to TI). $^*$Only 3 runs are reported.\\}
  \label{table:multitaskbreakdown}
  \centering
  \setlength{\tabcolsep}{5pt}
  \begin{tabular}{lcc|cc}
    \toprule
    & \multicolumn{2}{c}{TI $\rightarrow$ TE} & \multicolumn{2}{c}{TE $\rightarrow$ TI} \\
    Model & Diff-group & Same-group & Diff-group & Same-group \\
    \midrule
    MAML \cite{finn2017modelagnosticmetalearning} & $0.765 \pm 0.014$ & $0.271 \pm 0.016$ & $0.735 \pm 0.012$ & $0.299 \pm 0.016$ \\
    Reptile \cite{nichol2018firstordermetalearning} & $0.856 \pm 0.006$ & $0.177 \pm 0.015$ & $0.809 \pm 0.008$ & $0.241 \pm 0.022$ \\
    MER \cite{riemer2018learning} & $0.845 \pm 0.013$ & $0.145 \pm 0.012$ & $0.623 \pm 0.023$ & $0.382 \pm 0.036$ \\
    OML \cite{javed2019metalearning} & $0.954 \pm 0.009$ & $0.050 \pm 0.005$ & $0.899 \pm 0.004$ & $0.111 \pm 0.012$ \\
    FLP \cite{lindsey2020learningtolearn} & $0.953 \pm 0.011$ & $0.050 \pm 0.010$ & $0.745 \pm 0.032$ & $0.259 \pm 0.031$ \\
    ANML \cite{beaulieu2020learning} & $0.929 \pm 0.015$ & $0.471 \pm 0.013$ & $0.898 \pm 0.014$ & $0.489 \pm 0.018$ \\
    2F-A & $0.929 \pm 0.019$ & $0.123 \pm 0.024$ & $0.814 \pm 0.042$ & $0.273 \pm 0.064$ \\
    2F-ABCD & $0.917 \pm 0.011$ & $0.122 \pm 0.017$ & $0.768 \pm 0.037$ & $0.306 \pm 0.071$ \\
    \midrule
    EWC \cite{kirkpatrick2017overcomingcatastrophic} & $0.508 \pm 0.015$ & $0.490 \pm 0.021$ & $0.513 \pm 0.007$ & $0.488 \pm 0.014$ \\
    SI \cite{zenke2017continuallearning} & $0.502 \pm 0.012$ & $0.494 \pm 0.022$ & $0.504 \pm 0.017$ & $0.507 \pm 0.019$ \\
    A-GEM \cite{chaudhry2018efficient} & $0.945 \pm 0.007$ & $0.065 \pm 0.008$ & $0.866 \pm 0.013$ & $0.158 \pm 0.004$ \\
    DER \cite{buzzega2020darkexperience} & $0.543 \pm 0.022$ & $0.461 \pm 0.021$ & $0.541 \pm 0.013$ & $0.442 \pm 0.012$ \\
    \midrule
    AH-1NM$^*$ & $0.661 \pm 0.504$ & $0.473 \pm 0.388$ & $0.644 \pm 0.413$ & $0.451 \pm 0.330$ \\
    AH-2NM & $0.930 \pm 0.028$ & $0.614 \pm 0.032$ & $0.911 \pm 0.026$ & $0.643 \pm 0.047$ \\
    \bottomrule
  \end{tabular}
\end{table}

\section{AutoHebb training}\label{appendixsec:autohebbtraining}
Given fixed parameters $\theta$, an episode consisting of input and output pairs $\mathcal{E} = \{(\mathbf{x}_1, \mathbf{y}_1), (\mathbf{x}_2, \mathbf{y}_2),\ldots, (\mathbf{x}_T, \mathbf{y}_T)\}$, and corresponding target outputs $\{\hat{\mathbf{y}}_1, \hat{\mathbf{y}}_2, \ldots, \hat{\mathbf{y}}_T\}$, we took the loss function for meta-learning to be binary cross-entropy loss (BCE): \[\mathcal{L}(\theta, \mathcal{E}) = -\frac{1}{T}\sum_{i=1}^T(1-\mathbf{y}_i)\log(1-\hat{\mathbf{y}}_i) + \mathbf{y}_i\log(\hat{\mathbf{y}}_i).\]

We generally trained models up to $100,000$ episodes or convergence based on meta-training loss. Since each inner loop episode consists of randomly sampled items from $\{-1,+1\}^d$, the probability of two different inner loop episodes using the same sets of items is neglibile. Thus, the criterion for convergence strictly used the meta-training loss.

Although we freeze plastic weights during ZI evaluations, we do not freeze plastic weights during the test phase in meta-training inner loops. Generally across all models in the paper, we empirically observed that allowing models to update parameters on the test trials in the inner loop led to better performance, AutoHebb included. Furthermore, in order to increase the speed and efficiency of learning ZI, the ZI test trials (i.e., excluding the trials seen during training) are weighted $3\times$ throughout meta-training as in \cite{miconi2025neural}. In fact, when zeroing meta-training loss for adjacent pairs in TI, AutoHebb achieved nearly perfect performance on TI and TI-L ZI. However, we found that the same approach for TE was detrimental, likely due to the fewer number of ZI test trials relative to TI.

For all tasks reported, our networks use a hidden dimension of $200$ and a $\tanh$ activation function with the exception of the network in Figure \ref{fig:multi_task_continual}, which is $512$-dimensional. The items are $100$-dimensional, i.e. $d=100$, making the trial inputs $200$-dimensional after concatenation. As discussed in Section \ref{subsec:autohebbarch}, the hidden layers for a 1-neuromodulator AutoHebb network is given by

\[\mathbf{h}_{i} = g((\mathbf{W}_i+(\mathbf{A}_i \odot\mathbf{P}_i))\mathbf{h}_{i-1} + \mathbf{b}_i).\]

However, for 2-neuromodulators, each hidden layer becomes

\begin{equation}\label{eq:twoneuromodulators}
\mathbf{h}_{i} = g((\mathbf{W}_i+(\mathbf{A}_i \odot\mathbf{P}_{i,1}) + (\mathbf{B}_i \odot \mathbf{P}_{i,2}))\mathbf{h}_{i-1} + \mathbf{b}_i)\,,
\end{equation}
where $\mathbf{B}_i$ is meta-trainable. Then the updates for each set of plastic weights are $\Delta\mathbf{P}_{i,j} = \eta_jm_j\mathbf{h}_i\mathbf{h}_{i-1}^\top$. Each neuromodulator $m_j$ has its own control network \[m_j = \tanh(\mathbf{W}_{\text{mod},j}\tanh(\mathbf{W}_{\text{hmod},j}\mathbf{h} + \mathbf{w}_{r,j}r + \mathbf{b}_{\text{mod},j}) + b_{\text{modout},j}).\]

Details for training are as follows: we use a learning rate of $0.0001$ for TE and multi-task models, and a learning rate of $0.00045$ for TI. We use the default Adam optimizer (\cite{kingma2017adammethodstochasticoptimization}) in PyTorch (\cite{paszke2019pytorchimperativestylehighperformance}) with eps set to $1e-6$ and gradient clipping within $[-2.0,2.0]$. The plastic weight decay term $\lambda$ is fixed to $1$, except in model used in Figure \ref{fig:multi_task_continual} (see Appendix \ref{appendixsubsec:multitaskswitchingfiguredetails}). Our batch size for the inner loop was 32 parallel episodes with different training trial orders and sampled test trials, but the item representations and ranking between items stayed the same. We found randomizing items across episodes between batches reduced efficiency, but such an approach may prevent overfitting. For the multi-task ZI learning, we used 2 gradient accumulation steps to achieve an effective batch size of 32, with batch splitting between both task orders. In order to prevent overfitting to task-learning regimes, we randomly sampled training task lengths between 20 to 100 training trials for TI and 63 to 120 trials for TE. We fixed those values for all episodes within the batch for both directions. The ZI testing phase always matched the order of tasks seen and used 30 test trials for each task. In TE, we used 30 test trials for each inner loop and 10 test trials for TI. We sampled 75\% of trials to be nonadjacent pairs for all tasks. Compute requirements for AutoHebb were relatively light compared to other methods, only requiring 8 GPU hours on an H100 for a 3-hidden layer network with 2 neuromodulators on TI to reach $100,000$ meta-training steps. The time increases to 12 GPU hours on a single H100 for TE due to its longer episodes. The multi-task models were the most compute-intensive, requiring 26 hours on a single H100. The only the multi-task ZI AutoHebb models reported in Table \ref{table:multitask} use 2-hidden layers, while all other implementations use 3-hidden layers. We found that the added depth significantly improved task performance in fewer meta-training episodes. Using more layers for the multi-task ZI model required gradient stabilization methods, which are beyond the scope of this work. Furthermore, the multi-task models required the use of gradient accumulation in order to avoid out-of-memory issues despite the small number of parameters due to backpropagation through time (BPTT). 

\section{Experimental details}\label{appendixsec:experimentdetails}
\subsection{Benchmark rationale}\label{appendixsubsec:benchmarkrationale}
In the present study, we had two main goals in the choice of benchmark models. The first was to compare AutoHebb to well-established and widely competitive learning models; the second was to provide performance comparison to models that are systematically comparable to AutoHebb in model components, and in so doing provide the equivalent of ablation analysis of AutoHebb. Both goals are critical, since with respect to the first goal we contend that AutoHebb (even despite being geared for understandability and analysis) is potentially competitive on a number of tasks; with respect to the second goal, we sought to provide model comparisons that would test the contribution of each of the three main model components of AutoHebb: in particular, meta-learning, Hebbian plasticity, and neuromodulation. As follows, we present our rationale along three main (overlapping) categories of models; we lastly highlight how a subset of these models constitute an effective ablation analysis of the main model components of AutoHebb.
\begin{enumerate}
    \item The first category of models are models that use meta-learning, a number of which are well-established in the machine learning literature. Importantly, these models also provide a critical point of contrast with AutoHebb, as these models (unlike AutoHebb) do not rely on Hebbian plasticity or neuromodulation. In this way, comparing AutoHebb to meta-learning models can help delineate any advantage or disadvantage conferred by neuromodulated Hebbian plasticity. The models we benchmarked are model-agnostic meta-learning (MAML \cite{finn2017modelagnosticmetalearning}), Reptile (\cite{nichol2018firstordermetalearning}), an online meta-learning algorithm (OML \cite{javed2019metalearning}), meta-experience replay (MER \cite{riemer2018learning}), a neuromodulated meta-learning algorithm (ANML \cite{beaulieu2020learning}), and feedback and local plasticity (FLP \cite{lindsey2020learningtolearn}). 
    
    \item The second category of models are 2-factor Hebbian models (feedforward variants of 2-factor Hebbian RNN models introduced in \cite{miconi18differentiableplasticity}), which also provide a critical point of contrast to AutoHebb. In particular, if AutoHebb's reliance on neuromodulation (global control of Hebbian plasticity) is critical to task performance, then models that retain Hebbian plasticity but lack neuromodulation should not perform comparably to AutoHebb. We therefore benchmarked 2-factor Hebbian models (in the same MLP architecture as AutoHebb's main network) that lacked the 3-factor (neuromodulatory) term essential to AutoHebb; in these models, to provide neccessary external feedback from each trial, external feedback signals (i.e.\ reward and non-reward) on a given trial was concatenated to the input vector and provided on the subsequent trial. Two 2-factor Hebbian models were tested: 2-factor-A (2F-A), which has a single Hebbian pre-post term in the plastic update, and 2-factor-ABCD (2F-ABCD, e.g.\ \cite{najarro2020meta}), which has pre-post (A), pre (B), post (C), and a constant (D) terms. In addition, FLP (\cite{lindsey2020learningtolearn}) also represents a 2-factor Hebbian learning model, though this model incorporates an external loss function and feedback connections. Note also that all of these 2-factor Hebbian models also make use of meta-learning to identify fixed weights and learning hyperparameters.
    \item The last category of models are weight regularization methods, which constitute a classical approach to CL. These are elastic weight consolidation (EWC \cite{kirkpatrick2017overcomingcatastrophic}), synaptic intelligence (SI \cite{zenke2017continuallearning}), dark-experience replay (DER \cite{buzzega2020darkexperience}), and averaged gradient episodic memory (A-GEM \cite{chaudhry2018efficient}). All of these methods update weights via gradient descent with an additional regularization term that decreases plasticity for weights that are important relative to past experiences in order to mitigate catastrophic forgetting. Note that, with the exception of DER, a major limitation of these models is that task identities need to be provided to the model. Note that for AutoHebb, we do not provide task cues.

\end{enumerate}
We highlight that a subset of the above models can be interpreted as ablations of one or two of the three main model components of AutoHebb. In particular, the meta-learning models that use gradient descent to learn (MAML, Reptile, OML, MER, ANML, and FLP) can be interpreted as retaining AutoHebb's reliance on meta-learning, though ablating both Hebbian plasticity and neuromodulation; the 2-factor Hebbian models (2F-A and 2F-ABCD) can be interpreted as retaining AutoHebb's reliance on Hebbian learning (and meta-learning, which still is performed on the fixed weights and fixed plasticity parameters), though ablating neuromodulation; lastly, ANML in particular can be interpreted as retaining AutoHebb's reliance on neuromodulation and meta-learning, though ablating Hebbian plasticity. Below we provide a table summarizing these relationships. Broadly, we suggest that exceptional performance of AutoHebb (as compared to these various benchmarks) argues for the importance of each of the three main model components of AutoHebb.

\begin{table}[h]
\centering
\renewcommand{\arraystretch}{1.3}
\begin{tabular}{r|c|c|c|}
\hline
 &
\textbf{Meta-learning} &
\textbf{Hebbian} &
\textbf{Neuromodulation} \\
\hline

EWC             &            &            &            \\

SI              &            &            &            \\

DER             &            &            &            \\

A-GEM           &            &            &            \\

MAML            & \filledcell  &            &            \\

Reptile         & \filledcell &            &            \\

OML             & \filledcell &            &            \\

ANML            & \filledcell &            & \filledcell \\

MER             & \filledcell &            &            \\

FLP             & \filledcell & \filledcell &            \\

2F-A      & \filledcell & \filledcell &            \\

2F-ABCD   & \filledcell & \filledcell &            \\
\hline
AutoHebb        & \filledcell & \filledcell & \filledcell \\
\hline

\end{tabular}
\caption{\textbf{Comparison of model types.}}
\label{tab:model_properties}
\end{table}

\subsection{Benchmarking measures}\label{appendixsubsec:benchmarkingdetails}
The reported values were computed as follows: the accuracies on critical pairs described in Section \ref{subsec:zeroshotresults} were averaged over 100 episodes, then the mean accuracies were averaged over 6 different meta-trained models. We provide 2 standard errors over these 6 runs. The TE values are the equal-weighted mean of Same-group and Diff-group values, i.e. $(\text{Same-group test accuracy} + \text{Diff-group test accuracy})/2$.

\subsection{Training}\label{appendixsubsec:experimenttrain}
With the exception of mini-batch gradient descent methods MAML and Reptile (\cite{finn2017modelagnosticmetalearning, nichol2018firstordermetalearning}), each model's online training setup was aligned to AutoHebb's as closely as possible while simultaneously leveraging their unique design elements. While meta-trainable parameter counts differ between models, all parameter counts are within the same order of magnitude, and the depths and widths are consistent throughout. Since the architectures were consistent throughout tasks, we selected the best hyperparameters based on the critical pair accuracy in TI and applied those hyperparameters throughout all tasks. Due to the heavy compute requirements for meta-training and large amounts of tunable hyperparameters in some methods such as MER (\cite{riemer2018learning}) and FLP (\cite{lindsey2020learningtolearn}), we relied on a combination of hyperparameter grid searches and recommendations from the corresponding literature. All methods use the default Adam optimizer in PyTorch (\cite{paszke2019pytorchimperativestylehighperformance}) where applicable as in AutoHebb's setup described in Appendix \ref{appendixsec:autohebbtraining}. For all models, we use 4 hidden layers (except OML), and all hidden layers are 200-dimensional, use Kaiming Uniform initialization and batch sizes of 32 in the inner loop unless otherwise stated. All models were trainable on a single H100 within 48-72 hours on all tasks. Relative to the training speed of these models, AutoHebb was around the median. The slowest model was MER due to its triple nested loop in its native form, which required us to use the batching variant provided in the appendix of \cite{riemer2018learning}. All non-meta-learning methods were trained for 100,000 episodes.

\paragraph{MAML} For the individual tasks we retain full second-order meta-gradients; for the multi-task setting we use first-order MAML (FOMAML) to reduce memory and compute costs. The meta-batch size is $K = 32$ tasks. The outer loop learning rate is $0.0003$. Per-phase inner learning rates are stored in log-space and meta-learned, initialized at $0.3$. Within each episode, the inner loop processes the training trials and test trials sequentially, with gradient descent (one update per batch) applied at each phase using its respective learned learning rate; the meta-loss is computed over the test-phase predictions. For single-task TI, the training phase consists of 20 adjacent trials and the test phase of 30 mixed trials; for single-task TE, 30 training and 30 test trials. At meta-test, we use 2 SGD steps per training trial. Note that the use (in MAML and in Reptile below) of batch updates, as well as adjustable number of update steps, have no obvious equivalent to the learning setting investigated in this study, namely online learning. Further, as online learning entails realistic updating given limited resources (time and processing), we evaluate MAML (and Reptile below) given a limited number of gradient update steps -- evaluating on a limited number of gradient steps also conforms to what is prescribed by these methods in practice (\cite{finn2017modelagnosticmetalearning}).

\paragraph{Reptile} We use the same architecture, and meta-batch size as MAML. The outer loop learning rate is $0.00005$ and inner loop learning rate is fixed at $0.1$. The readout layer is frozen during the inner loop. Reptile uses first-order updates by construction, computing the outer gradient as the displacement between the initial and adapted parameters averaged across tasks. We use one SGD update for the training batch and 5 updates for the query batch (30 trials). At meta-test, we use 3 SGD steps per training batch.

\paragraph{MER} We implement Algorithm 7 from \cite{riemer2018learning}, the single-batch speedup variant of Meta-Experience Replay. A reservoir-sampled replay buffer of capacity 200 is maintained across episodes. On each new trial, $k - 1 = 4$ examples are drawn from the buffer and combined with the current trial; the current trial receives a boosted learning rate of $s \cdot \alpha = 5 \times 0.1 = 0.5$ while buffer examples use $\alpha = 0.1$, followed by a single Reptile-style interpolation with coefficient $\gamma = 0.3$. Per-trial cost is $k = 5$ forward-backward passes. All trials are processed through the MER update loop and added to the buffer. At meta-test, the saved model parameters and buffer state are restored, and each training trial triggers the same MER update followed by reservoir insertion.

\paragraph{OML} We use a representation network (RN) of 2 hidden layers and a prediction network (PN) of 3 hidden layers \cite{javed2019metalearning}. The RN is frozen at meta-test; the PN is re-initialized fresh each episode and adapted via SGD at a learned inner learning rate (initialized to $0.01$ in log space). The outer loop learning rate is set to $0.003$. For each task we use 30 inner test trials and 30 outer evaluation trials. At meta-test, the PN is re-sampled each episode and adapted with 1 SGD steps per training trial.

\paragraph{FLP} We use a 4-hidden-layer MLP backbone with the last 2 hidden layers and the readout designated as plastic, following the direct feedback alignment architecture of \cite{lindsey2020learningtolearn}. Each plastic hidden layer $l$ carries meta-learned per-synapse plasticity coefficients $\alpha_l$ (same shape as $W_l$), a feedback-mixing scalar $\beta_l$ applied as $\sigma(\beta_l) \in (0, 1)$, a feedback weight matrix $B_l$ of shape $(d_{\text{hidden}}, 1)$ projecting the scalar output signal directly to the layer, and a per-neuron feedback bias $b_{\text{fb},l}$. The readout layer is always plastic (with $\alpha$ only; no $B$ or $\beta$). Earlier hidden layers are static and serve as a meta-learned feature extractor, analogous to OML's representation network. All plasticity parameters ($\alpha$, $\beta$, $B$, $b_{\text{fb}}$) are meta-learned via Adam, with a separate learning rate of $0.0001$ for $\alpha$ (which is highly sensitive due to amplification through the inner-loop Hebbian chain) and $0.001$ for all other parameters. At each inner-loop trial, the Oja-rule update is applied: $\Delta W_l = \alpha_l \odot \left( \mathbf{post}_l \cdot \mathbf{pre}_l^\top - \mathbf{post}_l^2 \cdot W_l \right)$, where $\mathbf{pre}_l$ is the input to layer $l$ and $\mathbf{post}_l$ is a learned mixture of the layer's feedforward activity and the rectified feedback signal: $\mathbf{post}_l = (1 - \sigma(\beta_l)) \cdot h_l + \sigma(\beta_l) \cdot \text{ReLU}(B_l \, e_{\text{out}} - b_{\text{fb},l})$. We use the raw target as the feedback signal ($e_{\text{out}} = y$) with $\beta_l$ initialized at $0.0$ ($\sigma(0) = 0.5$, equal mix of forward and feedback activity) and $\alpha_l$ initialized at $0.0$ (plasticity starts silent and is discovered by the meta-gradient). Full second-order meta-gradients are retained through the inner loop.

\paragraph{ANML} We use the parallel neuromodulator (NM) and prediction network (PN) architecture from \cite{beaulieu2020learning}, with 4 hidden layers in each trunk and all PN layers designated as plastic. The outer loop learning rate is set to $0.0001$. The inner loop runs 30 test trials and 30 evaluation trials in the outer loop. At meta-test, we use one SGD step per trial on the PN plastic parameters using the saved learned inner learning rate (intialized to $0.03$ in log space). The continual learning evaluation chains TI and TE training trials through the same plastic state without resetting.

\paragraph{2-Factor Hebbian} Both 2-factor variants share AutoHebb-1N's backbone architecture but replace the learned neuromodulator with the feedback signal appended to the input, giving dimension $2 \cdot d_{\text{item}} + 1$. Each trial involves a double forward pass: the first pass sets $r = 0$ and generates the model choice and feedback; the second pass sets $r = +1$ or $r = -1$ according to the feedback, and the resulting activations drive the Hebbian weight update. The plastic update becomes purely Hebbian---the two factors are pre-synaptic and post-synaptic activity, with the feedback signal entering only through the augmented input on the second forward pass.
 
The outer loop learning rate is set to $0.0001$. We evaluate two variants that differ only in the form of the plastic trace:
 
\begin{itemize}
    \item \textbf{2FAC\textsubscript{A}}: The standard outer-product trace, $\Delta W \propto \mathbf{post} \cdot \mathbf{pre}^\top$.
    \item \textbf{2FAC\textsubscript{ABCD}}: The generalized ABCD trace family, with per-layer meta-learned scalar coefficients:
    \begin{equation}
        \Delta W \propto A \cdot \mathbf{post} \cdot \mathbf{pre}^\top + B \cdot \mathbf{post} + C \cdot \mathbf{pre} + D.
    \end{equation}
    Coefficients are initialized at $(A, B, C, D) = (1, 0, 0, 0)$, recovering the standard outer-product rule at initialization, and are meta-trained jointly with all other parameters.
\end{itemize}

\paragraph{Non-meta-learning baselines}
We use a 4-hidden-layer MLPs without meta-learning for all baselines with Adam.
\begin{itemize}
\item \textbf{EWC} \cite{kirkpatrick2017overcomingcatastrophic} We use a learning rate of $0.03$. The importance penalty coefficient is $\lambda = 1000.0$. Since there are only two tasks in the multi-task ZI setting, only the second tasks are updated with the regularization term. We use 3 updates per training trial.
\item \textbf{SI} We set the learning rate to $0.1$ and importance penalty to $\lambda = 1.0$. Synaptic Intelligence \cite{zenke2017continuallearning} additionally tracks per-parameter path integrals during training, accumulating $\omega_i = \sum_t (-\partial L / \partial \theta_i) \cdot \Delta \theta_i$ and normalizing by the squared displacement plus a damping constant $\xi = 0.001$. As with EWC, only the second tasks are updated with the regularization term. We use one update per trial.
\item \textbf{A-GEM} \cite{chaudhry2018efficient} We use an Adam learning rate of $0.0003$ and 5 updates per trial. All trials from the first task are retained to constrain the second task optimization.
\item \textbf{DER} \cite{buzzega2020darkexperience} We use an Adam learning rate of $0.03$ and 3 updates per trial. All trials and logits from the first task are retained in the replay buffer, and we use a penalty coefficient of $10.0$.
\end{itemize}

\subsection{Figure \ref{fig:multi_task_continual} experimental details}\label{appendixsubsec:multitaskswitchingfiguredetails}
The models used for Figure \ref{fig:multi_task_continual} differ from the benchmark models as we observed some improvement with further meta-training beyond $100,000$ episodes. Each lifetime in meta-training randomly sampled 2-4 TI and TE tasks and was randomly assigned 54-65 training trials per task. In total, a lifetime could have 108-260 training trials. In TE, we excluded same-item trials to ensure models could not easily infer the task from the presentation of a special trial type that does not exist in TI. The meta-gradients were evaluated on 30 test trials for each of the last 2 presented tasks for a total of 60 test trials. These test trials for the second-to-last task could appear either at the end of training or right before the start of the last task. Further, the training trials for the last two tasks were part of the meta-gradient for all models. The learning rate for all models was $0.0001$. AutoHebb (2 neuromodulators), ANML, and OML all used the same architecture as in Appendix \ref{appendixsubsec:experimenttrain}, with one exception for AutoHebb: we used a meta-trainable plastic weight decay term $\lambda$ initialized to $\sigma(4.0)\approx.98$ as in Eq. \ref{eq:3factorrule}. At evaluation, we use the mean ZI accuracy of 100 random trajectories (random task sequence and item representations) that are shared across all models, where accuracy is averaged over all tasks learned up to that point. The bands report 2 standard errors. For each TI and TE task, we show 30 and 54 training trials, respectively.

\section{Supplementary mathematical statements}
\begin{lemma}\label{lem:affinehypercube}
An $r$-dimensional affine subspace of $\mathbb{R}^d$ contains at most $2^r$ vertices of the $d$-dimensional hypercube with vertices in $\{-1,+1\}^d$.
\end{lemma}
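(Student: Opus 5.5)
We prove Lemma \ref{lem:affinehypercube} by projecting onto a well-chosen set of $r$ coordinates. Let $S = \mathbf{p} + V$ be an $r$-dimensional affine subspace, where $V$ is an $r$-dimensional linear subspace of $\mathbb{R}^d$. Fix a basis of $V$ and write it as the columns of a $d \times r$ matrix $\mathbf{B}$ of rank $r$. Then $\mathbf{B}$ has $r$ linearly independent rows; let $C \subseteq \{1,\ldots,d\}$ with $|C| = r$ index them.

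Consider the coordinate projection $\pi_C : \mathbb{R}^d \to \mathbb{R}^r$ that keeps the coordinates in $C$. We show that $\pi_C$ is injective on $S$. If $\mathbf{s}, \mathbf{s}' \in S$, then $\mathbf{s} - \mathbf{s}' = \mathbf{B}\mathbf{c}$ for some $\mathbf{c} \in \mathbb{R}^r$. If in addition $\pi_C(\mathbf{s}) = \pi_C(\mathbf{s}')$, then $\mathbf{B}_C \mathbf{c} = \mathbf{0}$, where $\mathbf{B}_C$ is the $r \times r$ submatrix of rows in $C$. This submatrix is invertible by the choice of $C$, so $\mathbf{c} = \mathbf{0}$ and $\mathbf{s} = \mathbf{s}'$.

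Now restrict to hypercube vertices. Every $\mathbf{v} \in S \cap \{-1,+1\}^d$ satisfies $\pi_C(\mathbf{v}) \in \{-1,+1\}^r$. Since $\pi_C$ is injective on $S$, it gives an injection from $S \cap \{-1,+1\}^d$ into $\{-1,+1\}^r$. This set has $2^r$ elements, so $|S \cap \{-1,+1\}^d| \le 2^r$.

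The only step that needs care is the existence of the index set $C$. It follows from row rank equalling column rank: $\mathbf{B}$ has column rank $r$, so it has $r$ linearly independent rows. The rest is routine. The degenerate case $r = 0$ is consistent with the bound, since a point contains at most $1 = 2^0$ vertex. If $r \ge d$, the bound $2^r \ge 2^d$ holds trivially.
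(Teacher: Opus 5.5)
Your proof is correct and follows essentially the same approach as the paper: choose $r$ linearly independent rows of a basis matrix for the direction space, and use the corresponding coordinate projection, which is injective on the affine subspace and sends hypercube vertices into $\{-1,+1\}^r$. You also justify the injectivity explicitly, via invertibility of the $r\times r$ submatrix, where the paper only asserts it.
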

\begin{proof}
Let $A = \mathbf{x}+U$ be an affine subspace of $\mathbb{R}^d$ with $U$ an $r$-dimensional linear subspace of $\mathbb{R}^d$ and $\mathbf{x}$ a vector in $\mathbb{R}^d$. Let $\mathbf{u}_1$, $\mathbf{u}_2$, \ldots, $\mathbf{u}_r$ be a basis for $U$, and let $\mathbf{U} \in \mathbb{R}^{d \times r}$ be the matrix whose $i$th column is $\mathbf{u}_i$. Since $\mathbf{U}$ has rank $r$, there exists $r$ linearly independent rows in $\mathbf{U}$. If these rows are indexed by $i_1$, $i_2$, \ldots, $i_r$, then the projection \[\mathbf{\pi}:\mathbb{R}^d \to \mathbb{R}^r\] that sends coordinate $i_j$ to coordinate $j$ is bijective on $A \subset \mathbb{R}^d$. Thus, $\pi$ gives a bijection between $A \cap \{-1,+1\}^d$ and $\pi\left(A \cap \{-1,+1\}^d\right) \subset \{-1,+1\}^r$, which has at most $2^r$ vertices.
\end{proof}
 
\begin{lemma}\label{lem:hyperplanechambercount}
Let $M$ be the number of $(d-1)$-dimensional hyperplanes through the origin in $\mathbb{R}^d$. The number of regions $R(M,d)$ in $\mathbb{R}^d$ formed by the $M$ hyperplanes is at most \[R(M,d) \leq 2\sum_{i=0}^{d-1}{\binom{M-1}{i}}\, .\] 
\end{lemma}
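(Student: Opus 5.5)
The plan is to prove the classical bound for central arrangements by induction, using the deletion–restriction recursion. For a central arrangement $\mathcal{A}$ of $M$ hyperplanes in $\mathbb{R}^d$, pick one hyperplane $H \in \mathcal{A}$ and let $\mathcal{A}' = \mathcal{A}\setminus\{H\}$. Adding $H$ back to $\mathcal{A}'$ increases the number of regions by exactly the number of regions of $\mathcal{A}'$ that $H$ cuts into two pieces. That number equals the number of regions that the restricted arrangement $\{H \cap H' : H' \in \mathcal{A}'\}$ induces inside $H \cong \mathbb{R}^{d-1}$. The restricted arrangement is again central, and it has at most $M-1$ distinct hyperplanes of $H$; some intersections may coincide, or equal $H$ itself when $H'=H$, which we discard. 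This gives the recursion
\[
R(M,d) \leq R(M-1,d) + R(M-1,d-1),
\]
where $R(m,k)$ now denotes the maximum number of regions over all central arrangements of at most $m$ hyperplanes in $\mathbb{R}^k$. This maximum is monotone in $m$, since adding a hyperplane never decreases the count. So arrangements with coincident hyperplanes, or restrictions with fewer than $M-1$ hyperplanes, are covered by the same bound.

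Next I would set $C(M,d) := 2\sum_{i=0}^{d-1}\binom{M-1}{i}$ and check that it satisfies the same recursion with equality. Pascal's rule gives $\binom{M-1}{i} = \binom{M-2}{i} + \binom{M-2}{i-1}$, and summing over $0\le i\le d-1$ yields
\[
C(M,d) = C(M-1,d) + C(M-1,d-1).
\]
For the base cases I would take $M=1$ and $d=1$. When $M=1$, one hyperplane gives $2 = C(1,d)$ regions. When $d=1$, every hyperplane through the origin in $\mathbb{R}^1$ is $\{0\}$, so there are $2 = C(M,1)$ regions. I would also allow the degenerate case $M=0$, which gives $1$ region, and note that the bound still holds since the recursion is only applied for $M\ge 2$, $d\ge 2$. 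Induction on $M+d$ then gives $R(M,d)\le C(M,d)$.

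The main obstacle is justifying the counting step: that the number of new regions equals the number of regions of the restricted arrangement on $H$. Here I would argue directly. Each region $\rho$ of $\mathcal{A}'$ is an open convex cone, and $H$ either misses $\rho$ or splits it into exactly two convex pieces. In the latter case $\rho \cap H$ is a nonempty open convex subset of $H$ that avoids every $H'\cap H$, so it lies in a single region of the restricted arrangement. Conversely, each region of the restricted arrangement lies in a unique region of $\mathcal{A}'$, because it avoids all $H'$ and is connected. This gives a bijection between the regions of $\mathcal{A}'$ split by $H$ and the regions of the restricted arrangement, which is what the recursion needs. The inequality rather than equality comes only from possible coincidences among the hyperplanes.
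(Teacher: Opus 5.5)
Your proposal is correct and follows the same route as the paper: the deletion--restriction recursion $R(M,d)\le R(M-1,d)+R(M-1,d-1)$, the base cases $M=1$ and $d=1$, and Pascal's rule to show the closed form satisfies the recursion. Your version is more careful than the paper's: defining $R(m,k)$ as a maximum over arrangements of at most $m$ hyperplanes, and giving the explicit bijection between split regions and regions of the restricted arrangement, justifies steps the paper only asserts. (One caveat: the closed form need not be $\ge 1$ at $M=0$, e.g.\ it equals $0$ for $d=2$ under the generalized binomial coefficient, so your remark there holds only in the sense that the induction never reaches $M=0$.)
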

\begin{proof}
Let $H_M$ be the $M$th hyperplane added to the current configuration of $M-1$ hyperplanes. New regions are added only where $H_M$ intersects an existing region. Therefore, the number of new regions added is at most the number of regions in the $(d-1)$-dimensional subspace $H_M$ determined by the $M-1$ hyperplanes, viewed as $(d-2)$-dimensional hyperplanes when restricted to $H_M$. This gives is a bound \[R(M,d) \leq R(M-1,d) + R(M-1,d-1) \, .\] With initial conditions
\begin{align*}
R(1,d) & = 2 \, ,\text{ and} \\
R(M,1) & = 2 \, ,
\end{align*}
we obtain the desired bound through recursion. The base cases for $M=1$ and $d=1$ are 
\begin{align*}
2\sum_{i=0}^{d-1}{0\choose i} = 2{0\choose 0} = 2 & = R(1,d)\, , \text{ and}\\
2\sum_{i=0}^{0}{M-1\choose i} = 2{M-1\choose 0} = 2 & = R(M,1)\, .
\end{align*}
For the general case, we have
\begin{align*}
R(M,d) & \leq R(M-1,d) + R(M-1,d-1) \\
 & = 2\sum_{i=0}^{d-1}{M-2\choose i} + 2\sum_{i=0}^{d-2}{M-2\choose i} \\
 & = 2\sum_{i=0}^{d-1}{M-2\choose i} + 2\sum_{i=1}^{d-1}{M-2\choose i-1} \\
 & = 2{M-2\choose 0} + 2\sum_{i=1}^{d-1}\left({M-2\choose i} + {M-2\choose i-1}\right)\, .
\end{align*}
If we apply Pascal's identity to he sum on the right-hand side, we get \[R(M,d) \leq 2{M-2\choose 0} + 2\sum_{i=1}^{d-1}{M-1\choose k}\, .\] Since \[{M-2\choose 0} = {M-1\choose 0} = 1\, ,\] we get \[R(M,d) \leq 2\sum_{i=0}^{d-1}{M-1\choose k}\, .\]
\end{proof}

\begin{theorem}\label{thm:plasticweightconvergencetiweak}
    Assume the network in Eq. \ref{eq:minimaltinetwork} and update rule in Eq. \ref{eq:plasticweightupdateti}. Let $\mathbf{x}_1$, $\mathbf{x}_2$, \ldots, $\mathbf{x}_N$ be $N$ items in $\{-1,+1\}^d$. For a trial $\mathbf{x}_{IJ}$, let $s(\mathbf{x}_{IJ})$ be the sign of a correct model response for trial type $IJ$. Let $\mathbf{Q} = \{\mathbf{q}_t\}_{t=0}^\infty$ be a sequence of trials with inputs $\mathbf{q}_t$. If $0 < \eta < \frac{1}{d}$ as in Eq. \ref{eq:plasticweightupdateti}, then $\mathbf{P}(t)$ converges as $t \to \infty$.
\end{theorem}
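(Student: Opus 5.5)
The approach is to drop the separating vector $\mathbf{w}^*$ of Theorem \ref{thm:ticonvergence} and use the geometry of the update instead. Each error update is a fixed linear map that strictly shrinks the weight vector unless it leaves it unchanged. Convergence then follows from the standard fact that any product of finitely many such maps, taken in any order, converges.

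\emph{Step 1: the update as a relaxed projection.} Write $\mathbf{w}(t)=\mathbf{\bar{1}}+\mathbf{P}(t)$, as a row vector. On an error trial with input $\mathbf{q}$ we have $y=\mathbf{w}(t)\mathbf{q}$, so, as in Eq. \ref{eq:householderreflectti},
\[
\mathbf{w}(t+1)=\mathbf{w}(t)\,T_{\mathbf{q}},\qquad T_{\mathbf{q}}=\mathbf{I}_{2d}-c\,\frac{\mathbf{q}\mathbf{q}^\top}{\|\mathbf{q}\|^2},\qquad c=2\eta d\in(0,2).
\]
On correct trials $\mathbf{w}$ does not change, so we may delete those steps. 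The trajectory is then $\mathbf{w}_0 T_{\mathbf{q}_{\tau_1}}T_{\mathbf{q}_{\tau_2}}\cdots$, where every factor comes from the finite set $\{T_{\mathbf{x}_{IJ}}\}$. If only finitely many errors occur, the sequence is eventually constant and we are done. The same expansion as in Theorem \ref{thm:ticonvergence}, without the $\mathbf{w}^*$ term, gives
\[
\|\mathbf{w}(t+1)\|^2=\|\mathbf{w}(t)\|^2-2\eta(1-\eta d)\,y^2 .
\]
Because $\eta<1/d$, the coefficient $2\eta(1-\eta d)$ is strictly positive. Hence each $T_{\mathbf{q}}$ is a \emph{paracontraction}: $\|\mathbf{w}T_{\mathbf{q}}\|<\|\mathbf{w}\|$ unless $\mathbf{w}T_{\mathbf{q}}=\mathbf{w}$, i.e.\ unless $\mathbf{w}\mathbf{q}=0$. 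This is exactly where the strict inequality $\eta<1/d$ is used; at $\eta=1/d$ the map is a reflection and the decrease vanishes.

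\emph{Step 2: consequences of the norm identity.} Summing the identity shows that $\|\mathbf{w}(t)\|$ is nonincreasing, with limit $\rho\ge 0$, and that $\sum_t y_t^2<\infty$ over error steps. Therefore $y_t\to0$, and the increments $\|\mathbf{w}(t+1)-\mathbf{w}(t)\|=\eta|y_t|\sqrt{2d}$ also tend to $0$. The trajectory is bounded, so it has limit points. For any limit point $\bar{\mathbf{w}}$ and any trial type $\mathbf{x}_{IJ}$ that triggers updates infinitely often, continuity and $y_t\to0$ along those steps give $\bar{\mathbf{w}}\mathbf{x}_{IJ}=0$, provided $\bar{\mathbf{w}}$ is approached along such update steps.

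\emph{Step 3: convergence of the whole sequence (the main obstacle).} Square-summability of $y_t$ does not by itself give summability of the increments, so a separate argument is needed. I would follow the Elsner--Koltracht--Neumann argument for finite families of paracontractions. Let $E$ be the set of trial types that are updated infinitely often; after some finite time $T_0$ only operators $T_{\mathbf{x}}$ with $\mathbf{x}\in E$ act. Let $F=\{\mathbf{w}:\mathbf{w}\mathbf{x}=0 \text{ for all } \mathbf{x}\in E\}$, the common fixed space of these operators. Orthogonally decompose $\mathbf{w}(t)=\mathbf{w}_F(t)+\mathbf{w}_{F^\perp}(t)$. Each $T_{\mathbf{x}}$ with $\mathbf{x}\in E$ fixes $F$ pointwise and maps $F^\perp$ into itself. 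Hence for $t\ge T_0$ the component $\mathbf{w}_F(t)$ is constant, and $\mathbf{w}_{F^\perp}(t)$ evolves under these operators alone.

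It then suffices to show $\mathbf{w}_{F^\perp}(t)\to\mathbf{0}$, and I would argue by contradiction. Suppose instead that $\|\mathbf{w}_{F^\perp}(t)\|\to\rho'>0$. By compactness, pick a limit point $\mathbf{v}\ne\mathbf{0}$ of $\mathbf{w}_{F^\perp}(t)$ together with a window of consecutive steps in which every operator of $E$ is applied at least once. Along this window, and passing to a subsequence on which the ordered word of operators is fixed, the product $\mathbf{v}T_{\mathbf{x}_{1}}\cdots T_{\mathbf{x}_{k}}$ still has norm $\rho'$. Because each factor is a paracontraction, this forces $\mathbf{v}$ to be fixed by every $T_{\mathbf{x}}$ with $\mathbf{x}\in E$, so $\mathbf{v}\in F\cap F^\perp=\{\mathbf{0}\}$, a contradiction.

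The delicate point is that the windows containing every operator of $E$ could have unbounded length, so the relevant word need not come from a finite list. I would handle this by induction on $|E|$, as in the original proof: restrict to maximal stretches that use a proper subset of $E$, apply the inductive hypothesis on those stretches, and use the fact that the increments tend to zero to glue the stretches together. Combining $\mathbf{w}_F$ eventually constant with $\mathbf{w}_{F^\perp}(t)\to\mathbf{0}$ gives convergence of $\mathbf{w}(t)$, and hence of $\mathbf{P}(t)$.
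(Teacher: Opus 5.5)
Your Steps 1--2 are correct: the norm identity, the paracontraction property of $T_{\mathbf q}$, and the splitting $\mathbf{w}=\mathbf{w}_F+\mathbf{w}_{F^\perp}$ with $\mathbf{w}_F$ frozen after $T_0$ all hold. The gap is Step 3, which carries the entire content of the theorem and is not actually proved. Your compactness argument only works if the windows in which every operator of $E$ occurs have bounded length along the chosen subsequence, and, as you concede, they need not.

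The proposed repair does not close this gap. An inductive hypothesis about \emph{infinite} products over a proper subset of $E$ gives no uniform control over \emph{finite} stretches of unbounded length. Nor can $\|\mathbf{w}(t+1)-\mathbf{w}(t)\|\to 0$ glue the stretches together: a bounded sequence with vanishing increments can still drift and have many limit points. You note this yourself when you observe that square-summability of $y_t$ does not give summable increments. As written, convergence is asserted rather than derived. Citing the Elsner--Koltracht--Neumann theorem as a black box, with common fixed point $\mathbf 0$, would be legitimate, but then your sketch is beside the point.

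The idea you are missing is the one the paper uses, and it avoids windows and induction entirely. Fix a limit point first, then look only at the \emph{first} subsequent application of a map that does not fix it.

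Concretely, let $\mathbf{w}_*$ be a limit point of $\mathbf{w}(t)$ (or, in your setup, let $\mathbf{v}$ be a limit point of $\mathbf{w}_{F^\perp}(t)$). Define $\mathcal{N}=\{\mathbf q\in E:\mathbf{w}_*\mathbf q\neq 0\}$, $a=\min_{\mathcal N}|\mathbf{w}_*\mathbf q|$, $\kappa=\eta(1-\eta d)a^2/2$, and $L=\lim\|\mathbf{w}(t)\|^2$. Choose $t_k\ge T_0$ with $\|\mathbf{w}(t_k)-\mathbf{w}_*\|<a/(2\sqrt{2d})$ and $\|\mathbf{w}(t_k)\|^2<L+\kappa$. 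Let $s\ge t_k$ be the first update time with $\mathbf q_s\in\mathcal N$.

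Every map applied between $t_k$ and $s$ fixes $\mathbf{w}_*$, so $\mathbf{w}T_{\mathbf q}-\mathbf{w}_*=(\mathbf{w}-\mathbf{w}_*)T_{\mathbf q}$. Nonexpansiveness therefore keeps $\mathbf{w}(s)$ within $a/(2\sqrt{2d})$ of $\mathbf{w}_*$, which gives $|\mathbf{w}(s)\mathbf q_s|>a/2$. Your norm identity then yields $\|\mathbf{w}(s+1)\|^2<\|\mathbf{w}(t_k)\|^2-\kappa<L$, contradicting the fact that $\|\mathbf{w}(t)\|^2$ decreases monotonically to $L$.

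Hence $\mathcal N=\emptyset$: every map applied infinitely often fixes $\mathbf{w}_*$. So $\|\mathbf{w}(t)-\mathbf{w}_*\|$ is nonincreasing for $t\ge T_0$, and since it tends to $0$ along the subsequence, it tends to $0$ outright. In your notation the same argument immediately gives $\mathbf v\in F\cap F^\perp=\{\mathbf 0\}$, so the $F/F^\perp$ decomposition becomes optional.
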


\begin{proof}
Let $\mathbf{w}_t = \mathbf{\bar{1}} + \mathbf{P}(t)$ and $y_t = \mathbf{w}_t\mathbf{q}_t$. Then for an update at time $t$, we have
\begin{align*}
    \|\mathbf{w}_{t+1}\|^2 & = \| \mathbf{w}_t - \eta y_t\mathbf{q}_t^\top\|^2\\
    & = \|\mathbf{w}_t\|^2 -2\eta y_t\mathbf{w}_t\mathbf{q}_t + 2\eta^2y_t^2d \\
    & = \|\mathbf{w}_t\|^2 - 2\eta y_t^2+2\eta^2y_t^2d \\
    & = \|\mathbf{w}_t\|^2 - 2\eta y_t^2(1-\eta d) \, .
\end{align*}
Since $\eta < 1/d$, the sequence of norms $\|\mathbf{w}_t\|^2$ is monotone decreasing and bounded below, hence the convergence of norms. Consequently, each dimension is a bounded sequence, so we can apply Bolzano-Weierstrass to each dimension to obtain a subsequence of weights $\{\mathbf{w}'_*\}$ that converges to some $\mathbf{w}_*$. We show $\mathbf{w}_*$ is the limit of the entire sequence.

For each trial vector $\mathbf{q}$, define the update matrix $\mathbf{M}_{\mathbf{q}} = \mathbf{I}_{2d}-\eta\mathbf{q}\mathbf{q}^\top$.
On an incorrect trial with input $\mathbf{q}$, we have
$\mathbf{w}_{t+1}=\mathbf{w}_t\mathbf{M}_{\mathbf{q}}$.
Since $\|\mathbf{q}\|^2=2d$, for any row vector $\mathbf{v}$,
\begin{equation}\label{eq:nonexpansivemap}
\|\mathbf{v}\mathbf{M}_{\mathbf{q}}\|^2
=
\|\mathbf{v}\|^2
-2\eta(1-\eta d)|\mathbf{v}\mathbf{q}|^2
\leq \|\mathbf{v}\|^2.
\end{equation}
Thus every update map is nonexpansive.

Let $\mathcal{R}$ be the set of trial vectors whose corresponding update maps are applied infinitely often. If $\mathcal{R}$ is empty, then there are only finitely many updates since there are only finitely many trial vectors, and the weights are eventually constant. Otherwise, since there are finitely many update types, there exists a time $T_0$ after which every applied update map belongs to $\mathcal{R}$.

Now, suppose, towards a contradiction, that some update
map in $\mathcal{R}$ does not fix $\mathbf{w}_*$.
Define $\mathcal{N} = \{\mathbf{q}\in\mathcal{R}:\mathbf{w}_*\mathbf{q}\neq0\}$. By assumption, $\mathcal{N}$ is nonempty. Since it is finite (a subset of possible trial vectors) and every value in the following minimum is positive, we have $a=\min_{\mathbf{q}\in\mathcal{N}}|\mathbf{w}_*\mathbf{q}|>0$. For any $\mathbf{q}\in\mathcal{N}$, the reverse triangle inequality gives
\[
\begin{aligned}
|\mathbf{w}\mathbf{q}|
&\geq |\mathbf{w}_*\mathbf{q}|
 -|(\mathbf{w}-\mathbf{w}_*)\mathbf{q}|\\
&\geq a-\sqrt{2d}\|\mathbf{w}-\mathbf{w}_*\|\,
\end{aligned}
\]
where we apply the $L^2$ operator norm inequality 
\begin{equation*}
\|\mathbf{M}\mathbf{x}\| \leq \|\mathbf{M}\|_{\text{op}}\|\mathbf{x}\| \leq \|\mathbf{M}\|_F\|\mathbf{x}\|\, .
\end{equation*}
Therefore, whenever $\|\mathbf{w}-\mathbf{w}_*\| < \frac{a}{2\sqrt{2d}}$, an update associated with any $\mathbf{q}\in\mathcal{N}$ satisfies $|\mathbf{w}\mathbf{q}|>a/2$ and, by Eq. \ref{eq:nonexpansivemap}, decreases the squared norm by more than
\[
\kappa
:=
\frac{\eta(1-\eta d)a^2}{2}>0.
\]

Recall that $\|\mathbf{w}_t\|^2$ converges, hence has some limit $L$, and there is a subsequence $\{\mathbf{w}_{t_k}\}$ that conveges to $\mathbf{w}_*$.
We may choose $t_k\geq T_0$ sufficiently large that $\|\mathbf{w}_{t_k} \mathbf{w}_*\| < \frac{a}{2\sqrt{2d}}$ and $\|\mathbf{w}_{t_k}\|^2-L<\kappa$.

Let $s\geq t_k$ be the first subsequent time at which an update associated with some
$\mathbf{q}\in\mathcal{N}$ is applied. Such a time exists because these update types occur infinitely often. Every update between $t_k$ and $s$ fixes $\mathbf{w}_*$, since its trial vector belongs to $\mathcal{R}\setminus\mathcal{N}$. By nonexpansiveness, such updates cannot increase the distance to $\mathbf{w}_*$. Hence
\[
\|\mathbf{w}_s-\mathbf{w}_*\|
\leq
\|\mathbf{w}_{t_k}-\mathbf{w}_*\|
<\frac{a}{2\sqrt{2d}}.
\]
The update at time $s$ therefore decreases the
squared norm by more than $\kappa$:
\[
\begin{aligned}
\|\mathbf{w}_{s+1}\|^2
&<\|\mathbf{w}_s\|^2-\kappa\\
&\leq\|\mathbf{w}_{t_k}\|^2-\kappa\\
&<L,
\end{aligned}
\]
where we use the fact that $\|\mathbf{w}_t\|$ is nonincreasing. This contradicts the fact that
$\|\mathbf{w}_t\|^2$ is nonincreasing and converges
to $L$, so every term must be at least $L$.

Therefore $\mathcal{N}$ is empty: every update map
that occurs infinitely often fixes $\mathbf{w}_*$.
For $t\geq T_0$, nonexpansiveness consequently gives
\[
\begin{aligned}
\|\mathbf{w}_{t+1}-\mathbf{w}_*\|
&=
\|(\mathbf{w}_t-\mathbf{w}_*)
  \mathbf{M}_{\mathbf{q}_t}\|\\
&\leq\|\mathbf{w}_t-\mathbf{w}_*\|
\end{aligned}
\]
on update steps, with equality on steps without
an update. Thus the sequence $\|\mathbf{w}_t-\mathbf{w}_*\|$ is nonincreasing for $t\geq T_0$. Since its subsequence $\|\mathbf{w}_{t_k}-\mathbf{w}_*\|$ converges to zero, the entire sequence converges to zero. Hence \[\lim_{t\to\infty}\mathbf{w}_t=\mathbf{w}_*.\] Finally, since $\mathbf{P}(t)=\mathbf{w}_t-\mathbf{\bar{1}}$, the plastic weights also converge.

\end{proof}


\end{document}